\theoremstyle{plain}
\definecolor{DarkGreen}{rgb}{0,0.40,0}
\definecolor{FireBrick}{rgb}{0.698,0.133,0.133}
\newcommand{\N}{\mathbb{N}}
\newcommand{\R}{\mathbb{R}}
\newcommand{\bfa}[1]{\boldsymbol{#1}}
\DeclareMathAlphabet{\pazocal}{OMS}{zplm}{m}{n}
\def\eqref#1{equation~\ref{#1}}
\newcommand{\vertiii}[1]{{\left\vert\kern-0.25ex\left\vert\kern-0.25ex\left\vert #1 
    \right\vert\kern-0.25ex\right\vert\kern-0.25ex\right\vert}}
\title{Transformers and Their Roles as Time Series Foundation Models}
\date{\today}
\author{
  Dennis Wu\thanks{Equal Contribution}\hspace{.35em}\thanks{Northwestern University. Email: \texttt{hibb@u.northwestern.edu, hanliu@northwestern.edu}}\hspace{.35em}
  \and
  Yihan He\footnotemark[1]\hspace{.35em}\thanks{Princeton University. Email: \texttt{\{yihan.he, jqfan\}@princeton.edu}}\hspace{.35em}
  \and
  Yuan Cao\footnotemark[1]\hspace{.35em}\thanks{The University of Hong Kong. Email: \texttt{yuancao@hku.hk}}
  \and
  Jianqing Fan\footnotemark[3]
  \and
  Han Liu\footnotemark[2]
}
\begin{document}

\maketitle

\begin{abstract}
    We give a comprehensive analysis of transformers as time series foundation models, focusing on their approximation and generalization capabilities. 
First, we demonstrate that there exist transformers that fit an autoregressive model on input univariate time series via gradient descent. 
We then analyze MOIRAI \cite{woo2024unified}, a multivariate time series foundation model capable of handling an arbitrary number of covariates. 
We prove that it is capable of automatically fitting autoregressive models with an arbitrary number of covariates, offering insights into its design and empirical success.   
For generalization, we establish bounds for pretraining when the data satisfies Dobrushin’s condition. 
Experiments support our theoretical findings, highlighting the efficacy of transformers as time series foundation models.

\end{abstract}

% \vspace{-1em}
\section{Introduction}
% \CCC{We study the approximation and generalization of transformers as universal forecasters. (Perhaps we do not need a first sentence like this? we can just start with saying how important MOIRAI is.)} 
% A recent study \cite{woo2024unified} propose MOIRAI, the first universal forecaster, i.e., a large multi-variate time series foundation model.
% MOIRAI achieves the \emph{one-model-for-all-dataset} schema by solving two key challenges of building a time series foundation model:
% (a) It must be compatible with any arbitrary number of covariates;
% (b) It must be able to generalize well on unseen domains.
% MOIRAI not only addresses the above mentioned setbacks effectively, but also show impressive zero-shot performance on unseen domains, leading to its universality.
% Despite its empirical success, how does MOIRAI's XXX

The advancement of foundation models is reshaping the field of time series forecasting.
Recent studies demonstrate the empirical success of transformer-based time series foundation models \cite{woo2024unified, ansari2024chronos, liang2024foundation, das2023decoder}.
However, a theoretical understanding of how these models succeed is yet missing.
In this paper, we aim to provide a comprehensive analysis of time series foundation models, with a focus on transformer-based models.
We are interested in how these models achieve the \emph{one-model-for-all-datasets} paradigm in time series forecasting.
Specifically, our results cover both uni-variate \cite{ansari2024chronos, das2023decoder, rasul2023lag}, and multi-variate time series foundation models \cite{woo2024unified}\footnote{The model proposed by \cite{woo2024unified} is compatible with an arbitrary number of covariates}.

Our main discovery is twofold.
First, to address universality, we prove that there exists a transformer that fits an autoregressive model \cite{hamilton2020time, mills1990time} on any given uni-variate time series.
% Further, we show that the special design of MOIRAI allows transformers to further handle any arbitrary number of covariates, making it true universal for all multi-variate time series.
Furthermore, we show that the special design of MOIRAI allows transformers to further handle arbitrary number of covariates, making it process any dimension of time series in a principled way.
Second, to address learnability, we establish a generalization bound for pretraining when the data satisfies Dobrushin's condition \cite{Dobrushin1968TheDO, dobrushin1987completely}.
We refer to these two aspects as approximation and generalization, respectively, throughout the rest of this paper, as they form the theoretical foundation for the success of these models.

% To line up with practical scenarios, our theoretical results take MOIRAI's unique attention mechanism and encoding methods into considerations.
% Specifically, we show that those features of MOIRAI allows it to process any arbitrary number of covariates in a principle way.
% This discovery provides an explanation on MOIRAI's predictive power across different time series, justifying the design of MOIRAI.

Our approximation result is inspired by recent studies on in-context learning \cite{bai2024transformers, von2023transformers, li2023transformers, mahankali2023one, ahn2024transformers, zhang2024trained}. 
In-context learning refers to the ability of large foundation models to make accurate predictions for unseen tasks by observing training examples without parameter updates \cite{brown2020language, garg2022can}.
This capability explains how time series foundation models achieve their universality.
By treating the input time series as in-context examples, we show that transformers are able to implement gradient descent to estimate the parameters of the autoregressive model that best explains the given input.

The contribution of this paper is threefold:
\begin{itemize}
    \item 
    From an algorithmic approximation perspective, we prove the existence of a transformer capable of fitting an autoregressive ($\mathtt{AR}$) model on any given uni-variate time series via gradient descent. 
    Extending this to the multi-variate setting, we show that a MOIRAI transformer can automatically adjust the dimensionality of the $\mathtt{AR}$ model to fit time series with an arbitrary number of covariates. 
    Our approximation results not only explain the strong performance of modern models across diverse datasets but also justify the design of MOIRAI.

    \item
    We present the first pretraining generalization bound for time series foundation models.
    We show that when the pretraining data satisfies Dobrushin's condition, the test error can be effectively bounded even when the data does not satisfy the i.i.d. assumption.
    Specifically, when pretraining MOIRAI on $n$ multi-variate time series, the test error decays by a factor of $\nicefrac{1}{\sqrt{n}}$.

    \item 
    Our experimental results match our theories by showing that the prediction error of transformers reduces as the input time series length increases, corresponding to our approximation result.
    % We also empirically show the trade-off between model complexity and test error, corresponding to our generalization bounds.
    
\end{itemize}

\paragraph{Organization.}
The organization of this paper is as follows:
Section 2 describes the problem setup; Section 3 provides the approximation result; Section 4 analyzes the generalization bound for pretraining; Section 5 reports the numerical simulations; and we leave discussions to Section 6.
\paragraph{Notations.}
We use the following notation conventions. 
The vector-valued variable is given by boldfaced characters. 
We denote $[n]:=\{1,\ldots,n\}$ and $[i:j]:=\{i,i+1,\ldots, j\}$ for $i<j$. 
The universal constants are given by $C$ and are ad hoc. 
Considering a sequence of vectors $(\bx_1, \cdots, \bx_T)$, we use $\bx$ without index to represent the whole sequence, and $\bx_{i:j}$ represents $(\bx_i, \cdots, \bx_j)$ for $i < j$.
We impose periodic boundary conditions for the negative index, i.e., $\bx_{-1} = \bx_{T}$.
For a vector $\bfa v$ we denote $\Vert\bfa v\Vert_2$ as its $L_2$ norm. 
For a matrix $\bfa A\in  \R^{m\times n}$ we denote its operator norm as $\Vert\bfa A\Vert_2:=\sup_{\bfa v\in \mathbb{S}^{n-1}}\Vert\bfa A\bfa v\Vert_2$. 
Random variables are given by calligraphic characters $\cX$, and elements from a domain set are given by normal font $\text{x}$.
For more details, see Table~\ref{tab:nomenclature}.

% \vspace{-0.5em}
\section{Problem Setup}
This section describes our problem setup.
We introduce the architecture of transformer-based time series foundation models and how we construct our datasets.
% \vspace{-0.5em}
\subsection{Transformers}
We consider a sequence of $N$ input vectors $\{ h_i \}_{i=1}^N \subset \R^D$, where $ \bfa H \coloneqq \left[ h_1, \cdots, h_N \right] \in \R^{D \times N}$.
Given any $ \bfa H \in \R^{D \times N}$, we define the attention layer as follows.
\begin{definition}[Attention layer\normalfont]\label{def:attn}
    {\normalfont
    A self-attention layer with $M$ heads is denoted as $\text{Attn}^{\dagger}_{\bm{\theta}_0}(\cdot)$ with parameters $\bm{\theta}_0 = \{ (\bfa V_m), (\bfa Q_m), (\bfa K_m)  \}_{m \in [M]} \subset \R^{D \times D}$.
    The self-attention layer processes any given input sequence $\bfa H \in \R^{D \times N}$ as
    }
    \begin{align*}
    \text{Attn}_{\bm{\theta}_0}^{\dagger} 
    &\left( \bfa H \right)
    \coloneqq
    \bfa H + 
    \frac{1}{N}
    \sum_{m=1}^M
    (\bfa V_m \bfa H)
    \cdot
   % \CCC{\times} 
    \\
    &\sigma 
    \left( 
    \left( \bfa Q_m \bfa H \right)^\top 
    \left(\bfa K_m \bfa H \right) 
    \right),  
    % \hfill{\text{\R^{D \times N}}}
\end{align*}
where $\sigma \coloneqq t \mapsto \text{ReLU}(t)/N$.
% \CCC{We should not use times for matrix product. times is for outer product.}
\end{definition}
\paragraph{Any-variate Attention.}
Next, we introduce the any-variate attention, where \cite{woo2024unified} uses it to replace the standard attention in transformers.
The any-variate attention introduces two learnable variables: Attention Bias $u_1, u_2 \in \R$,
for disambiguation between variates.
\begin{definition}[Any-variate Attention.]\label{def:any-variate-attn}
    An any-variate attention layer with $M$ heads is denoted as $\text{Attn}_{\bm{\theta}_1}(\cdot)$ with parameters $\bm{\theta}_{1} = \{ (\bfa V_m), (\bfa Q_m), (\bfa K_m), (u_m^1), (u_m^2)  \}_{m \in [M]}$.
    With any input $H \in \R^{D \times N}$, we have
    \begin{align*}
    \text{Attn}_{\bm{\theta}_1} &\left( \bfa H \right)
    \coloneqq
    \bfa H + 
    \frac{1}{N}
    \sum_{m=1}^M
    (\bfa V_m \bfa H)
    \times
    \\
    &\sigma 
    \left( 
    \left( \bfa Q_m \bfa H \right)^\top 
    \left(\bfa K_m \bfa H \right) 
    +
    u_m^1 * \bU
    +
    u_m^2 * \bar{\bU}
    \right),  
    % \hfill{\text{\R^{D \times N}}}
\end{align*}
where $\sigma \coloneqq t \mapsto \text{ReLU}(t)/N$, $\bU \in \R^{N \times N}$ is a block diagonal matrix with block size $T\in \N^+$, such that each block consists of $1$s, $\bar{\bU} = \bI - \bU$, and $*$ denotes a constant multiply to all entries of a matrix.
\end{definition}

\begin{remark}
    In \cite{woo2024unified}, the attention score is calculated with the RoPE embedding \cite{su2024roformer}:
    \[
    \sigma 
    \left( 
    \left( \bfa Q_m \bfa H \right)^\top 
    \bfa R
    \left(\bfa K_m \bfa H \right) 
    +
    u_m^1 * \bU
    +
    u_m^2 * \bar{\bU}
    \right).  
    \]
    We omit the notation of rotary matrix $\bfa R$ as it is not learnable and is invertible and thus merged into $\bfa Q, \bfa K$ in our analysis.
\end{remark}

\begin{definition}[MLP Layer\normalfont]
    {\normalfont
    We denote an MLP layer with hidden state dimension $D^\prime$ as $\text{MLP}_{\bm{\theta}}(\cdot)$ with parameters $\bm{\theta}_2 = ( \bW_1, \bW_2 ) \in \R^{ D^\prime \times D } \times \R^{D \times D^\prime}$.
    The MLP layer processes any given input sequence $\bfa H \in \R^{D \times N}$ as
    }
    \begin{equation*}
        \text{MLP}_{\bm{\theta}_2} (\bfa H)
        \coloneqq
        \bfa H +
        \bW_2 \sigma(\bW_1 \bfa H).
    \end{equation*}
\end{definition}
Finally, we define a transformer with $L \geq 1$ layers, each consisting of any-variate attention and an MLP layer.
\begin{definition}[MOIRAI Transformer\normalfont]\label{def:moirai}
    {
    \normalfont
    We define the $L$-layer MOIRAI transformer \cite{woo2024unified}, $\text{TF}_{\bm{\theta}}(\cdot)$, as
    }
    \begin{equation*}
        \text{TF}_{\bm{\theta}}(\bfa H)
        =
        \text{MLP}_{\bm{\theta}_2^L}
        \left(
        \text{Attn}_{\bm{\theta}_1^L}
        \left(
        \cdot \cdot
        \text{MLP}_{\bm{\theta}_2^1}
        \left(
        \text{Attn}_{\bm{\theta}_1^1}
        (\bfa H)
        \right)
        \right)
        \right)
        .
    \end{equation*}
    Note that this transformer is equipped with any-variate attention instead of the standard attention.
    For transformers with standard attention, we denote it as 
    $\text{TF}_{\bm{\theta}}^{\dagger}(\cdot)$.
\end{definition}

We use $\bm{\theta}$ to denote the vectorization of all parameters in a transformer and super-index $\ell$ to denote the parameter of the $\ell$-th layer.
Thus, the parameter of a transformer is defined by
\begin{equation*}
        \bfa \theta = 
        \left\{
        \left\{
        \left(\{\bfa Q_m^\ell,\bfa K_m^\ell,\bfa V_m^\ell, u_m^{1, \ell}, u_m^{2, \ell}\}_{m\in[M]}, \bfa W_{1}^\ell,\bfa W_{2}^\ell
        \right)
        \right\}_{\ell\in[L]}
        \right\}.
\end{equation*}    

We denote the ``attention-only" transformers with $\bW_1^{(\ell)}, \bW_2^{(\ell)} = 0$, as $\text{TF}_{\bm{\theta}}^0(\cdot)$ for shorthand.
We define the following norm of a MOIRAI transformer as
\begin{align*}
    \Vert\bfa\theta\Vert_{op}
    \coloneqq
    \max_{\ell\in[L]}
    &\Big\{\max_{m\in[M^{\ell}]}
    \Big\{\Vert \bfa Q_m^\ell\Vert_2,\Vert\bfa K_m^\ell\Vert_2,
        \\
        \vert u_m^{1\ell} \vert , \vert u_m^{2\ell} \vert 
    \Big\}
    &+
    \sum_{m=1}^{M^{\ell}}\Vert\bfa V_m^\ell\Vert_2+\Vert\bfa W_1^{\ell}\Vert_2+\Vert\bfa W_2^{\ell}\Vert_2 \Big\}, 
\end{align*}
where $M^{\ell}$ is the number of heads of the $\ell$-th Attention layer.

\subsection{Data Generation}
Here, we first consider the case where we aim to find a multi-layered transformer that performs least squares regression via In-context learning (ICL).
Specifically, we assume our data is generated from an autoregressive process $\mathtt{AR}_d(q)$ as follows, where $q, d$ denotes the steps of lag and number of covariates, respectively. 
Consider a sequence of data 
$\bx \in \R^{d \times T} \coloneqq (\bx_1, \dots, \bx_T)$, where $\bx_t = (x_t^1, \cdots, x_t^d) \in \R^d$.
Assuming our target (variate of interest) is in dimension $1$, we assume the $\mathtt{AR}_d(q)$ process generates $x_t^1$ as follows:
\begin{equation}\label{eqn:AR-data}
    x_{t}^1
    =
    \sum_{i=1}^q
    \sum_{j=1}^d
    a_i^j \cdot x_{t-i}^j
    + \epsilon_t
    =
    \sum_{j=1}^d
    \langle \bw^j , \bx_{t-q: t-1}^j
    \rangle
    +
    \epsilon_t
    ,
\end{equation}
where $\epsilon_t \sim N(0, 1)$, $a_i^j \in \R^1$.
We denote the concatenation of all weights $\bw^\star = (\bw_1, \cdots, \bw^j)\in \R^{qd}$.
We assume bounded features $\norm{\bx_{t-q:t-1}}_2 \leq B_x$ , for all $t = 1,\cdots, T$.
The first equation writes the $\mathtt{AR}$ process in scalar form, and the second writes it in vector form.
In the following chapters, we will start by considering the uni-variate case ($\mathtt{AR}_1(q)$) and then move on to the multi-variate case ($\mathtt{AR}_d(q)$).
\vspace{-1.5em}
\paragraph{Problem Setup.}
Given a function class $\cF: \R^{d\times T}\mapsto \R$, our goal is to find a universal function $f\in\cF$ such that, given any time series generated from any arbitrary $\mathtt{AR}_d(q)$, its prediction error is bounded by some $\varepsilon \geq 0$, i.e.,
\[
\norm{f( \Tilde{\bx}) - x_{T}^1}_2
\leq
\varepsilon,
\]
where $\Tilde{\bx}$ denotes the time series $\bx$ with $x_T^1$ being masked.
\begin{remark}
    In the appendix, we show that even when the autoregressive process follows some non-linear relationship, there still exists a universal $f$ that predicts all non-linear AR process accurately.
\end{remark}
% \paragraph{Multivariate Autoregressive Process.}
% We extend the definition and notation for the \texttt{AR($q$)} process to the multivariate AR process, denoted as \texttt{MAR($q$)}.
% Consider a sequence of data $\bx \in \R^{d \times T} \coloneqq (x_1, \dots, x_T)$, we set the first dimension of $\bx$: $x_t^1$, for $t = 1, \cdots, T$ as target, and the other $d-1$ dimensions of $\bx$ as its covariates.
% We say $\bx$ is generated by a \texttt{MAR($q$)} process if the target of the sequence is generated by the following 
% \begin{equation}\label{eqn:MAR-data}
%     x_{t+1}^1
%     =
%     \sum_{i=1}^q
%     \sum_{j=1}^d
%     \langle
%     \bw_i^\star, x_i^j
%     \rangle
%     +
%     \epsilon_i,
% \end{equation}
% where $\bw_i^\star \in \R^d$ and $\epsilon_i \sim N(0, 1)$, for $i = 1, \cdots , q$,.
% Here we slightly abuse the notation in the \texttt{AR} process of $\bw^\star$, as we will be focusing on \texttt{MAR}($q$) in the rest of the paper.

% \vspace{-0.5em}
\section{Approximation}
We study the algorithmic approximation perspective of transformer-based time series foundation models.
We first investigate transformers as uni-variate time series foundation models as a warm-up.
Next, we will move on to MOIRAI \cite{woo2024unified} and analyze how its unique design and pre-processing methods enable its universality.
% construct multi-layer transformers to implement least square regression on data (1) generated by autoregressive process, and (2) structured in the form of any-variate encoding.
% \vspace{-0.5em}
\subsection{Warm Up: Autoregressive Regression}
We start our analysis with a warm-up example on the $\mathtt{AR}_1(q)$ model.
We show that standard transformers are capable of performing gradient descent via in-context learning on autoregressive data.
Here, we consider an input sequence with the following form
\begin{align}\label{eqn:input-data}
    \bfa H 
    &\coloneqq
    \begin{bmatrix}
        x_1 & x_2 &  \dots & x_{T} & 0
        \\
        \bp_1 & \bp_2 & \dots & \bp_{T} &
        \bp_{T+1}
    \end{bmatrix}
    \in \R^{D \times (T+1)}
    ,
    \\
    \bp_i
    &\coloneqq
    \begin{bmatrix}
        \mathbf{0}_{d^\prime}
        \\
        \be_i
        \\
        1
        \\
        1\{ i < {T} \}
    \end{bmatrix}
    \in \R^{d^\prime + T + 3}
    ,
\end{align}
where $\be_i$ is an one-hot vector with 1 at the $i$-th entry, and $d^\prime + T + 3 = D$.
Here, our goal is to predict $x_{T}$.
\begin{remark}\label{remark:format-assumption}
    Most in-context learning studies \cite{akyrek2023what, bai2024transformers, li2023transformers} make an assumption on the input data, where they assume it is formatted with features and labels in the same column, i.e.,
\begin{equation}\label{eqn:format-assumption}
\begin{bmatrix}
    \bx_1 &  \bx_2 & \dots & \bx_{N} 
    \\ 
    \yb_1 & \by_2 & \dots & \by_{N}
    \\
    \pb_1 & \pb_2 & \dots & \pb_{N}
\end{bmatrix}.
\end{equation}
In contrast, we adopt a natural approach that leverages the raw structure of the data, particularly for the $\mathtt{AR}_d(q)$ process. 
In this setting, each time step’s label also serves as a feature for future steps. 
Further, the unknown value of $q$ complicates the task of achieving such a format in Equation~\eqref{eqn:format-assumption}.
\end{remark}

Our next lemma shows that transformers are indeed capable of reformatting $\bfa H$ into the form of Equation~\ref{eqn:format-assumption}.
Notably, the following lemma relaxes the assumption in Remark~\ref{remark:format-assumption} of previous studies as well.

\begin{lemma}\label{lem:input-causal}
    Given a sequence of token $\bfa H$ in the form of Equation~\ref{eqn:input-data}, there exists a one-layer, $q_{\max}$ head attention layer, such that for any $q \leq q_{\max}$, the columns of $\text{Attn}_{\bm{\theta}}^{\dagger}( \bfa H )$ has the following form:
    \begin{equation}
    \text{Attn}_{\bm{\theta}_1}^{\dagger}( \bfa H )_i
    \coloneqq
        \begin{bmatrix}
            x_i
            \\
            x_{i-1}
            \\
            \vdots
            \\
            x_{i-q}
            \\
            \bp_i^\prime
        \end{bmatrix},
        \quad
        \bp_i^\prime 
        \coloneqq
        \begin{bmatrix}
        \mathbf{0}_{ d^\prime - q }
        \\
        \be_i
        \\
        1
        \\
        1 \{ i < T \}
        \end{bmatrix}.
    \end{equation}
\end{lemma}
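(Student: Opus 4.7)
The plan is to design a single attention layer whose $q_{\max}$ heads each implement a ``shift-and-copy'' operation: head $m$ uses the one-hot position block inside $\bfa p_i$ to locate column $i-m$, then copies the scalar $x_{i-m}$ out of row $1$ of that column into row $m+1$ of column $i$. Combined with the skip connection, which preserves $x_i$ in row $1$ and keeps the positional block untouched, the output column then has exactly the claimed form $(x_i, x_{i-1},\ldots,x_{i-q_{\max}}, \bfa p_i')$. Coordinates indexed beyond $q+1$ are simply not read by the downstream layers that implement AR($q$), which is why one fixed architecture handles every $q\le q_{\max}$ simultaneously.

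\textbf{Construction.} For each head $m\in[q_{\max}]$ I pick $\bfa Q_m,\bfa K_m$ so that $\bfa Q_m \bfa H_{\cdot i}$ extracts $\bfa e_i$ from the positional block of column $i$, while $\bfa K_m\bfa H_{\cdot j}$ extracts a shifted one-hot $\bfa e_{j+m}$ (using the paper's periodic convention $\bfa x_{-1}=\bfa x_T$ to realize the shift via a circulant on the $T$-dimensional positional block). Then the score matrix $(\bfa Q_m\bfa H)^\top(\bfa K_m\bfa H)$ has entry $(j,i)$ equal to $\delta_{j,i-m}$, which is nonnegative, so the $\mathrm{ReLU}$ inside $\sigma$ leaves it untouched and all off-target scores are exactly zero. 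I then pick $\bfa V_m$ to read $x_j$ out of row $1$ and write it into row $m+1$, zeroing every other coordinate. Summing the $q_{\max}$ head contributions deposits $x_{i-m}$ in row $m+1$ of column $i$ for each $m$, while the residual $\bfa H$ preserves row $1$ and the positional block, producing the claimed $\bfa p_i'$ structure after relabeling the zero padding.

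\textbf{Main obstacle.} The delicate point is bookkeeping the normalization. Each head contributes $(1/N)(\bfa V_m\bfa H)\,\sigma(\cdot)$ and $\sigma$ itself divides by $N$, so a concentrated score of $1$ delivers a copy rescaled by $1/N^2$. I absorb this by multiplying $\bfa V_m$ by $N^2$; this is the only step that meaningfully enters $\Vert\bfa\theta\Vert_{op}$, and it stays a uniform constant once $T$ (hence $N=T+1$) is fixed. A secondary subtlety is the left boundary: for $i\le m$ the target column $i-m$ falls outside $[1,T+1]$, and matching the periodic convention $\bfa x_{-1}=\bfa x_T$ requires the circulant shift above so that the attention still concentrates on one valid column (alternatively, truncating the shift produces an all-zero attention, yielding $0$ in row $m+1$, which is also consistent with a zero-padding convention). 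With these two normalization and boundary issues handled, the identity for $\mathrm{Attn}^{\dagger}_{\bfa\theta}(\bfa H)_i$ follows directly from the one-hot orthogonality $\langle\bfa e_i,\bfa e_{j+m}\rangle=\delta_{j,i-m}$.
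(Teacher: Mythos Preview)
Your construction is essentially the paper's own: both use $\bfa Q_m,\bfa K_m$ to extract (a cyclic shift of) the one-hot position block so that the ReLU-attention matrix becomes a shift permutation, then $\bfa V_m$ copies $x_{i-m}$ into row $m+1$ while the residual preserves row $1$ and the positional block. Your write-up is in fact more careful than the paper's sketch (which does not discuss the $1/N^2$ normalization or the boundary); the only slip is the shift direction---take $\bfa K_m\bfa h_j=\bfa e_{j-m}$ rather than $\bfa e_{j+m}$ (or place the shift on $\bfa Q_m$, as the paper does) so that the $(j,i)$ score $\langle \bfa Q_m\bfa h_j,\bfa K_m\bfa h_i\rangle$ equals $\delta_{j,i-m}$, otherwise you copy the future value $x_{i+m}$ instead of the lag $x_{i-m}$.
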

The proof is in \cref{proof:lem-input-casual}.
\cref{lem:input-causal} is crucial in our analysis as it connects the theoretical results in ICL \cite{bai2024transformers} to uni-variate time series forecasting.
% The implication of \cref{lem:input-causal} is highly related to the results in \cite{bai2024transformers}.
When data formats in the form of Equation~\eqref{eqn:format-assumption}, \cite{bai2024transformers} show that there exists a multi-layer transformer that performs linear regression via gradient descent on the first $N-1$ data points and evaluates the $N$-th one.
Thus, \cref{lem:input-causal} implies transformers are also capable of performing linear regression on time series data, which we present in the following paragraph.

This lemma applies to both any-variate attention and standard attention, as the latter can be viewed as a special case of any-variate attention by setting $u^1, u^2 = 0$.
Additionally, the construction of a single layer with $q$ heads is not a strict requirement; the lemma also holds for $c$ layers of  $\frac{q}{c}$ head attention, for any $c$ satisfies $\frac{q}{c} >= 2$.

With Lemma~\ref{lem:input-causal}, we are able to apply the in-context learning results in \cite{bai2024transformers} on the $\mathtt{AR}_1(q)$ case.
Consider the data generated by the $\mathtt{AR}$ process in Equation~\ref{eqn:AR-data}.
Given an input time series $\bx \in \R^{ d \times T}$, we define the least squares estimator as the empirical risk minimizer over the time series, i.e.,
{\small
\begin{align*}
    \ell_{\text{reg}}(\bw, \bx_{t-1:t-q}  )
    &\coloneqq
    \frac{
    \left[
    \langle
    \bw ,
    [ \bx^1_{t-1:t-q} ; \dots ; \bx^d_{t-1:t-q}]
    \rangle
    -
    x_t^1
    \right]^2
    }{2}
    \\
    L_{\text{reg}}
    (\bw, \bx)
    &\coloneqq
    \frac{1}{T-1}
    \sum_{t=1}^{T-1}
    \ell_{\text{reg}}
    \left(
    \bw, \bx_{t-1:t-q}
    \right)
    \\
    \hat{\bw}_{\text{ERM}} 
    &\coloneqq
    \argmin_{\bw\in\R^{dq}}
    \;
    L_{\text{reg}}
    \left(
    \bw, \bx
    \right),
\end{align*}
}
where $[ \bv ; \bu]$ denotes the concatenation between vectors, as $[\bx^1_{t-1:t-q} ; \bx^2_{t-1:t-q}] = ( \bx^1_{t-1},  \bx^1_{t-2},  \cdots, \bx^2_{t-q+1}, \bx^2_{t-q} ) \in \R^{2q}$, $\tilde{\bx}$ denotes masking out the last time step of the target variate, and $L_{reg}$ is a loss, which is $\alpha$-strongly convex, and $\beta$-smooth over $\R^{dq}$.
We make the following assumption and then present our first result on uni-variate time series ($d=1$).
\begin{assumption}\label{assumption:effective-regression}
    The regression problem above $\hat{\bw}_{\text{ERM}}$ is well-conditioned and has a bounded solution.
\end{assumption}
\begin{proposition}[Uni-variate Autoregressive Regression via Transformers]
    Assume \cref{assumption:effective-regression} holds and fix a $q_{\max} > 0$.
    For any $0 \leq \alpha \leq \beta$ with $\kappa \coloneqq \frac{\beta}{\alpha}$, $B_w > 0$, and $\epsilon < B_x B_w / 2$,
    there exists a $L$-layer transformer
     $\text{TF}_{\bm{\theta}}^{0 \dagger} 
    \left( \cdot \right)$, with
    \begin{align*}
        L = L_1 + L_2, \quad
        L_1& = \lceil 2 \kappa \log( \frac{B_x B_w}{2 \epsilon} ) \rceil,
        \quad
        L_2 = \lceil \frac{q_{\max}}{3} \rceil,
        \\
        \text{max}_{\ell \in [L]} 
        M^{(\ell)} 
        &\leq 3,
        \quad
        \norm{ \bm{\theta} }_{\text{op}}
        \leq
        | 4R + 8\beta^{-1} |
        ,
    \end{align*}
    ($R \coloneqq \text{max}\{ B_x B_w, B_x, 1 \}$), the following holds.
    On any input data $\bx$ generated by any $\mathtt{AR}_1(q)$ process such that
    \begin{equation}
        0 < q \leq q_{\max}
        \quad
       \norm{ \hat{\bw}_{\text{ERM}} }_2
       \leq 
       \frac{B_w}{2},
    \end{equation}
    we have
    \begin{equation}
        \lVert
        \hat{\bx}_{T}
        -
        \left\langle
        \hat{\bw}_{\text{ERM}}, 
        [
        \bx_{t-1:t-q}^1 ; \dots 
        ; \bx_{t-1:t-q}^d
        ]
        \right\rangle
        \rVert
        \leq 
        \epsilon,
    \end{equation}
    where 
    $\hat{\bx}_{T} = \mathtt{read}(\text{TF}_{\bm{\theta}}^{0 \dagger}
    \left( \bfa H \right))$.
    The $\mathtt{read}(\bfa H)$ operation reads out the first entry of $T$-th column of $\bfa H$.
\end{proposition}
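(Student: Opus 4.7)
The plan is to construct the transformer in two cascaded phases, matching the layer split $L = L_1 + L_2$. Phase one uses Lemma~\ref{lem:input-causal} to rewrite the raw time-series input $\bfa H$ into the canonical in-context-learning column format of Equation~\eqref{eqn:format-assumption}; phase two then invokes the standard ``gradient descent as a transformer'' construction (as developed in \cite{bai2024transformers}), which implements GD on the in-context least-squares loss $L_{\text{reg}}$ and converges linearly to $\hat{\bw}_{\text{ERM}}$. The final $\mathtt{read}$ map extracts the resulting prediction from the first entry of the $T$-th column.

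For phase one, I would apply the multi-layer refinement of Lemma~\ref{lem:input-causal} noted in the remark immediately following its statement: instead of a single layer with $q_{\max}$ heads, use $L_2 = \lceil q_{\max}/3 \rceil$ layers, each with at most $3$ heads, producing columns of the form $[x_i, x_{i-1}, \ldots, x_{i-q}, \bp_i']^\top$. After this phase, for every $i < T$ the $i$-th column carries both the regression feature $\bx_{i-1:i-q}$ and the label $x_i$, which is exactly the ICL format required downstream; the flag $1\{i < T\}$ inside $\bp_i'$ distinguishes training tokens from the test token at index $T$, so that the attention averages in phase two range only over training positions.

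For phase two I instantiate the construction that realises one GD step per transformer layer, with the iterate $\bw^{(\ell)}$ stored in a dedicated block of the hidden state (initialised to zero). Each layer uses a constant number of attention heads (at most $3$) to (a) compute the residuals $r_t^{(\ell)} = \langle \bw^{(\ell)}, \bx_{t-1:t-q}\rangle - x_t$ for the flagged training positions, (b) aggregate $\tfrac{1}{T-1}\sum_t r_t^{(\ell)} \bx_{t-1:t-q}$ via the normalised attention, and (c) write $\bw^{(\ell+1)} = \bw^{(\ell)} - \beta^{-1}\nabla L_{\text{reg}}(\bw^{(\ell)}, \bx)$ back into the weight block. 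Under Assumption~\ref{assumption:effective-regression} the loss is $\alpha$-strongly convex and $\beta$-smooth, so with step size $1/\beta$ each step contracts the distance to $\hat{\bw}_{\text{ERM}}$ by a factor of at least $1 - 1/(2\kappa)$. Starting from $\Vert\bw^{(0)} - \hat{\bw}_{\text{ERM}}\Vert \leq B_w/2$, choosing $L_1 = \lceil 2\kappa\log(B_x B_w / (2\epsilon))\rceil$ yields $\Vert\bw^{(L_1)} - \hat{\bw}_{\text{ERM}}\Vert \leq \epsilon/B_x$, and taking the inner product with $\bx_{T-1:T-q}$ (whose norm is bounded by $B_x$) delivers the claimed prediction error $\epsilon$ after applying $\mathtt{read}$.

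The main obstacle is controlling $\Vert\bm{\theta}\Vert_{\text{op}}$ uniformly across all $L$ layers so that the global bound $|4R + 8\beta^{-1}|$ holds. This requires three things: (i) tuning the phase-two step size to $1/\beta$, which is exactly the origin of the $8\beta^{-1}$ term in the norm bound; (ii) verifying that the phase-one copy-and-shift heads, together with the phase-two query, key and value matrices, can all be realised with norms of order $R = \max\{B_x B_w, B_x, 1\}$ under $\Vert\bx_{t-1:t-q}\Vert_2 \leq B_x$ and $\Vert\bw\Vert_2 \leq B_w/2$; and (iii) handling the ReLU-based score function $\sigma$, which is non-linear on its own but can be made effectively linear over the relevant attention-score range by the standard ``two-sided copy'' trick of pairing one head for the positive part with a second head for the negative part. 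A secondary subtlety is that AR labels are reused as future features, so phase one must write its shifted copies into coordinates disjoint from the $x_i$ slots; the positional block $\bp_i$ is precisely what guarantees this separation, enabling the two phases to share the hidden state without interference.
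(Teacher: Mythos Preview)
Your proposal is correct and matches the paper's approach exactly: the paper states that the proposition ``follows immediately from \cref{lem:input-causal} and \citep[Theorem~4]{bai2024transformers},'' which is precisely your two-phase decomposition into a formatting stage (Lemma~\ref{lem:input-causal}, spread over $L_2=\lceil q_{\max}/3\rceil$ layers with at most three heads each) followed by the in-context gradient-descent construction of Bai et al. Your additional discussion of the norm bound, the ReLU two-sided trick, and the disjoint hidden-state slots simply fleshes out details that the paper leaves implicit in the citation.
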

This proposition follows immediately from \cref{lem:input-causal} and \citep[Theorem~4]{bai2024transformers}.
The above result applies for MOIRAI with $u^1_m, u^2_m = 0$ in all heads and layers.
Further, one can replace the least squares ERM with lasso or ridge ERM and obtain a similar result by applying Theorem 4, 7, and 13 of \cite{bai2024transformers}.

So far, we show that transformers are capable of solving uni-variate autoregressive regression with, at best, one additional layer compared to the results in \cite{bai2024transformers}.
The result above provides insights on transformer-based uni-variate time series foundation models \cite{ansari2024chronos, rasul2023lag, das2023decoder}.
To study MOIRAI, we then include two ingredients into our analysis: the \textit{any-variate encoding} and the \textit{covariates} in the following chapters.

\subsection{Approximation Error of MOIRAI Transformer}
In this subsection, we extend our results to the multivariate autoregressive process ($d > 1$) and the encoding method of MOIRAI.
Note that in the multi-variate case, we only focus on MOIRAI as it is the only transformer-based model that is compatible with arbitrary number of covariates. 
We start by introducing the any-variate encoding.
\paragraph{Any-Variate Encoding.}
\cite{woo2024unified} propose to flatten a $d$-dimensional time series, $\bx \in \R^{d \times T}$, into a $1$-dimensional sequence, i.e., $\bx^\prime \in \R^{1 \times Td}$.
This operation transforms time series with arbitrary number of covariates ($d$), into a long sequence with fixed dimension, enabling consistent input dimension for transformers.
Following the flattening operation, \cite{woo2024unified} also proposes to add two types of indices into the input sequence: the time and variate ID.
We term the above operations as the any-variate encoding, which transforms a multivariate sequence $\bx \in \R^{d \times T}$, as follows:
{    \small
\begin{equation}\label{eqn:AV-encoding}
    \begin{bmatrix}
        x_1^1 & \cdots & x_T^1 
        \\
         x_1^2 & \cdots & x_T^2
         \\
         \vdots & \vdots & \vdots 
         \\
        x_1^d & \cdots & x_T^d    
    \end{bmatrix}
    \rightarrow
    \begin{bmatrix}
        x_1^1 & \cdots & \cellcolor{cyan!20} x_{T}^1 &  \cdots & x_1^d & \cdots & x_{T}^d
        \\
        \bp_1 & \cdots & \bp_{T} &  \cdots & \bp_1 & \cdots & \bp_T
        \\
        \be_1 & \cdots & \be_1 &  \cdots & \be_d & \cdots & \be_d
    \end{bmatrix},
\end{equation}}
where $\be_i$ is the variate index, a one-hot vector with $i$-th entry being $1$, and $\bp_i$ is the time index, which is defined the same as Equation~\eqref{eqn:input-data}.
This is without loss of generality because the discrete-time and variate ID used in \cite{woo2024unified} can be easily transformed into a high-dimensional vector with the embedding layer.
Note that only the target variate has length $T$, we highlight $x_{T}^1$ as it is our prediction target and will be masked as $0$.
% The variate index is defined as
% \begin{equation*}
%     \bp_i
%     \coloneqq
%     \begin{bmatrix}
%         \mathbf{0}_{d^\prime}
%         \\
%         \be_i
%         \\
%         1
%         \\
%         1\{ i < {T+1} \}
%     \end{bmatrix}.
% \end{equation*}

Now we define the history matrix $\mathtt{A}_i(q) \in \R^{q+1 \times T}$ for the $i$-th covariates $(x_1^i, \cdots, x_T^i)$, with order $q$, such that
\begin{equation*}
    \mathtt{A}_i(q)_{\mu, \nu}
    \coloneqq
        x^i_{\nu - \mu + 1},
        \quad
        \text{ for }\mu\in[d], \nu\in[q].
    % \begin{bmatrix}
    %     x_1^i & x_2^i & \cdots & x_t^i & x_{t+1}^i & x_{t+2}^i & \cdots \\
    %     x_T^i & x_{T-1}^i & \cdots & x_{t-1}^i & x_t^i & x_{t+1}^i & \cdots \\
    %     x_{T-1}^i & x_{T-2}^i & \cdots & x_{t-2}^i & x_{t-1}^i & x_{t}^i & \cdots \\
    %     \vdots & \vdots & \vdots & \vdots & \vdots & \vdots & \cdots \\
    %     x_{T-q}^i & x_{T-q+1}^i & \cdots & x_{t-q}^i & x_{t-q+1}^i & x_{t-q+2}^i & \cdots \\   
    % \end{bmatrix},
\end{equation*}
where in the $j$-th column of $\mathtt{A}_i(q)$, it contains historical values of $x_j^i$ with lag $q > 0$.
% The matrix form of $\mathtt{A}_i(q)$ is in Equation~\eqref{eqn:history-matrix}.

\begin{lemma}\label{lem:mar-group-wise}
Fix $q_{\max}, D \in \N^+$.
Given any $T > 0, d^\prime > q > 0, d > 0$ such that $T > q$, $q_{\max} \geq q$.
For any input matrix $\bH$ in the form of any-variate encoding in Equation~\ref{eqn:AV-encoding}, such that $\bH \in \R^{ D \times dT}$.
There exists a one layer, $q_{\max}$ head \textbf{any-variate attention} that performs the following operation.
    \begin{align*}
    &
    \begin{bmatrix}
        x_1^1 & \cdots & x_{T}^1 & x_1^2 & \cdots & x_T^2 & \cdots & x_1^d & \cdots & x_T^d
        \\
        \bp_1 & \cdots & \bp_{T} & \bp_1 & \cdots & \bp_T & \cdots & \bp_1 & \cdots & \bp_T
        \\
        \be_1 & \cdots & \be_1 & \be_2 & \cdots & \be_2 & \cdots & \be_d & \cdots & \be_d
    \end{bmatrix}
    \\ 
    &\quad\quad\quad\quad\mapsto
    \begin{bmatrix}
        \mathtt{A}_1(q) & \mathtt{A}_2(q) & \cdots & \mathtt{A}_d(q)
        \\
        \bm{0}_{d^{''} \times T} & \bm{0}_{d^{''} \times T} & \cdots & \bm{0}_{d^{''} \times T}   
        \\
        \ddots & \ddots & \cdots & \ddots 
    \end{bmatrix},
    \end{align*}
    where $d^{''}  = d^\prime - q_{\max}$.
\end{lemma}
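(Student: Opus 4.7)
The plan is to lift the shifted-copy construction of Lemma~\ref{lem:input-causal} to the multivariate setting, relying on the any-variate attention bias to block information leakage across covariate groups. After the any-variate flattening in Equation~\eqref{eqn:AV-encoding}, the columns of variate $i$ occupy positions $(i-1)T+1,\ldots,iT$, which coincide exactly with the $i$-th block of $\bU$ in Definition~\ref{def:any-variate-attn}, while $\bar{\bU}$ flags cross-group pairs. I will exploit this block structure to copy lagged values within each variate while cancelling any contribution from other variates.

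Concretely, I would instantiate $q_{\max}$ heads, with head $j\in[q_{\max}]$ responsible for writing the $j$-th lag into row $j+1$. For each head, let $\bfa Q_j$ read the time-position one-hot $\be_t$ from the positional block of each column and shift it by $j$ (so that $\bfa Q_j \bH$ contains $\be_{t-j}$ in the column of time $t$, using the periodic boundary convention), while $\bfa K_j$ reads the time-position block unchanged. The pre-bias score $(\bfa Q_j \bH)^\top (\bfa K_j \bH)$ then equals $1$ exactly when the key's time index is the query's time index minus $j$, and vanishes otherwise. Setting $u_j^1 = 0$ and $u_j^2 \leq -1$ and invoking the ReLU in $\sigma$ forces the post-bias score to be $1$ at the unique within-group column at lag $j$ and $0$ everywhere else, both within and across groups. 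Finally, choose $\bfa V_j$ to read the top $x$-entry and place it in row $j+1$, with a scalar gain tuned to cancel the $1/N^2$ normalisation arising from the $1/N$ outside the sum and $\sigma = \text{ReLU}/N$.

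Summing over the $q_{\max}$ heads and adding the result to $\bH$ populates the top $q_{\max}+1$ rows of each column of variate $i$ at time $t$ with the values $x_t^i, x_{t-1}^i, \ldots, x_{t-q_{\max}}^i$; the first $q+1$ of these rows are precisely the column of $\mathtt{A}_i(q)$ at time $t$, matching the stated output, while the remaining rows lie in the zero-padding region of $\bp_t$ (and, for an underlying $\mathtt{AR}_d(q)$ process, correspond to lags with zero true coefficient). The main obstacle is ensuring that cross-group suppression is exact rather than approximate: because each pre-bias score is an inner product of one-hot vectors and hence at most $1$, the choice $u_j^2 = -1$ drives the ReLU argument to be nonpositive on every cross-group entry, which is what suffices. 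A secondary bookkeeping step is to verify the wraparound boundary case $t - j < 1$ via the periodic-index convention and to confirm that $\Vert\bfa\theta\Vert_{op}$ stays within the order of magnitude already used in Lemma~\ref{lem:input-causal}.
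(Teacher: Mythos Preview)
Your proposal is correct and is essentially the paper's own argument carried out explicitly: the paper invokes Lemma~\ref{lem:input-causal} for the within-block shifted-copy construction and then a separate group-wise lemma (Lemma~\ref{lem:moirai-group-wise}) showing that setting $u^2$ sufficiently negative makes any single-layer standard attention act block-wise across variates, while you fuse the two steps by specializing to one-hot time indices so that $u_j^2=-1$ already suffices. The core mechanism---rotation matrices via position one-hots plus cross-variate suppression via the any-variate bias $u^2\bar{\bU}$ under ReLU---is identical in both.
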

The proof is in \cref{proof:any-var-enc}.
Intuitively, the above operation performs the same operation in Lemma~\ref{lem:input-causal} but in a variate-wise fashion.
\cref{lem:mar-group-wise} shows that any-variate attention is capable of organizing the history of each variate efficiently.
To again achieve the format in Equation~\eqref{eqn:format-assumption}, one has to stack all $\mathtt{A}_i(q)$ in the same columns, which can be easily done by a single layer of attention via \cref{lem:input-causal} and \citep[Proposition~A.5]{bai2024transformers} (details in \cref{proof:any-var-enc}).
This lemma serves as a foundation for MOIRAI to handle multi-variate time series with in-context learning which we present as the theorem below.
\begin{remark}
        Comparing to Lemma~\ref{lem:input-causal}, \cref{lem:mar-group-wise} is specifically for any-variate attention in our construction, where we demonstrate that several special mechanisms in any-variate attention enables variate-wise operations efficiently.
\end{remark}
\begin{remark}    
        Lemma~\ref{lem:input-causal} and Lemma~\ref{lem:mar-group-wise} can be generalized to Softmax and linear attention by considering perturbations, making them applicable to a wide range of transformers.    
\end{remark}

\begin{theorem}[Any-variate Autoregressive Regression via MOIRAI]\label{thm:any-variate-auto}
    Assume \cref{assumption:effective-regression} holds.
    For any $0 \leq \alpha \leq \beta$ with $\kappa \coloneqq \frac{\beta}{\alpha}$, $B_w > 0$, and $\epsilon < B_x B_w / 2$.
    there exists an $(L_1+L_2)$-layer of MOIRAI transformer equipped with any-variate Attention, satisfies the following
    \begin{align*}
        &
        L_1 = \lceil  \frac{q_{\max}}{3} \rceil  +1, \quad
        L_2 = \lceil 2 \kappa \log \frac{ B_x B_w}{2\epsilon} \rceil, 
        \quad
        \\
        &\quad\quad\quad \quad\quad\max_{\ell \in [L_1+1, L_2]} M^{(\ell)} \leq 3, 
        \\
        &\vertiii{\bm{\theta}} \leq \lvert 4R + 8 \beta^{-1} \rvert,
        \quad \sum_{ \ell=1 }^{L_1} M^{(\ell)} = d_{\max} + q_{\max},
    \end{align*}
    where $d_{\max} > 0$.
    For any input time series $\bx$ with length $T$ generated from an $\mathtt{AR}_d(q)$ process, where
    \begin{align*}
        \bx \in \R^{d \times T},\quad q \leq q_{\max}, \quad d \leq d_{\max}.
    \end{align*}
    Then there exists a MOIRAI transformer with $D \geq (q+1)d_{\max} + T + 2$, satisfies the following
    \begin{equation}
        \lVert
        \hat{\bx}_{T}^1
        -
        \left\langle
        \bw^\star_i, [\bx_{T-1: T-q}^1; \dots ; \bx_{T-1: T-q}^d]
        \right\rangle
        \rVert
        \leq 
        \epsilon,
    \end{equation}
    where $\hat{\bx}_T^1 = \mathtt{read}(\text{TF}^0_{\bm{\theta}}(\bH))$, and $\bH \in \R^{D\times  N}$ is the any-variate encoding of $\bx$.
\end{theorem}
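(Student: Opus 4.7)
\begin{hproof}
The plan is to build the desired MOIRAI transformer as a composition of two stages: an $L_1$-layer \emph{reformatting} stage that rearranges the any-variate encoded input into the canonical in-context regression format used in \cite{bai2024transformers}, followed by an $L_2$-layer \emph{gradient descent} stage that implements in-context linear regression on the reformatted tokens.

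For Stage 1, I would first invoke \cref{lem:mar-group-wise} to produce the per-variate history matrices $\mathtt{A}_i(q)$ for $i\in[d]$. As noted after \cref{lem:input-causal}, the $q_{\max}$-head construction may be spread across $\lceil q_{\max}/3\rceil$ any-variate attention layers, each using at most three heads; this accounts for the first $\lceil q_{\max}/3\rceil$ of the $L_1$ layers and contributes $q_{\max}$ to the head sum. I then add one further any-variate attention layer with $d_{\max}$ heads (the ``$+1$'' in $L_1$) that stacks the per-variate last-column blocks $\bx_{T-1:T-q}^i$ into the single target column indexed by $(T,1)$, emulating the feature-stacking trick of \citep[Proposition~A.5]{bai2024transformers}. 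The any-variate biases $u_m^1,u_m^2$ are crucial at this step: by assigning one head per variate index $i\in[d_{\max}]$ and using $u_m^1,u_m^2$ together with the block-diagonal patterns $\bU,\bar{\bU}$, each head can selectively attend to a single variate's history and copy it into the appropriate rows of the target column; inactive variates $(i>d)$ contribute zero by construction. These choices yield $\sum_{\ell=1}^{L_1} M^{(\ell)} = q_{\max}+d_{\max}$ exactly, and the embedding dimension $D\ge(q+1)d_{\max}+T+2$ is what allows the stacked feature to be stored in disjoint row ranges.

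After Stage 1, the target column carries the concatenated feature vector $[\bx_{T-1:T-q}^1;\dots;\bx_{T-1:T-q}^d]\in\R^{qd}$ together with the masked label $0$, which is precisely the canonical format \eqref{eqn:format-assumption}. For Stage 2, I would directly apply \citep[Theorem~4]{bai2024transformers} with feature dimension $qd$: under \cref{assumption:effective-regression}, their construction produces an attention-only transformer with $L_2=\lceil 2\kappa\log(B_xB_w/2\epsilon)\rceil$ layers, at most three heads per layer, that realizes $L_2$ steps of gradient descent on the $\alpha$-strongly-convex, $\beta$-smooth loss $L_{\text{reg}}$ and guarantees $\lVert\hat{\bx}_T^1-\langle\hat{\bw}_{\text{ERM}},[\bx_{T-1:T-q}^1;\dots;\bx_{T-1:T-q}^d]\rangle\rVert\le\epsilon$. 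Setting $u_m^1=u_m^2=0$ throughout these layers collapses any-variate attention to standard attention, so their construction transfers verbatim. The overall operator-norm bound $\vertiii{\bm{\theta}}\le|4R+8\beta^{-1}|$ is inherited from Bai et al.\ for Stage 2, and the Stage 1 weights only require $\bfa V,\bfa Q,\bfa K$ and biases of order $R$, which fits within the same envelope.

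The main obstacle is the Stage 1 stacking layer: one any-variate attention block with $d_{\max}$ heads must simultaneously extract $\bx_{T-1:T-q}^i$ for every active variate $i\le d$ and place these blocks in disjoint row ranges of the target column, while guaranteeing that inactive variates contribute zero. I expect to resolve this with a head-per-variate design in which the $i$-th head uses $\bfa Q_i,\bfa K_i$ keyed on $\be_i$ (for variate selection) and $\bp_i$ (for the correct time position), together with a large $u_i^1$ and very negative $u_i^2$ to kill cross-variate scores via the $\text{ReLU}$ in $\sigma$, and $\bfa V_i$ projecting into rows reserved for variate $i$. The remainder of the argument is routine bookkeeping of layer counts, head counts, and operator norms, combined with a clean composition across the two stages.
\end{hproof}
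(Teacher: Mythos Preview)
Your proposal is correct and follows essentially the same two-stage decomposition as the paper: a reformatting stage built from \cref{lem:mar-group-wise} (spread across $\lceil q_{\max}/3\rceil$ layers) plus one additional $d_{\max}$-head stacking layer, followed by the $L_2$-layer in-context gradient-descent construction of \citep[Theorem~4]{bai2024transformers}. The only minor difference is in the stacking step: the paper's \cref{cor:second-step-enc} implements the vertical stacking of $\mathtt{A}_i(q)$ blocks via a \emph{standard} attention layer (permutation/shift construction in the spirit of \cref{lem:input-causal} and \citep[Proposition~A.5]{bai2024transformers}), whereas you propose a head-per-variate design that exploits the any-variate biases $u_m^1,u_m^2$ for variate selection---both mechanisms work, and yours arguably makes more visible use of the MOIRAI-specific structure.
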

\begin{remark}
    \cref{thm:any-variate-auto} indicates there exists a MOIRAI transformer that fits an autoregressive model on time series as long as the number of covariates no greater than $d_{\max}$ and lags no greater than $q_{\max}$.
    This shows its ability to infer the underlying AR model in a principled way and provides a possible explanation for its zero-shot performance on a wide range of datasets.
\end{remark}

The proof is in \cref{proof:group-wise}.
Observe that there exists two trade-offs in \cref{thm:any-variate-auto}.
First, $q_{\max} d_{\max}$ is upper bounded by the hyperparameter $D$ (up to constant), which is a natural trade-off in our construction.
Second, the approximation error is roughly $O( e^{-L} )$, suppressed exponentially by the number of layers, as in our analysis, each layer of MOIRAI performs a single step of gradient descent on $L_{\text{reg}}$.

Another popular approach of time series prediction is through probabilistic forecasting, where the model estimates the distribution from input data.
In \cref{thm:moirai-mle}, we show that there also exists a MOIRAI that performs Maximum Likelihood Estimation with a small estimation error.

% Observe that with a fixed number of layers and number of heads each layer, there exists a trade-off between the approximation error and the max number of covariates a model handles.
% {\color{red} add more analysis}

% \paragraph{Data Assumption.}
% We assume a $T$ time step time series $(\bx_t, \by_t)_{t=1}^T$, $\bx_t \in \R^d$, $\by_t \in \R$, is generated from the following stochastic process with lookback window size $q \in \N_+$.
% \begin{equation*}
%     \bx_{t+1} = \sum_{i=1}^q \langle \bw_{t-i} , \by_{t-i} \rangle + \epsilon_{t-i},
%     \quad\text{where }
%     \epsilon_t \sim N(0, \sigma_t^2), \quad
%     \bw_t, \in \R^d, \quad \text{ for all }t \in [T].
% \end{equation*}

% \begin{theorem}[Transformers as  Universal Forecaster: Approximation Bound]\label{thm:any-variate-auto}
%     For any $0 \leq \alpha \leq \beta$ with $\kappa \coloneqq \frac{\beta}{\alpha}$, assume the input data is norm bounded by $B_x$.
%     With any input time series $\{(\bx_t, \by_t)\}_{t=1}^T$ following the above assumption,
%     there exists an $(L)$-layer of Transformer satisfies the following
%     \begin{equation*}
%         \lVert f( \bx_{T+1} ) - \by_{T+1} \rVert 
%         =
%         O
%         \left(
%         \frac{\kappa \log B_x}{L-1}
%         \right).
%     \end{equation*}
% \end{theorem}

% \subsection{Generalization Bound}

% \begin{itemize}
%     \item Multi-path + No-sliding window
%     \item Single Path + Sliding Window
%     \item Multi-path + sliding window
% \end{itemize}

% Next we study the generalization bound for XXX.
% Considering learning from X

\section{Generalization}
In this section, we investigate the generalization bound of pretraining transformer-based time series foundation models.
This section will focus on learning MOIRAI on multi-variate time series, one can easily adapt our proofs into learning uni-variate time series with standard transformers.

Let $\pi$ be a meta distribution, and each distribution drawn from it $ \mathtt{P}^{(T)} \sim \pi$, satisfies Dobrushin's condition \cite{Dobrushin1968TheDO} (which we will introduce shortly).
For pretraining data, we first sample $n$ distributions $\mathtt{P}_j^{(T)}$ i.i.d. from $\pi$, and for each distribution, we sample a time series $(\bx_{1j}, \cdots, \bx_{Tj})$, for $j \in[n]$, and each of them contains no more than $d$ covariates and with lag step no more than $q$.

For each time series, we encode it with any-variate encoding into an input matrix denoted as $\bH \in \R^{D \times N}$, \footnote{Due to any-variate encoding, $N = dT$.}
We define each pretraining sample as $\bz_j \coloneqq \left( \bH_j, y_j \right)$, where $y_j = \bx_{Tj}^1$.
We consider the squared loss between model prediction and the label, i.e.
{\small
\begin{equation*}
    \ell( \bz_t, \bm{\theta} )
    \coloneqq
    \frac{1}{2}
    \Bigg[
    y_t
    -
    \mathtt{Clip}_{B_x}
    \bigg(
    \mathtt{read}_y
    \Big(
    \text{TF}_{\bm{\theta}}^R
    \left(
    \bH
    \right)
    \Big)
    \bigg)
    \Bigg]^2,
\end{equation*}
}
where 
$\mathtt{Clip}_{B_x}(t) \coloneqq \max\{ \min \{ t, B_x \}, -B_x \}$, and $\text{TF}_{\bm{\theta}}^R$ is the MOIRAI transformer defined in \cref{def:moirai} with $\mathtt{Clip}(\cdot)$ applied after each layer.
The pretraining loss and test loss is defined as the following:
\begin{equation}\label{eqn:icl-loss}
    \hat{L}( \bm{\theta} )
    \coloneqq
    \frac{1}{nT}
    \sum_{t=1}^T
    \sum_{j=1}^n
    \ell( \bm{\theta}, \bz_{jt} ),
    \;
    L( \bm{\theta} )
    \coloneqq
    \mathbb{E}_{\bz,\mathtt{P}^{(T)}}
    \left[
    \ell(  \bm{\theta}, \bz )
    \right].
\end{equation}
% where the expectation of test loss is taken over $\mathtt{P}_j^{(T)} \sim \pi, \bz \sim \mathtt{P}^{(T)}_j$.
The goal of our pretraining algorithm is to find an empirical risk minimizer (ERM) over MOIRAI transformers with $L$ layers, $M$ heads, and norm bounded by $B$:
\begin{align}\label{eqn:parameter-regime}
    &\hat{\bm{\theta}}
    \coloneqq
    \underset{ \bm{\theta} \in \Theta_{L,M,D^\prime, B} }{\argmin}
    \hat{L}
    (\bm{\theta}),
    \\
    &\Theta_{L,M,D^\prime,B}
    \coloneqq
    \Bigg\{
    \bm{\theta}
    =
    \left(
    \bm{\theta}_1^{(1:L)},
    \bm{\theta}_2^{(1:L)}
    \right)
    :
    \\
    \max_{\ell\in[L]}
    &M^{(\ell)} \leq M
    , 
    \quad
    \max_{\ell\in[L]}
    % \underset{\ell\in[L]}{\max}
    D^{(\ell)} \leq D^\prime
    ,
    \quad
    \Vert\bfa\theta\Vert_{op}
    \leq 
    B
    \Bigg\}.
\end{align}
% \vspace{-1em}
\subsection{Weakly-Dependent Time Series}
In this scenario, we consider the training data $\bx$ to be drawn from a distribution $\mathtt{P}$ satisfying Dobrushin's condition.
Under this condition, we are able to present several generalization bounds on pretraining.

\begin{definition}[Influence in high dimensional distributions]\label{def:influence}
    Let $\cX = (\cX_1, \cdots , \cX_T)$ be a sequence of random variables over $\cD_{\cX}^{T}$.
    The influence of variable $\cX_j$ on variable $\cX_i$ is defined as
    \begin{align*}
        &\bI_{j \rightarrow i}(\cX)
        \coloneqq
        \max_{ \text{x}_{ -i-j}, \text{x}_j, \text{x}_j^\prime  }
        \\
        % \underset{  z_{-i-j} \in Z^{m-2} }{ z_j, z_j^\prime \in Z}}
        &
        \norm{
        P_{\cX_i | \cX_{-i}} \left( \cdot | \text{x}_{-i-j}, \text{x}_j \right),
        P_{\cX_i | \cX_{-i}} \left( \cdot | \text{x}_{-i-j}, \text{x}_j^\prime \right)
        }_{\texttt{TV}}
        ,
    \end{align*}
    where $\text{x}_{-i-j} \in \cD_{\cX}^{T-2}, \text{x}_j, \text{x}_j^\prime \in \cD_{\cX}$, $\norm{\cdot}_{\texttt{TV}}$ denotes the total variation distance, and $\text{x}_{-i}$ represents the vector \textbf{x} after omitting the $i$-th element.
\end{definition}

\begin{definition}[Dobrushin's Uniqueness Condition]
    Consider a random variable $\cX$ over $\cD_{\cX}^T$.
    The Dobrushin coefficient of $\cX$ is defined as 
    \[
    \alpha(\cX) \coloneqq  \max_{1\leq i \leq T} \sum_{j \neq i} \bI_{j \rightarrow i} (\cX).
    \]
    We say the variable satisfies Dobrushin's uniqueness condition if $\alpha(\cX) < 1$.
    For a distribution $\mathtt{P}$, we denote $\alpha(\mathtt{P}) = \sup_{\cX\sim\mathtt{P}} \alpha(\cX)$.
\end{definition}
\begin{definition}[Log Dobrushin's Coefficients]
        Let $\cX = (\cX_1, \cdots, \cX_T)$ be a random variable over $\cD_{\cX}^T$ and let $\mathtt{P}_z$ denote its density.
        Assume that $\mathtt{P}_{z} > 0$ on all $\Omega^T$.
        For any $i \neq j \in [T]$, the log influence between $j$ and $i$ is defined as:
    {\small
    \begin{equation*}
        I^{\log}_{j, i}(\cX)
        =
        \frac{1}{4}
        \sup
        \log
        \frac{ P
        \left[
        \text{x}_i, \text{x}_j, \text{x}_{-i-j}
        \right] 
        P
        \left[
        \text{x}_i^\prime, \text{x}_j^\prime, \text{x}_{-i-j}
        \right]
        }{
    P
        \left[
        \text{x}_i^\prime, \text{x}_j, \text{x}_{-i-j}
        \right] 
    P
        \left[
        \text{x}_i, \text{x}_j^\prime, \text{x}_{-i-j}
        \right] 
        },
    \end{equation*}
    }
    where the $\sup$ is taken over $\text{x}_{-i-j}, \text{x}_i, \text{x}_i^\prime, \text{x}_j, \text{x}_j^\prime$,
    and the log-coefficient of $\cX$ is defined as $\alpha_{\log}(\cX) = \max_{i \in [T]} \sum_{j \neq i} I^{\log}_{j, i}(\cX)$.
\end{definition}
The coefficient $\alpha(\cdot)$ has a natural bound $0 \leq \alpha(\cdot) \leq T-1$, with $\alpha = 0$, the data reduces to the i.i.d. case.

\begin{remark}
    Dobrushin's condition characterizes a class of distributions whose dependency is mild. 
    However, our empirical evaluation suggests that in certain situations where Dobrushin's condition fails to hold, the Transformers can perform prediction well.
\end{remark}
\subsection{Generalization Bounds of MOIRAI}

\begin{theorem}[Pretraining Generalization Bound]\label{thm:gen-bound-1}
    Let $\Theta_{L,M,D^\prime, B}$ be the parameter space defined in 
    Equation~\ref{eqn:parameter-regime}.
    Assume $\alpha_{\log}( \mathtt{P}^{(T)}) < 1/2$.
    Then with probability at least $1 - \varepsilon$, ERM $\hat{\bm{\theta}}$ satisfies the following:
    \begin{align*}
    L(\hat{\bm{\theta}})
    &\leq 
    \inf_{\bm{\theta} \in \Theta_{L,M,D^\prime, B}} L(\bm{\theta})
    +
    \\
    &
    O
    \left(
    \frac{B_x^2}{1 - \alpha(\mathtt{P}^{(T)}) }
    \sqrt{
    \frac{
    L(MD^2 + D D^\prime) \zeta + \log(\nicefrac{1}{\varepsilon})
    }{n}
    }
    \right),
    \end{align*}
    % where $\iota = \log( 2 + 2(L B_H^{L-1} B_{\Theta}) B \frac{1-\alpha}{B_x})$
    where $C$ is an universal constant, and $\zeta = O(\log(2 + \max \{ B, \mathtt{R}, B_x, T, d \}$. 
\end{theorem}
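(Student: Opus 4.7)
The plan is to follow a standard uniform-convergence argument, combining a covering-number bound for the MOIRAI class with a concentration inequality tailored to distributions satisfying the log-Dobrushin condition. Starting from the decomposition
$$
L(\hat{\bm{\theta}}) - \inf_{\bm{\theta} \in \Theta_{L,M,D^\prime,B}} L(\bm{\theta}) \;\leq\; 2 \sup_{\bm{\theta} \in \Theta_{L,M,D^\prime,B}} \big| L(\bm{\theta}) - \hat L(\bm{\theta}) \big|,
$$
the task reduces to uniformly controlling the empirical process on the right. Clipping by $\mathtt{Clip}_{B_x}$ ensures $\ell(\bm{z},\bm{\theta}) \in [0, 2B_x^2]$, and propagating parameter perturbations through each clipped layer (as in Bai et al., via the $1$-Lipschitzness of clipping and operator-norm bounds from $\bm{\theta} \in \Theta_{L,M,D^\prime,B}$) gives a Lipschitz bound of $\ell(\bm{z},\cdot)$ that is polynomial in $\max\{B, B_x, T, d\}$.

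\textbf{Covering.} Each layer of a MOIRAI transformer contributes at most $3MD^2 + 2DD^\prime$ parameters (three $D \times D$ matrices per head, plus an MLP pair of shapes $D^\prime \times D$ and $D \times D^\prime$), together with the two scalar biases $u_m^{1,\ell}, u_m^{2,\ell}$. A product-of-layers $\delta$-cover in operator norm therefore has log-cardinality
$$
\log |\mathcal{N}_\delta| \;\lesssim\; L\big(MD^2 + DD^\prime\big)\,\log\!\big(1 + \tfrac{\max\{B, \mathtt{R}, B_x, T, d\}}{\delta}\big),
$$
which, after setting $\delta$ polynomially small in $n$ so that the Lipschitz discretization error is absorbed, produces exactly the $L(MD^2 + DD^\prime)\,\zeta$ factor in the stated bound.

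\textbf{Concentration.} Fix $\bm{\theta} \in \mathcal{N}_\delta$. Across the $n$ time series, the samples are i.i.d., but the within-series summands $\{\ell(\bm{\theta}, \bm{z}_{jt})\}_{t \in [T]}$ are dependent and must be controlled through the log-Dobrushin hypothesis. Under $\alpha_{\log}(\mathtt{P}^{(T)}) < 1/2$, a Marton-type transportation or log-Sobolev inequality for Dobrushin-like distributions yields sub-Gaussian concentration of any bounded Lipschitz functional of the series, with variance proxy scaled by $(1 - \alpha(\mathtt{P}^{(T)}))^{-2}$. Averaging these sub-Gaussians across the $n$ i.i.d.\ draws of $\mathtt{P}^{(T)}$ tightens the rate to $\tfrac{B_x^2}{1 - \alpha(\mathtt{P}^{(T)})}\sqrt{\log(1/\varepsilon^\prime)/n}$ for each fixed $\bm{\theta}$. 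A union bound over $\mathcal{N}_\delta$ with $\varepsilon^\prime = \varepsilon / |\mathcal{N}_\delta|$, followed by Lipschitz discretization back to arbitrary $\bm{\theta} \in \Theta_{L,M,D^\prime,B}$, closes the argument.

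\textbf{Main obstacle.} The delicate step is the concentration itself: the final bound is expressed in $\alpha(\mathtt{P}^{(T)})$, while the hypothesis controls $\alpha_{\log}(\mathtt{P}^{(T)})$, so the proof must invoke the correct log-Dobrushin concentration result and then translate its constant via the standard comparison between $\alpha$ and $\alpha_{\log}$. The covering-number and Lipschitz arguments are routine but bookkeeping-heavy, since they must track the explicit dependence on $L, M, D, D^\prime$, the clipping radius, and the any-variate encoding token count $N = dT$ to ultimately collapse every polynomial factor into the logarithmic $\zeta = O(\log(2 + \max\{B, \mathtt{R}, B_x, T, d\}))$ term.
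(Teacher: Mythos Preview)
Your proposal is correct and follows essentially the same route as the paper. Both arguments reduce to a uniform deviation bound via a covering of the MOIRAI parameter class (with the same $L(MD^2+DD')$ parameter count) combined with a Dobrushin-type sub-Gaussian concentration for the within-series averages, then apply the standard ERM decomposition $L(\hat{\bm{\theta}})-\inf L(\bm{\theta})\le 2\sup_{\bm{\theta}}|L(\bm{\theta})-\hat L(\bm{\theta})|$. The paper invokes the bounded-difference inequality of K\"ulske/Dagan et al.\ for the concentration step and then runs a two-scale chaining argument (in the style of \citep[Proposition~A.4]{bai2024transformers}) rather than a single-scale union bound over the net, but this is a routine technical variant that lands on the same rate; your identification of the $\alpha$ vs.\ $\alpha_{\log}$ translation as the only genuinely delicate point matches what the paper handles implicitly.
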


The proof is in \cref{proof:gen-bound-1}.
Note that when $\alpha(\mathtt{P}) = 0$, the data becomes i.i.d., where the only difference between our generalization and one proposed in \cite{bai2024transformers} is the complexity term. 
The complexity of MOIRAI and standard transformers differs as the complexity of MOIRAI also dependents on the time series length ($T$).
Further, in \cref{thm:gen-bound-1}, we do not assume our data is generated from the $\mathtt{AR}$ process, only its Dobrushin coefficient.
When the data is generated by the $\mathtt{AR}$ process, we are able to give a more explicit bound on the same test loss as described below.

\begin{corollary}[Test Error Bound]\label{thm:test-error-bound-1}
    Following the setup in \cref{thm:gen-bound-1},
    if pretraining samples are generated by some $\mathtt{AR}_d(q)$ process with noise sampled from $N(0, \sigma^2_\epsilon)$\footnote{Here we assume fixed $d, q$ across all samples as one can describe a lower dimension/order $\mathtt{AR}$ process with zero coefficients.}, 
    then with probability $ \Delta(1 - \varepsilon)$, ERM $\hat{\bm{\theta}}$ satisfies the following:
    \begin{align*}
    L(\hat{\bm{\theta}})
    &
    \leq
    O
    \Bigg(
    B_x B_w \exp \left( \frac{-L}{\kappa} \right)
    +
    \\
    &
    \frac{B_x^2}{1 - \alpha(\mathtt{P}^{(T)})}
    \sqrt{ \frac{L(MD^2 + D D^\prime ) \zeta + \log (1 / \varepsilon)}{n} }
    \Bigg).
    \end{align*}
    where $\Delta = O\left(1 - \left(  \nicefrac{\sigma_\epsilon}{B_x  B_w e^{\nicefrac{-L}{2\kappa}}}  \right)^2  \right)$, $C$ is an universal constant, and $\zeta = O(\log(2 + \max \{ B, \mathtt{R}, B_x, T, d \})$. 
\end{corollary}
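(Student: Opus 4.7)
The plan is to decompose the test loss into a statistical part handled by Theorem~\ref{thm:gen-bound-1} and a best-in-class excess risk handled by the approximation guarantee of Theorem~\ref{thm:any-variate-auto}, with the Gaussian noise from the AR process controlled by an elementary tail argument. First, I would invoke Theorem~\ref{thm:gen-bound-1} directly: with probability at least $1-\varepsilon$ over the $n$ pretraining trajectories,
\[
L(\hat{\bm{\theta}}) \le \inf_{\bm{\theta} \in \Theta_{L,M,D^\prime,B}} L(\bm{\theta}) + \mathrm{Comp},
\]
where $\mathrm{Comp}$ denotes the complexity term already written in the claim. The task then reduces to upper bounding $\inf_{\bm{\theta}} L(\bm{\theta})$ under the $\mathtt{AR}_d(q)$ generating model.

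To control this best-in-class loss, I would exhibit an explicit $\bm{\theta}^\star \in \Theta_{L,M,D^\prime,B}$ via Theorem~\ref{thm:any-variate-auto}. For that choice, on any multivariate time series of the appropriate shape, the transformer output $\hat{x}_T^1$ satisfies $|\hat{x}_T^1 - \langle \hat{\bw}_{\text{ERM}}, [\bx_{T-1:T-q}^1;\ldots;\bx_{T-1:T-q}^d]\rangle| \le \epsilon$ with $\epsilon \asymp B_x B_w\, e^{-L/(2\kappa)}$. The AR label decomposes as $y = \langle \bw^\star, [\bx]\rangle + \epsilon_T$, and under Assumption~\ref{assumption:effective-regression} the in-context ERM $\hat{\bw}_{\text{ERM}}$ tracks $\bw^\star$ closely on in-distribution prompts, so the pointwise squared loss obeys
\[
(y - \hat{x}_T^1)^2 \le 2\bigl(\langle \bw^\star, [\bx]\rangle - \hat{x}_T^1\bigr)^2 + 2\epsilon_T^2 \lesssim B_x^2 B_w^2\, e^{-L/\kappa} + 2\epsilon_T^2.
\]
A Chebyshev-type bound on the Gaussian noise then gives $\Pr[\epsilon_T^2 \le \epsilon^2] \ge 1 - \sigma_\epsilon^2/\epsilon^2 = 1 - (\sigma_\epsilon/(B_x B_w e^{-L/(2\kappa)}))^2$, which matches the factor $\Delta$; a union bound with the pretraining event of probability $1-\varepsilon$ produces the claimed joint probability $\Delta(1-\varepsilon)$, and on that intersection the two deterministic pieces combine to the stated rate.

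The main obstacle I anticipate is in the second step: cleanly relating the in-context ERM $\hat{\bw}_{\text{ERM}}$, which depends on the particular length-$(T{-}1)$ realization used as the prompt, to the true parameter $\bw^\star$, so that the TF-approximated linear prediction can be compared against $y$ without leaking an explicit $O(1/\sqrt{T})$ statistical error into the final bound. The planned route is to lean on Assumption~\ref{assumption:effective-regression} and the feature bound $\|\bx_{t-q:t-1}\|_2 \le B_x$ to absorb that discrepancy into the constants hidden by the big-$O$; keeping the final rate genuinely $T$-free, or tracking precisely where any residual $T$-dependence enters, is the delicate bookkeeping I expect to dominate the detailed write-up.
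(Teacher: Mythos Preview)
Your proposal is correct and follows the route the paper intends: the corollary is stated without a dedicated proof and is meant to be read off by combining the generalization bound of Theorem~\ref{thm:gen-bound-1} with the approximation guarantee of Theorem~\ref{thm:any-variate-auto} for a specific $\bm{\theta}^\star$, together with a Chebyshev-type tail bound on the Gaussian noise $\epsilon_T$ that produces exactly the stated $\Delta = O\bigl(1 - (\sigma_\epsilon / (B_x B_w e^{-L/(2\kappa)}))^2\bigr)$. Your identification of the $\hat{\bw}_{\text{ERM}}$ versus $\bw^\star$ discrepancy as the delicate step is also accurate; the paper handles this informally via Assumption~\ref{assumption:effective-regression} and the big-$O$, exactly as you propose.
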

\begin{remark}
Considering the model parameters ($M,D,D^\prime, d$) are of constant level, one is able to further optimize the bound to
$L(\hat{\bm{\theta}}) \lesssim  n^{-\nicefrac{1}{2}}$, by selecting $L$ appropriately.    
\end{remark}

% \begin{theorem}[Multi-Path Generalization Bound]
%     Let $L_{\mathtt{T}}( \bm{\theta})$, $\hat{L}_{\mathtt{T}})\bm{\theta})$ be the test and empirical loss over samples drawn from a target distribution $\mathtt{T}$.
%     Let $\lambda_k = \inf L_k( \bm{\theta})$, where the $\inf$ is taken over all $\bm{\theta} \in \Theta_{L,M,D^\prime, B}$.
%     Following the constructions above, we have
%     \begin{equation}
%         L_{\mathtt{T}}(\hat{\bm{\theta}})
%         \leq 
%         \frac{1}{K}
%         \sum_{k=1}^K
%         \left(
%         \hat{L}_{\mathtt{T}}( \hat{\bm{\theta}})
%         +
%         C_1 \cdot
%         \hat{\mathfrak{G}}
%         +   
%         \lambda_K
%         \right)
%         +
%         \cdots,
%     \end{equation}
%     where $\lambda_K$ is the average risk of the optimal hypothesis on the combined source and target domain,  $C_1, C_2$ are universal constants whenever $\nicefrac{1}{2} - \alpha_{\log}(D_S^k)$ is bounded away from zero, and $\hat{\mathfrak{G}}$ is the empirical Gaussian complexity.
% \end{theorem}

\subsection{Example: Stationary $\mathtt{AR}(1)$}\label{sec:AR1}
Here we provide an example of the application of \cref{thm:test-error-bound-1} on $\mathtt{AR}(1)$ process with the following form
\begin{equation*}
    \bx_{t+1}
    =
    \langle \bw, \bx_{t} \rangle + \epsilon_t, \quad \epsilon \sim N(0, \sigma_{\epsilon}^2),
\end{equation*}
where $\bx_t \in \R^d$, $\bw \in \R^d, \epsilon \in \R$ and $\by_{t+1} = \bx_{t+1}^1$.

To satisfy the condition of $\alpha( \mathtt{P} ) < \frac{1}{2}$, we assume the following holds
\begin{equation}\label{eqn:condition-weakly-dependent-AR1}
    B_x^2 < \ln \frac{1}{2} + ( \sigma_{\epsilon}^2 ),
    \quad  \norm{\bw}_{\infty} < 1.
\end{equation}
The first condition comes from the fact that we require the pair-wise potential of this time series to be less than $1/2$ (For more details, see \cref{appendix:analysis-ar1}).
The second condition comes from the requirement of it being stationary.
\begin{proposition}[Generalization Bound for Any-Variate Transformer on $\mathtt{AR}(1)$]\label{proposition:ar1}
    Considering an $\mathtt{AR}(1)$ process with Dobrushin's coefficient bounded by $1/2$.
    With probability at least $\delta(1 - \varepsilon)$, ERM $\hat{\bm{\theta}}$ satisfies the following:
    \begin{align*}
    L(\hat{\bm{\theta}})
    &=
    O
    \Bigg(
    \frac{\sigma_\epsilon}{\sqrt{1 - \delta}}
    +
    \frac{\sigma_\epsilon^2}{B_x}
    \exp \left( \frac{-L}{\kappa} \right)
    +
    \\
    &
    \frac{\sigma_\epsilon^2}{1 - \alpha( \mathtt{AR}(1) )}
    \sqrt{ \frac{L(MD^2 + D D^\prime) \zeta + \log (1 / \varepsilon)}{n} }
    \Bigg).
    \end{align*}
    where $\zeta = O(\log(2 + \max \{ B, \mathtt{R}, B_x, d \})$. 
\end{proposition}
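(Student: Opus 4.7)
The plan is to verify that a stationary $\mathtt{AR}(1)$ process satisfying the conditions in \eqref{eqn:condition-weakly-dependent-AR1} meets the hypotheses of \cref{thm:test-error-bound-1}, and then translate the generic bound into the specific constants of the $\mathtt{AR}(1)$ setting. The argument decomposes into three ingredients: verifying the log-Dobrushin coefficient, handling the unbounded Gaussian noise by truncation, and then invoking \cref{thm:test-error-bound-1} with the resulting constants.

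First I would verify $\alpha_{\log}(\mathtt{AR}(1))<1/2$. Since the chain is Markov, $I_{j\rightarrow i}^{\log}$ vanishes except when $|i-j|=1$, so only the adjacent-time pair-wise potential needs to be evaluated. Plugging the Gaussian transition $\mathtt{P}(\bx_{t+1}\mid\bx_t)=N(\langle\bw,\bx_t\rangle,\sigma_\epsilon^2)$ into the four-log-density definition yields a quadratic form in the variables, which after taking the supremum over the truncated support $[-B_x,B_x]^d$ is bounded by $B_x^2/\sigma_\epsilon^2+O(1)$. The threshold $B_x^2<\ln(1/2)+\sigma_\epsilon^2$ is chosen precisely so that this pair-wise potential falls below $1/2$, as outlined in \cref{appendix:analysis-ar1}, giving $\alpha_{\log}(\mathtt{P}^{(T)})<1/2$.

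Next, I would handle the Gaussian tails. Under $\|\bw\|_\infty<1$ the chain is stationary with coordinate variance of order $\sigma_\epsilon^2/(1-\|\bw\|_\infty^2)$. I would define an event $\mathcal{E}_\delta$ on which every coordinate of every sampled trajectory lies in $[-B_x,B_x]$; a Gaussian/Chebyshev tail bound calibrates the truncation radius so that $\Pr[\mathcal{E}_\delta]\geq \delta$. Off $\mathcal{E}_\delta$, a Cauchy--Schwarz step combined with the stationary-variance estimate contributes an additive $O(\sigma_\epsilon/\sqrt{1-\delta})$ to the expected squared loss, producing the first summand of the claimed bound. I would then invoke \cref{thm:test-error-bound-1} on $\mathcal{E}_\delta$ with $B_w=O(1)$ inherited from $\|\bw\|_\infty<1$ and $B_x$ constrained by \eqref{eqn:condition-weakly-dependent-AR1}, so that $B_x^2\lesssim \sigma_\epsilon^2$. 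Rearranging, the approximation term $B_xB_w\exp(-L/\kappa)$ rewrites as $(\sigma_\epsilon^2/B_x)\exp(-L/\kappa)$, and the leading $B_x^2$ in the complexity term becomes $\sigma_\epsilon^2$. A union bound over $\mathcal{E}_\delta$ and the success event of \cref{thm:test-error-bound-1} then yields the overall probability $\delta(1-\varepsilon)$.

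The main obstacle I anticipate is step one: the log-Dobrushin coefficient is a pointwise density-ratio supremum, and for Gaussian conditionals it is finite only because the sample support is truncated to $[-B_x,B_x]^d$. Balancing the truncation radius so as to simultaneously (i) satisfy the Dobrushin threshold in \eqref{eqn:condition-weakly-dependent-AR1}, (ii) keep $\Pr[\mathcal{E}_\delta]\geq \delta$, and (iii) preserve the clean $\sigma_\epsilon$ dependence in the final bound requires the most careful bookkeeping; once these three are reconciled, the remainder is a direct specialization of \cref{thm:test-error-bound-1}.
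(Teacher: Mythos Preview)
Your proposal is correct and follows essentially the same route as the paper: verify the Dobrushin hypothesis for the stationary $\mathtt{AR}(1)$ chain and then specialize \cref{thm:test-error-bound-1} using the constraint \eqref{eqn:condition-weakly-dependent-AR1} to rewrite the $B_x$-constants in terms of $\sigma_\epsilon$. The only stylistic difference is that the paper (in \cref{appendix:analysis-ar1}) bounds $\alpha_{\log}$ through the MRF pairwise-potential coefficient $\beta$ via \cref{lem:beta-bound} rather than computing the four-point log-density ratio directly, and it leaves the Gaussian-tail truncation and the constant substitutions implicit where you spell them out.
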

If we further optimize the bound by viewing the hyperparameters as constants, the test error obeys $O(e^{-L} + \sqrt{\frac{L}{n}})$ with high probability whenever $\sigma_\epsilon$ is small.

\vspace{-1em}
\section{Experiments}
\begin{figure*}[t]
    \centering
    % \vspace{-0.5em}
    \includegraphics[width=\linewidth]{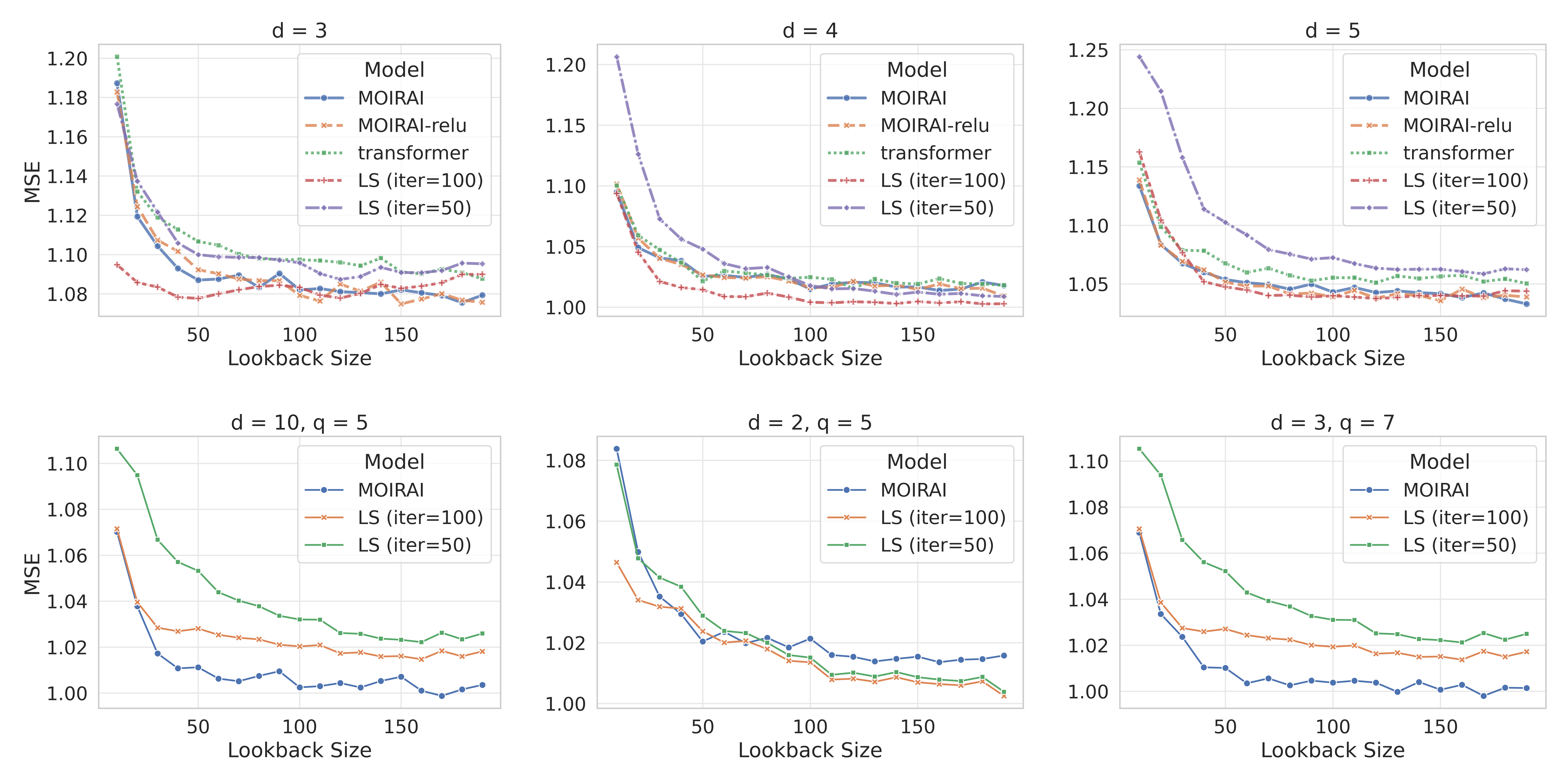}
    % \vspace{-2.5em}
    \caption{
    \textbf{Top: Model performance on data with different number of covariates.
    }
    For both MOIRAI and MOIRAI-relu, we observe their performance behave like least squares.
    As in our construction, the longer the lookback size is, the more examples available for transformers to fit an $\mathtt{AR}$ model.
    Note that our test data has variance $\sigma^2 = 1$, thus the MSE for both models are expected to converge to $1$ as the lookback size increases.
    \textbf{Bottom: Generalization to unseen values of $d, q$.}
    From left to right, we have MOIRAI's generalization performance (pretrained on $d\in\{4,5\}, q\in\{4,5\}$) on high dimensional data ($d=10$), low dimensional data ($d=2$) and high lag step + low dimensional data ($d=3,q=7$).
    Note that high and low is compared with pretraining data.
    We observe that even when MOIRAI did not learn from any time series with $d=10$, it is still able to generalize well and shows even better sample complexity than least squares regression.
    Finally, even when both $q,d$ are unseen, it does not impact MOIRAI's ability to make accuracy predictions.
    }
    \label{fig:icl-results}
    % \vspace{-1em}
\end{figure*}

To verify our analysis, we first train transformers on synthetic datasets generated from $\mathtt{AR}$ process with different parameters.
The goal of this experiment is to verify the existence of a transformer that performs least squares regression on input time series with bounded lag window and number of covariates.
Next we study whether a pretrained transformer is capable of generalize such an ability to unseen values of $d, q$.
More empirical results are in \cref{appendix:additional-exp}.
% \vspace{-1em}
\subsection{Datasets}
\paragraph{Synthetic Data Generation.}
We generate the $\mathtt{AR}$ synthetic data similar to Equation~\eqref{eqn:AR-data} but use normalization to stabilize the values.
Consider a sequence of data 
$\bx \in \R^{d \times T} \coloneqq (\bx_1, \dots, \bx_T)$, where $\bx_t = (x_t^1, \cdots, x_t^d) \in \R^d$.
Assuming our target (variate of interest) is in dimension $1$, we generate our data as follows:
\begin{equation}
    x_{t}^1
    =
    \frac{1}{qd}
    \sum_{i=1}^q
    \sum_{j=1}^d
    a_i^j \cdot x_{t-i}^j
    + \epsilon_t
    ,
\end{equation}
where $\epsilon_t \sim N(0, \sigma^2)$, $a_i^j \sim N(0, 1) \in \R$.
We have $\sigma^2 \sim \text{unif}(0.1, 1)$.
After recursively generating the time series, we remove its first 50 time steps as burn out.
Each $\mathtt{AR}$ time series has the number of covariates between $1$ to $5$ and lag between $1$ to $5$.
For test data, we randomly generate one time series with $5k$ data points with $\sigma^2 = 1$, and evaluate our model on all time steps.
We set $q, d \leq 5$ in our experiments.
In total, we generate $100$ different time series with randomly sampled $d$ and $q$.
We also conduct experiments on synthetic data with seasonality, which can be found in the appendix.
% \vspace{-1em}
\paragraph{Model.}
We use MOIRAI-base, it is a $12$ layer MOIRAI transformer, with hidden dimension $768$.
% We use the pretraining pipeline as used in \cite{woo2024unified}.
The hyperparameters of this experiment can be found in \cref{table:hyperparameters}.
We use AdamW optimizer with linear warm ups.
We use MSE loss for pretraining, comparing to \cite{woo2024unified} using NLL loss, we choose MSE loss to simplify our settings.
% \vspace{-1em}
\paragraph{Training and Evaluation.}
For pretraining, we follow the standard MOIRAI pretraining but set the patch size as $1$ to minimize the impact of patch embedding.
During pretraining, each time series is randomly sampled, and the mask is randomly applied to each time step with probability $0.15$.
We evaluate the pretrained model on our test data with $d = \{3, 4, 5\}$, $q=5$ and $\sigma^2 = 1$ with different input length.
% \vspace{-2em}
\paragraph{Baselines.}
We compare MOIRAI with least squares regression performing different gradient descent steps.
For least squares regression, we assume $q$ is known.
When MOIRAI takes a $T$ length input, the least squares regression is trained on $T-1$ samples with each having $dq$ features.
A more detailed example on how we implement baselines is in \cref{appendix:ls-baseline}.
We also include the standard transformers and MOIRAI with ReLU replacing Softmax, which we term it as MOIRAI-relu.
For standard transformers, we keep the any-variate encoding but replace its attention with standard attention.
In \cite{woo2024unified}, without any-variate attention, the error of MOIRAI-small increases roughly $40\%$.
% \vspace{-2em}
\paragraph{Results.}
Since our test data generation process obeys noise variance = $1$, when fitting a linear model, the expected MSE will converge towards $1$ as lookback size increases.
Based on \cref{thm:any-variate-auto}, the length of input time series also corresponds to the number of examples model perform least squares on via gradient descent.
We observe that as the input length increases, the predictive error of MOIRAI decreases similarly to least squares, which verifies \cref{thm:any-variate-auto}.
Next, when pretrained on diverse dataset, pretrained MOIRAI is able to adapt to different number of covariates and perform least squares accordingly.
Further, when replace softmax with ReLU, the performance gap is negligible.
For standard transformer, while it also behaves similar to other models, it does present higher error comparing to other baselines, indicating the advantages of using any-variate attention.
% \vspace{-1em}
\subsection{Generalization to Unseen $d, q$}
% \vspace{-0.5em}
Here we are interested in whether a pretrained transformer is capable of generalizing to unseen values of $d$ and $q$.
Therefore, we train transformers (MOIRAI) on synthetic data generated with $\mathtt{AR}$ with $d \in \{4, 5\}$, and $q \in \{4, 5\}$.
In our construction, pretrained transformer is compatible with lower order and dimension $\mathtt{AR}$ data.
We evaluate the trained model on data with unseen values of $d$.
We select $d=2, d=10$, to represent the scenario when the number of covariates is lower and higher than pretraining data.
% \vspace{-1em}
\paragraph{Results.}
We observe that even when facing data with unseen number of covariates, MOIRAI is still capable of performing least squares regression effectively.
Note that for $d=10$, least squares require higher sample complexity to obtain similar performance to $d=5$ cases.
However, the pretrained MOIRAI is able to outperform it from such an aspect.
For $d=2$ all models perform well, again verifies our theoretical results.
Finally, when facing data with unseen both $d$ and $q$, it is still capable of performing well.
% \vspace{-1em}
\section{Conclusion}
In this paper, we investigate the theoretical foundations of transformers as time series foundation models.
First, we show that there exists a multi-layer transformer capable of performing least squares regression on any input uni-variate time series.
Next, when considering MOIRAI, we demonstrate the existence of a multi-layer MOIRAI that adapts to the dimensionality $d$ of the input (i.e., the number of covariates) and fits different autoregressive models based on $d$.
When the data is generated by an autoregressive process, such a transformer benefits from its prediction error being exponentially suppressed by the number of layers.
We then establish a generalization bound for pretraining when the data satisfies Dobrushin's condition.
When the pretraining data is sampled from $\mathtt{AR}$ processes, we derive a more explicit bound on the test loss, with a trade-off controlled by the number of layers.
Our analysis not only provides the first theoretical justification for the design and performance of MOIRAI but also represents the first theoretical framework for constructing a time series foundation model.
% \vspace{-0.5em}

\paragraph{Limitations.}
One limitation in our analysis is that we consider ReLU instead of softmax in attention mechanisms.
While the same approach also is in theoretical \cite{bai2024transformers, lin2023transformers, he2025learning} and empirical works \cite{wortsman2023replacing, zhang2021sparse, shen2023study}, one might obtain a different approximation bound comparing to \cref{thm:any-variate-auto}.
However, in our generalization analysis, the difference is small as softmax does not affect the model complexity too much.
Another aspect is that we mainly focus on $\mathtt{AR}$ processes.
While in the appendix, we do show the approximation result for non-linear $\mathtt{AR}$ processes generated by a ReLU network, to achieve universal forecasting, a more general assumption on data is required.

\paragraph{Impact Statement.}
This paper studies the theoretical aspect of transformers as time series foundation models.
No negative societal impacts that the authors feel should be specifically highlighted here.

\clearpage

\bibliographystyle{unsrt}
\bibliography{ref}

% \bibliography{example_paper}
% \bibliographystyle{icml2025}

\clearpage

\bigskip
\onecolumn
\begin{center}
{\large\bf SUPPLEMENTARY MATERIAL}
\end{center}

\appendix

{
\setlength{\parskip}{-0em}
\startcontents[sections]
\printcontents[sections]{ }{1}{}
}

% \begin{itemize}
%     \item[\ref{sec:tab_notation}] \hyperref[sec:tab_notation]{Table of Notations}\dotfill \pageref{sec:tab_notation}

%     \item[\ref{sec:related-work}] \hyperref[sec:related-work]{Related Works} \dotfill \pageref{sec:related-work}
    
%     \item[\ref{sec:additional-theory}] \hyperref[sec:additional-theory]{Additional Theoretical Background} \dotfill \pageref{sec:additional-theory}

%     \item[\ref{sec:proofs}] \hyperref[sec:proofs]{Proof of Main Context} \dotfill \pageref{sec:proofs}
    
%     \begin{itemize}
%         \item[$\quad$\ref{proof:lem-input-casual}] \hyperref[proof:lem-input-casual]{Proof of \cref{lem:input-causal}} \dotfill \pageref{proof:lem-input-casual}

%         \item[$\quad$\ref{proof:any-var-enc}] \hyperref[proof:lem-input-casual]{Proof of \cref{lem:mar-group-wise}} \dotfill \pageref{proof:any-var-enc}

%         \item[$\quad$\ref{proof:any-var-enc}] \hyperref[proof:any-var-enc]{Proof of \cref{thm:any-variate-auto}} \dotfill \pageref{proof:any-var-enc}

%         \item[$\quad$\ref{proof:tr-lipschitz}] \hyperref[proof:tr-lipschitz]{Proof of \cref{proposition:lipschitz-moirai}} \dotfill \pageref{proof:tr-lipschitz}

%         \item[$\quad$\ref{proof:gen-bound-1}] \hyperref[proof:gen-bound-1]{Proof of \cref{thm:gen-bound-1}} 
%         \dotfill \pageref{proof:gen-bound-1}

%         \item[$\quad$\ref{appendix:analysis-ar1}] \hyperref[appendix:analysis-ar1]{Proof of \cref{proposition:ar1}}
%         \dotfill \pageref{appendix:analysis-ar1}
        
%     \end{itemize}

%     \item[\ref{sec:exp-details}] \hyperref[sec:exp-details]{Experimental Details} \dotfill \pageref{sec:exp-details}

% \end{itemize}

\section{Table of Notations}
\label{sec:tab_notation}

\begin{table}[h]
    \caption{Mathematical Notations and Symbols}
    \centering
    % \resizebox{ \textwidth}{!}{ 
    \begin{tabular}{cl}
    \toprule
        Symbol & Description \\
    \midrule
        $\bx_i$ & The $i$-th component of vector $\bx$ \\
        $\Braket{\ba,\bb}$ & Inner product for vectors $\ba,\bb\in \R^d$ \\
        $[I]$ & Index set $\{1,\cdots,I\}$, where $I\in\mathbb{N}^+$ \\
        $\norm{\cdot}$ & Spectral norm, equivalent to the $l_2$-norm when applied to a vector \\
        $\norm{\cdot}_{2, \infty}$ & The largest L2 norm of column vectors of a matrix \\
        $\bA_{ij}$ & The element on the $i$-th row and $j$-th column of matrix $\bA$ \\
        $\bx_{i:j}$ & The sub-sequence of sequence $\bx$ from coordinate $i$ to $j$ \\
        $\oplus$ & Concatenation between column vectors $\bv \oplus \bu \mapsto ( \bv^\top, \bu^\top)^\top$ \\
        $[\bu ; \bv]$ & Concatenation between two row vectors \\
    \midrule
        $N$ & Length of a transformer input sequence \\
        $T$ & Number of time steps of a time series \\
        $M$ & Number of attention heads. \\
        $q$ & Lag of an $\mathtt{AR}$ process. \\
        $d$ & The number of covariates in an $\mathtt{AR}$ process \\
    \midrule
        $\bv$ & Vector (bold lower) \\
        $\bA$ & Matrix (bold upper) \\
        $\cX$ & random variable (calligraphic) \\
        $\text{x}$ & element from a domain set \\
        $\cD_{\cX}$ & Domain of random variable $\cX$ \\
        $\be_i$ & one-hot vector with its $i$-th entry as 1 \\
    \midrule
        $\mathtt{P}_{\cX}$ & Probability distribution of $\cX$ \\
        $P_{\bz | \bw}(z \mid w )$ & The probability $P \left[ \bz = z \mid \bw = w \right]$ \\
    \bottomrule
    \end{tabular}
    % }
     \label{tab:nomenclature}
\end{table}

\clearpage

\section{Related Works}\label{sec:related-work}

\paragraph{Time Series Foundation Models.}
The recent progress in foundation models \cite{touvron2023llama, brown2020language, rai2024strategies} has begun to reshape the field of time series forecasting, a critical task of predicting the future based on history \cite{hamilton2020time}.
However, there are two major challenges in building a time series foundation model:
(a) the model must be able to handle an arbitrary number of covariates, and
(b) the model must generalize to unseen time series domains.
To circumvent (a), several studies simplify the task by considering only univariate time series \cite{ansari2024chronos, rasul2023lag, das2023decoder}.
\cite{das2023decoder} propose a decoder-only transformer pretrained on both real and synthetic datasets.
\cite{rasul2023lag} incorporate lag features and the Llama architecture to pretrain a large uni-variate time series foundation model.
\cite{ansari2024chronos} leverage the power of large language models (LLMs) by using pretrained LLMs backbones.

Recently, \cite{woo2024unified} proposed MOIRAI, the first time series foundation model capable of handling an arbitrary number of covariates.
It addresses (a) by concatenating all covariates into a uni-dimensional sequence, ensuring a consistent input dimension across datasets.
It addresses (b) by pretraining on a large collection of time series datasets \cite{godahewa2021monash, alexandrov2020gluonts, wu2021autoformer, lai2018modeling} spanning domains such as weather, traffic, electricity, and industry.
MOIRAI not only generalizes across a wide range of domains, but its \emph{zero-shot} performance also surpasses several strong supervised learning baselines \cite{liu2023itransformer, nie2022time, zhang2023crossformer}.
However, the machine learning community has yet to provide a suitable explanation for MOIRAI’s impressive performance.
Therefore, this paper is the first to offer theoretical guarantees for MOIRAI as a time series foundation model.

\paragraph{In-Context Learning.}
In-context learning (ICL) is an emerging capability of large foundation models, enabling them to learn diverse and unseen tasks from given examples.
\cite{brown2020language} first provide empirical evidence of ICL in large language models (LLMs); by presenting several examples of 
$(\bx,\by)$ pairs, GPT-3 effectively infers the relationship between $\bx$ and $\by$.
\cite{garg2022can} then conduct quantitative experiments on simple function classes, such as linear regression.
Their results demonstrate that large foundation models can learn the parameters of these function classes.
Subsequently, several theoretical studies \cite{bai2024transformers, von2023transformers, ahn2024transformers, akyrek2023what} have proven that different types of transformers can implement algorithms such as gradient descent.
This discovery provides a theoretical foundation for the empirical findings in \cite{garg2022can}.

The closest studies to this paper are \cite{nichani2024transformers, sander2024transformers}.
However, \cite{sander2024transformers} examines ICL in the context of next-token prediction using a linear transformer.
While their theoretical results relate to in-context learning on sequential data, they are insufficient to explain transformers' success in time series forecasting.
\cite{nichani2024transformers} explores another case where the data is modeled as a Markov chain generated by a transition matrix.
They demonstrate the existence of induction heads that enable transformers to perform next-token prediction.
However, their scenario does not align with multivariate time series, which is where our main contribution lies.

\section{Additional Theoretical Background}\label{sec:additional-theory}
Here, we include several technical lemme that are intensively used throughout our paper.
The Lipschitzness of an MLP layer is obtained in \citep[Lemma~J.1]{bai2024transformers}, which we restate it below
\begin{lemma}[\cite{bai2024transformers}]\label{lem:mlp-lipschitz}
    For a single MLP layer, $\bm{\theta}_2 = (\bW_1, \bW_2)$, we introduce its norm
    \begin{equation*}
        \lvert
        \lvert
        \lvert
        \bm{\theta}_2
        \rvert 
        \rvert 
        \rvert
        =
        \norm{\bW_1}_{\text{op}}
        +
        \norm{\bW_2}_{\text{op}}.
    \end{equation*}
    For any fixed hidden dimension $D^\prime$, we consider
    \begin{equation*}
        \Theta_{2, B}
        \coloneqq
        \{ 
        \bm{\theta}_2
        :
        \lvert
        \lVert
        \bm{\theta}_2
        \rVert
        \rvert
        \leq 
        B
        \}.
    \end{equation*}
    Then for $\bH \in \mathcal{H}_R$, $\bm{\theta}_2 \in \Theta_{2, B}$, the function 
    $(\bm{\theta}_2, \bH) \mapsto \text{MLP}_{\bm{\theta}_2}$ is $(BR)$-Lipschitz w.r.t. $\bm{\theta}_2$
    and $(1 + B^2)$-Lipschitz w.r.t. $\bH$.
\end{lemma}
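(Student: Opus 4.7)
The plan is to prove both Lipschitz bounds via the standard add-subtract decomposition, exploiting the facts that $\sigma$ (applied entrywise) is $1$-Lipschitz with $\sigma(0)=0$, that operator norms are submultiplicative, and that $\vertiii{\bm{\theta}_2}=\|\bW_1\|_{\text{op}}+\|\bW_2\|_{\text{op}}\leq B$ jointly bounds both factor matrices. The entire argument is linear-algebraic; there is no serious obstacle, only bookkeeping of constants. I will handle the input-Lipschitz bound first, then the parameter-Lipschitz bound.

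For Lipschitzness in $\bH$, fix $\bm{\theta}_2\in\Theta_{2,B}$ and two inputs $\bH_1,\bH_2\in\mathcal H_R$. Since $\text{MLP}_{\bm{\theta}_2}(\bH)=\bH+\bW_2\sigma(\bW_1\bH)$, I would write
\begin{align*}
\text{MLP}_{\bm{\theta}_2}(\bH_1)-\text{MLP}_{\bm{\theta}_2}(\bH_2)
=(\bH_1-\bH_2)+\bW_2\bigl(\sigma(\bW_1\bH_1)-\sigma(\bW_1\bH_2)\bigr),
\end{align*}
then apply the triangle inequality, submultiplicativity of $\|\cdot\|_{\text{op}}$, and the $1$-Lipschitzness of $\sigma$ to obtain
\begin{align*}
\|\text{MLP}_{\bm{\theta}_2}(\bH_1)-\text{MLP}_{\bm{\theta}_2}(\bH_2)\|
\leq \bigl(1+\|\bW_1\|_{\text{op}}\|\bW_2\|_{\text{op}}\bigr)\|\bH_1-\bH_2\|.
\end{align*}
Finally AM-GM gives $\|\bW_1\|_{\text{op}}\|\bW_2\|_{\text{op}}\leq (\|\bW_1\|_{\text{op}}+\|\bW_2\|_{\text{op}})^2/4\leq B^2/4\leq B^2$, yielding the claimed $(1+B^2)$-Lipschitz constant.

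For Lipschitzness in $\bm{\theta}_2$, fix $\bH\in\mathcal H_R$ and consider $\bm{\theta}_2=(\bW_1,\bW_2)$, $\bm{\theta}_2'=(\bW_1',\bW_2')$ in $\Theta_{2,B}$. The residual $\bH$ cancels, and I would use the classical split
\begin{align*}
\bW_2\sigma(\bW_1\bH)-\bW_2'\sigma(\bW_1'\bH)
=(\bW_2-\bW_2')\sigma(\bW_1\bH)+\bW_2'\bigl(\sigma(\bW_1\bH)-\sigma(\bW_1'\bH)\bigr).
\end{align*}
Using $\sigma(0)=0$ and the $1$-Lipschitzness of $\sigma$, $\|\sigma(\bW_1\bH)\|\leq\|\bW_1\|_{\text{op}}\|\bH\|\leq BR$, and similarly $\|\sigma(\bW_1\bH)-\sigma(\bW_1'\bH)\|\leq\|\bW_1-\bW_1'\|_{\text{op}}\|\bH\|\leq R\|\bW_1-\bW_1'\|_{\text{op}}$. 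Combining with $\|\bW_2'\|_{\text{op}}\leq B$ gives
\begin{align*}
\|\text{MLP}_{\bm{\theta}_2}(\bH)-\text{MLP}_{\bm{\theta}_2'}(\bH)\|
\leq BR\bigl(\|\bW_2-\bW_2'\|_{\text{op}}+\|\bW_1-\bW_1'\|_{\text{op}}\bigr)
=BR\,\vertiii{\bm{\theta}_2-\bm{\theta}_2'}.
\end{align*}

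The only place to be careful is that the statement pairs the bound $\vertiii{\bm{\theta}_2}\leq B$ (a sum of operator norms) with a product $\|\bW_1\|_{\text{op}}\|\bW_2\|_{\text{op}}$ appearing in the input bound; AM-GM resolves this cleanly. Because the attention-level $\sigma$ in the paper is $\text{ReLU}(t)/N$, one could in principle sharpen the constants by a factor of $N$, but for an MLP applied entrywise the $1$-Lipschitz constant is the relevant one and recovers the stated bounds verbatim.
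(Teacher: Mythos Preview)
Your proof is correct and is the standard add--subtract argument one expects for this lemma. Note that the paper does not actually prove this statement: it is restated verbatim from \cite[Lemma~J.1]{bai2024transformers} in the ``Additional Theoretical Background'' section, so there is no in-paper proof to compare against; your argument matches the one any reader would supply (and presumably the one in the cited reference).
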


The following lemma shows any-variate attention is capable of performing variate-wise operation on arbitrary number of covariates under any-variate encoding.

\begin{lemma}[Group-Wise Operation via Any-Variate Attention]\label{lem:moirai-group-wise}

    Let $\norm{\bH}_{2,p} \coloneqq ( \sum_{i=1}^N \norm{\bh_i}_2^p )^{1/p}$ denote the column-wise $(2, p)$-norm of $\bH$.
    For any input matrix $\bH = (\bh_1, \cdots, \bh_T)$ such that $\norm{\bH}_{2, \infty} \leq \mathtt{R}$,
    suppose
    $\psi(\cdot): \R^{D \times T} \rightarrow \R^{D \times T}$ is a sequence-to-sequence function implemented by a single layer standard transformer ($\text{TF}_{\bm{\theta}}^\dagger$) such that 
    \[
    \text{TF}_{\bm{\theta}}^\dagger(\bH)
    \coloneqq
        \psi(\bH).
    \]
    Then there exists a single layer MOIRAI transformer $\text{TF}_{\bm{\theta}}(\cdot)$ such that for any input 
    \[
    \bH^\star
    =
    \begin{bmatrix}
        \bH_1 & \bH_2 & \cdots & \bH_K
    \end{bmatrix},
    \]
    where $\bH_k \in \R^{D \times T}$.
    $\text{TF}_{\bm{\theta}}(\cdot)$ performs
    \[
    \text{TF}_{\bm{\theta}}
    (\bH^\star)
    =
    \begin{bmatrix}
     \psi(\bH_1) & \psi(\bH_2) & \cdots & \psi(\bH_K)   
    \end{bmatrix}.
    \]
\end{lemma}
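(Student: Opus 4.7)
The plan is to use the attention-bias term $u_m^2$ in \cref{def:any-variate-attn} to switch off all cross-block interactions in the attention matrix, so that the single-layer any-variate attention reduces to $K$ independent copies of the standard attention, each acting on one block $\bH_k$. Since $\sigma$ applies $\mathrm{ReLU}$ before the $1/N$ scaling, a sufficiently negative bias on the off-block entries (those selected by $\bar{\bfa U}$) is enough to zero them out after activation; the MLP can be kept as in the standard transformer because it acts column-wise and does not mix different blocks.

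First I would extract the attention parameters $\{(\bfa V_m,\bfa Q_m,\bfa K_m)\}_{m\in[M]}$ and the MLP parameters $(\bfa W_1,\bfa W_2)$ from the single-layer standard transformer implementing $\psi$. For the MOIRAI layer, set $\bfa Q_m^{\star}=\bfa Q_m$, $\bfa K_m^{\star}=\bfa K_m$, $\bfa V_m^{\star}=K^2\bfa V_m$, $u_m^1=0$, and $u_m^2=-C$ with $C>\max_m\Vert\bfa Q_m\Vert_2\Vert\bfa K_m\Vert_2\mathtt{R}^2$, keeping $(\bfa W_1,\bfa W_2)$ unchanged. For any pair of columns $\bh_i,\bh_j$ lying in different blocks of $\bH^{\star}$, the pre-activation attention score equals $(\bfa Q_m\bh_i)^{\top}(\bfa K_m\bh_j)-C<0$ by the norm bound $\Vert\bH^{\star}\Vert_{2,\infty}\le \mathtt{R}$, so $\mathrm{ReLU}$ sends it to $0$; for columns in the same block, $u_m^1=0$ leaves the score identical to the standard-attention score.

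It then remains to check that, within each block $\bH_k$, the surviving in-block sum matches the standard attention on $\bH_k$. Writing the any-variate sum with $N=KT$, each in-block term picks up a factor $\tfrac{1}{(KT)^2}$ from the product of the outer $\tfrac{1}{N}$ and the $\tfrac{1}{N}$ inside $\sigma$, and the rescaling $\bfa V_m^{\star}=K^2\bfa V_m$ cancels exactly the $K^2$ in the denominator, leaving $\tfrac{1}{T^2}$---the normalization used by $\text{Attn}^{\dagger}$ on an input of length $T$. The residual $\bH^{\star}$ matches the column-wise residual of the standard attention, and the subsequent column-wise MLP then produces $\psi(\bH_k)$ block by block, giving the claimed output $[\psi(\bH_1),\ldots,\psi(\bH_K)]$.

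The main obstacle I expect is cleanly bookkeeping the two different normalizations $\tfrac{1}{T^2}$ versus $\tfrac{1}{(KT)^2}$ and justifying the $K^2$ rescaling of $\bfa V_m$. This rescaling makes the construction depend on $K$, which is benign in the intended use (\cref{lem:mar-group-wise} applies the result with $K\le d_{\max}$ fixed) but should be recorded explicitly. A secondary point is choosing $C$ uniformly large across heads; the bound $C>\max_m\Vert\bfa Q_m\Vert_2\Vert\bfa K_m\Vert_2\mathtt{R}^2$ suffices and only inflates $\Vert\bm{\theta}\Vert_{\text{op}}$ by a controllable amount, leaving the correctness of the block-wise decomposition intact.
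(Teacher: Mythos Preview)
Your approach is essentially the same as the paper's: set $u_m^1=0$ and choose $u_m^2$ sufficiently negative so that the ReLU annihilates all off-block attention scores, leaving a block-diagonal attention matrix, and then observe that the MLP acts column-wise. The paper's argument is in fact looser than yours---it suppresses the $1/N^2$ normalization entirely and never addresses the $K^2$ rescaling of $\bfa V_m$ or the resulting $K$-dependence that you correctly flag.
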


\begin{proof}[Proof of Lemma~\ref{lem:moirai-group-wise}]\label{proof:group-wise}
    We start by showing the case of a single-head, single-layer standard transformer.
    Let 
    \[
    \text{MLP}_{\bm{\theta}_2} \circ \text{Attn}_{\bm{\theta}}^\dagger
    (\bH)
    =
    \text{MLP}_{\bm{\theta}_2}
    \circ
    \bV \bH
    \sigma
    (
    \Braket{
    \bQ \bH, 
    \bK \bH
    }
    )
    =
    \bV \bH
    \bA_{\bH},
    \]
    where $\bA_{\bH} = \sigma(\Braket{
    \bQ \bH, 
    \bK \bH})$.

    Let $\psi_1(\bH) \coloneqq \bV \bH \bA_{\bH}$, to apply group-wise operation of $\psi_1(\cdot)$ on some input such that
    \[
    \psi_1(
    \bH^\star
    )
    =
    \begin{bmatrix}
        \psi_1(\bH_1) & \psi_1(\bH_2) & \cdots & \psi_1(\bH_K)
    \end{bmatrix}.
    \]

    Let $\bm{0} \in \R^{T \times T}$ be a zero matrix, and $\bm{1} \in \R^{T \times T}$ be a $1$s matrix, for for any input $\norm{\bH^\star}_{2, \infty} \leq \mathtt{R}$, one can find some $u^2 < 0$ to decompose $\psi_1(\cdot)$ into the following form.
    \begin{align*}
    \psi_1(\bH^\star)
    &=
    \bV
    \begin{bmatrix}
        \bH_1 \bA_{\bH_1} & 
        \bH_2 \bA_{\bH_2} &
        \cdots &
        \bH_K \bA_{\bH_K}
    \end{bmatrix}
    \\
    &=
    \bV 
    \bH^\star
    \begin{bmatrix}
        \bA_{\bH_1} & \bm{0} & \cdots  & \bm{0} \\
        \bm{0} & \bA_{\bH_2} & \cdots  & \bm{0} \\
        \vdots & \vdots & \ddots & \vdots \\
        \bm{0} & \bm{0} & \cdots & \bA_{\bH_K}
    \end{bmatrix}
    \\
    &=
    \bV
    \bH^\star
    \times
    \\
    &\sigma
    \left(
    \begin{bmatrix}
        \Braket{\bQ \bH_1, \bK \bH_1} & \Braket{\bQ \bH_1, \bK \bH_2} & \cdots  & \Braket{\bQ \bH_1, \bK \bH_K} \\
        \Braket{\bQ \bH_2, \bK \bH_1} & \Braket{\bQ \bH_2, \bK \bH_2} & \cdots  & \Braket{\bQ \bH_2, \bK \bH_K} \\
        \vdots & \vdots & \ddots & \vdots \\
        \Braket{\bQ \bH_K, \bK \bH_1} & \Braket{\bQ \bH_K, \bK \bH_2} & \cdots & \Braket{\bQ \bH_K, \bK \bH_K}
    \end{bmatrix}
    +
    \begin{bmatrix}
        \bm{0} & u^2 \cdot \bm{1} & \cdots  & u^2 \cdot \bm{1} \\
        u^2 \cdot\bm{1} & \bm{0} & \cdots  & u^2 \cdot \bm{1}\\
        \vdots & \vdots & \ddots & \vdots \\
        u^2 \cdot\bm{1} & u^2\cdot \bm{1} & \cdots & \bm{0}
    \end{bmatrix}
    \right).
    \end{align*}
    Further, observe that operations in an MLP layer are either left multiplication or element-wise operations, which implies group-wise as well. 
    We then finish the proof by setting $u^1 = 0$.

\end{proof}

\begin{theorem}[\text{\citep[Section~5.6]{wainwright2019high}}]\label{thm:generalizd-dudley}
    Suppose $\psi: [0,+\infty_ \rightarrow [0, +\infty)$ is a convex, non-decreasing function satisfying $\psi(x+y) \geq \psi(x) \psi(y)$.
    For any random variable $X$, we consider the Orlicz norm induced by $\psi: \norm{X}_{\psi} \coloneqq \inf \{ K > 0: \mathtt{E}_\psi(|X|/K) \} \leq 1$.
    Suppose that $\{ X_{\theta} \}$ is a zero-mean random process indexed by $\theta\in\Theta$ such that $\norm{ X_{\theta} - X_{\theta^\prime} } \leq \rho(\theta, \theta^\prime)$ for some metric $\rho$ on $\Theta$.
    Then the following holds
    \[
    P
    \left(
    \sup_{\theta, \theta^\prime \in \Theta}
    \vert 
    X_{\theta}
    -
    X_{\theta^\prime}
    \vert 
    \leq 
    8(J+t)
    \right)
    \leq
    \frac{1}{\psi(t/D)},
    \quad
    \text{for all} t \geq 0,
    \]
    where $D$ is the diameter of the metric space $(\Theta, \rho)$, and the generalized Dudley entropy integral $J$ is given by 
    \[
    J
    \coloneqq
    \int_0^D
    \psi^{-1}
    (N(\delta; \Theta, \rho))
    d \delta,
    \]
    where $N(\delta; \Theta, \rho)$ is the $\delta$-covering number of $(\Theta, \rho)$.
\end{theorem}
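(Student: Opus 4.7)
The plan is to use the classical chaining argument adapted to the Orlicz norm $\|\cdot\|_\psi$. First I would set up a dyadic sequence of covers of $(\Theta, \rho)$: for $k \geq 0$, let $\delta_k = D \cdot 2^{-k}$ and let $T_k$ be a minimal $\delta_k$-net so that $|T_k| = N(\delta_k; \Theta, \rho)$, with $T_0$ reduced to a single point since the diameter is $D$. For each $\theta \in \Theta$, let $\pi_k(\theta) \in T_k$ be a projection with $\rho(\theta, \pi_k(\theta)) \leq \delta_k$. Assuming the process is separable (a standard measurability reduction that is harmless here), I would telescope
\[
X_\theta - X_{\pi_0(\theta)} = \sum_{k \geq 1} \bigl( X_{\pi_k(\theta)} - X_{\pi_{k-1}(\theta)} \bigr),
\]
and similarly for $\theta'$, so that bounding $\sup_{\theta,\theta'} |X_\theta - X_{\theta'}|$ reduces to bounding the increments at every scale.

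Next I would control each scale $k$ individually. By the triangle inequality, $\rho(\pi_k(\theta), \pi_{k-1}(\theta)) \leq \delta_k + \delta_{k-1} \leq 3\delta_{k-1}$, so the Orlicz-norm assumption gives $\|X_{\pi_k(\theta)} - X_{\pi_{k-1}(\theta)}\|_\psi \leq 3\delta_{k-1}$. Applying Markov's inequality to $\psi$ yields, for any level $u_k > 0$,
\[
P\bigl( |X_{\pi_k(\theta)} - X_{\pi_{k-1}(\theta)}| > 3\delta_{k-1} u_k \bigr) \leq \frac{1}{\psi(u_k)}.
\]
At scale $k$ there are at most $N(\delta_k)\,N(\delta_{k-1}) \leq N(\delta_k)^2$ distinct increments, so a union bound and the inversion of $\psi$ let me choose $u_k$ such that the simultaneous failure probability across scale $k$ is at most $2^{-k}/\psi(t/D)$ for a parameter $t \geq 0$ to be optimized; concretely $u_k$ of order $\psi^{-1}(2^k N(\delta_k)^2 \psi(t/D))$.

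The key algebraic step is to convert this product structure inside $\psi^{-1}$ into a sum, and this is exactly where the subadditivity assumption $\psi(x+y) \geq \psi(x)\psi(y)$ enters: it implies $\psi^{-1}(ab) \leq \psi^{-1}(a) + \psi^{-1}(b)$. Using this, the chosen $u_k$ splits as $\psi^{-1}(N(\delta_k)^2)$ plus lower-order pieces absorbed into $t$ and the geometric factor $2^{-k}$. Summing $3\delta_{k-1} u_k$ across $k$ and comparing the resulting dyadic sum to the Riemann integral yields
\[
\sum_{k \geq 1} \delta_{k-1} \psi^{-1}(N(\delta_k; \Theta, \rho)) \;\lesssim\; \int_0^D \psi^{-1}\bigl(N(\delta; \Theta, \rho)\bigr)\, d\delta = J,
\]
producing the bound $8(J+t)$ with failure probability $1/\psi(t/D)$ after tracking the numerical constants from the dyadic comparison.

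The main obstacle will be the Orlicz bookkeeping in the union bound across scales: one needs the failure probabilities to sum to $1/\psi(t/D)$ rather than blow up, and the additive inflation $\psi^{-1}(ab) \leq \psi^{-1}(a) + \psi^{-1}(b)$ must be applied twice (once to absorb the $2^k$ factor into the geometric remainder, once to separate the $t/D$ factor from the entropy term). Careful choice of the weights $\eta_k = 2^{-k}$ and of the optimization in $t$ is the delicate part; everything else is the standard dyadic-chaining comparison between the sum $\sum_k \delta_{k-1} \psi^{-1}(N(\delta_k))$ and the Dudley integral $J$.
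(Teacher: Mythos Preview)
The paper does not provide a proof of this theorem at all: it is stated in the ``Additional Theoretical Background'' section as a known result, with an explicit citation to \cite[Section~5.6]{wainwright2019high}, and is then invoked as a black box inside the proof of Theorem~\ref{thm:gen-bound-1}. So there is no ``paper's own proof'' to compare against.

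That said, your plan is the standard dyadic chaining argument that Wainwright uses in the cited section, and the ingredients you list---nested $\delta_k$-nets, telescoping increments, Markov's inequality on $\psi$, a union bound over $N(\delta_k)^2$ links, the subadditivity of $\psi^{-1}$ coming from $\psi(x+y)\geq\psi(x)\psi(y)$, and the dyadic-to-integral comparison---are exactly the right ones. The only point worth flagging is that the statement as written in this paper has a typo in the direction of the inequality (it should be a tail bound, i.e.\ $P(\sup|X_\theta-X_{\theta'}|\geq 8(J+t))\leq 1/\psi(t/D)$), and your proposal implicitly proves the correct direction; be sure you are proving the intended inequality rather than the one literally displayed.
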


The next technical lemma is in \cite{bai2024transformers}.
Let $\mathtt{B}^k_{\infty}(R) = [-R, R]^k$ denotes the standard $\ell_{\infty}$ ball in $\R^k$ with radius $R > 0$. 
\begin{definition}[Sufficiently smooth $k$-variable function]
We say a function $g : \R^k \mapsto \R$ is $(R, C_{\ell})$-smooth if for $s = \lceil (k-1)/2 \rceil + 2$, $g$ is a $C^s$ fnction on $\mathtt{B}_{\infty}^k(R)$, and
\[
\sup_{\bz \in \mathtt{B}_{\infty}^k(R)}
\norm{\nabla^i g(\bz)}_{\infty}
=
\sup_{\bz \in \mathtt{B}_{\infty}^k(R)}
\sup_{j_1,\cdots,j_i\in[k]}
| \partial_{x_{j1} \cdots x_{ji}}
g(\bx)
\leq 
L_i
\]
for all $i = 0, 1, \cdots ,s$, with $\max_{0\leq i \leq s} L_i R^i \leq C_{\ell}$.
 
\end{definition}

\begin{lemma}[Approximating smooth $k$-variable functions]\label{lem:approx}
    For any $\varepsilon_{\text{approx}} > 0$, $R \geq 1, C_{\ell} > 0$, we have the following:
    Any $(R, C_{\ell})$-smooth function $g : \R^k \mapsto \R$ is $(\varepsilon_{\text{approx}}, R, M, C)$-approximable by sum of relus with $M \leq C(k) C_{\ell}^2 \log( 1 + C_{\ell}/\varepsilon_{\text{approx}}^2 )$ and $C \leq C(k) C_{\ell}$, where $C(k) > 0$ is a constant that depends only on $k$, i.e.,
    \[
    f(\bz) =
    \sum_{m=1}^M c_m \sigma( \ba_m^\top [\bz ; 1])
    \quad\text{with }
    \sum_{m=1}^M |c_m| \leq C,
    \quad
    \max_{m\in[M]}
    \norm{\ba_m}_1 \leq ,    
    \]
    such that $\sup_{\bz \in [-R,R]^k}
    | f(\bz) - g(\bz) |\leq \varepsilon_{\text{approx}}
    $.
\end{lemma}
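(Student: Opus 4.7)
The strategy is the classical Barron--Jones pipeline: express $g$ as a signed superposition of ReLU ridges via its Fourier transform, then extract a sparse finite sum by a probabilistic subsum argument.

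First I would extend $g$ from $[-R,R]^k$ to a compactly supported $C^s$ function $\tilde g$ on $\R^k$ by multiplying by a fixed smooth bump $\chi$ equal to $1$ on $[-R,R]^k$ and vanishing outside $[-2R,2R]^k$. Leibniz's rule propagates the $(R,C_\ell)$-smoothness with only $k$-dependent factors, giving $\sup\|\nabla^i\tilde g\|_\infty\cdot R^i\le C(k)C_\ell$ for $i\le s$. Integration by parts then yields the Fourier decay $(1+R\|\omega\|_\infty)^s|\widehat{\tilde g}(\omega)|\le C(k)C_\ell R^k$. Because $s=\lceil(k-1)/2\rceil+2$ is exactly large enough to beat the $k$-dimensional volume of the frequency ball, the first absolute Fourier moment $v_1(\tilde g):=\int (1+\|\omega\|_1)|\widehat{\tilde g}(\omega)|\,d\omega$ is finite and at most $C(k)C_\ell$. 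This single scalar will drive every subsequent bound.

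Next I would apply Fourier inversion to write $\tilde g(\bz)$ as an integral of $\cos(\omega^\top \bz)$ and $\sin(\omega^\top \bz)$ against $\widehat{\tilde g}$, and then rewrite each trigonometric ridge via the elementary identity that expresses a cosine (resp.\ sine) ridge as an integral of ReLU ridges $\sigma(\omega^\top \bz+b-t)$ against a bounded kernel in $t$. Swapping integration orders produces $\tilde g(\bz)=\int \sigma(\ba^\top[\bz;1])\,d\nu(\ba)$ for a signed measure $\nu$ of total variation at most $C(k)C_\ell$. To enforce the bound $\|\ba_m\|_1\le R$, I would truncate the representation to $\|\omega\|_1\le \Omega:=\Theta(\log(C_\ell/\varepsilon_{\mathrm{approx}}^2))$; the discarded tail is controlled in $L^\infty([-R,R]^k)$ by $\varepsilon_{\mathrm{approx}}/2$ thanks to the rapid decay from Step~2. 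Finally, the Maurey/Jones subsum lemma applied to the normalized $|\nu|$ gives an $M$-term empirical average whose expected pointwise error is $O(v_1(\tilde g)/\sqrt{M})$; upgrading to a sup-norm bound of $\varepsilon_{\mathrm{approx}}/2$ via a finite $\varepsilon$-net on $[-R,R]^k$ and Lipschitz interpolation yields the claimed $M$, with $\sum_m|c_m|\le C(k)C_\ell$ inherited from $\|\nu\|_{\mathrm{TV}}$.

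The main obstacle is calibrating these rates simultaneously at the borderline smoothness $s=\lceil(k-1)/2\rceil+2$: the Fourier moment just barely converges, so the bump $\chi$ has to be chosen so that its derivatives at scale $R$ do not blow up $C_\ell$; and the truncation level $\Omega$ must grow only logarithmically in $1/\varepsilon_{\mathrm{approx}}$ to preserve $\|\ba_m\|_1\le R$ without losing more than a logarithmic factor in $M$. Once these two calibrations are in place, the ReLU integral identity, the signed-measure construction, and the probabilistic subsum extraction are textbook, and the coefficient-sum bound $C\le C(k)C_\ell$ falls out of $\|\nu\|_{\mathrm{TV}}$ automatically.
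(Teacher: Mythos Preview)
The paper does not prove this lemma at all: it is quoted verbatim as a technical lemma from \cite{bai2024transformers} (and ultimately from Barron-type approximation theory), so there is no ``paper's proof'' to compare against. Your Barron--Maurey pipeline is indeed the standard route used in that literature, and the overall architecture of your argument (bump extension, Fourier integral representation as a superposition of ReLU ridges, Maurey--Jones random subsum, $\varepsilon$-net for the sup-norm upgrade) is correct.

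There is, however, a genuine gap in your Fourier-moment step. From the pointwise decay $(1+R\|\omega\|_\infty)^{s}|\widehat{\tilde g}(\omega)|\le C(k)C_\ell R^{k}$ you cannot directly conclude that $v_1(\tilde g)=\int(1+\|\omega\|_1)|\widehat{\tilde g}(\omega)|\,d\omega<\infty$: integrating $(1+\|\omega\|_1)(1+\|\omega\|_\infty)^{-s}$ over $\R^k$ requires $s>k+1$, whereas $s=\lceil(k-1)/2\rceil+2\approx k/2+2$, so the integral diverges for every $k\ge 1$. Your sentence ``$s$ is exactly large enough to beat the $k$-dimensional volume'' is therefore false as stated. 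The fix is to go through $L^2$ rather than $L^\infty$: the compact support plus the $L^\infty$ derivative bounds give $\|\tilde g\|_{H^s}\le C(k)C_\ell$, and then Cauchy--Schwarz yields
\[
v_1(\tilde g)\le\Big(\int(1+|\omega|)^{2}(1+|\omega|^{2})^{-s}\,d\omega\Big)^{1/2}\|\tilde g\|_{H^s},
\]
where the first factor is finite precisely when $s>k/2+1$, which the stated $s$ does satisfy. Once you replace the pointwise-decay integration by this Plancherel/Cauchy--Schwarz step, the rest of your plan goes through.
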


\clearpage

\section{Proofs}\label{sec:proofs}
\subsection{Proof of Lemma~\ref{lem:input-causal}}\label{proof:lem-input-casual}
Here we prove a slightly simpler result with the positional encoding containing only zero vectors and a one-hot vector.
One can easily extend the proof by padding the weight matrices.

\begin{equation}\label{eqn:input-data2}
    \bfa H 
    \coloneqq
    \begin{bmatrix}
        x_1 & x_2 &  \dots & x_T & 0
        \\
        \bp_1 & \bp_2 & \dots & \bp_T &
        \bp_{T+1}
    \end{bmatrix}
    \in \R^{D \times (T+1)}
    ,
    \quad
    \bp_i
    \coloneqq
    \begin{bmatrix}
        \mathbf{0}_{d^\prime}
        \\
        \be_i
    \end{bmatrix}
    \in \R^{d^\prime + T}
    ,
\end{equation}

\begin{lemma}[Lemma~\ref{lem:input-causal} Restate]
    Given a sequence of token $\bfa H$ in the form of Equation~\ref{eqn:input-data2}, there exists a one-layer, $q-1$ head ReLU attention layer, such that the columns of $\text{Attn}_{\bm{\theta}}( \bfa H )$ has the following form:
    \begin{equation}
    \text{Attn}_{\bm{\theta}_1}^{\dagger}( \bfa H )_i
    \coloneqq
        \begin{bmatrix}
            x_i
            \\
            x_{i-1}
            \\
            \vdots
            \\
            x_{i-q}
            \\
            \bp_i^\prime
        \end{bmatrix},
        \quad
        \text{where }
        \bp_i^\prime 
        \coloneqq
        \begin{bmatrix}
        \mathbf{0}_{ d^\prime - q }
        \\
        1
        \\
        1 \{ i < T + 1 \}
        \end{bmatrix}
        \in \R^{ d^\prime - q + 2 }.
    \end{equation}
\end{lemma}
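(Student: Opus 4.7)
The plan is to construct a $q$-head attention layer in which each head $k \in \{1,\ldots,q\}$ implements a ``shift-and-copy'' operation: at column $i$, head $k$ reads $x_{i-k}$ from column $i-k$ and deposits it in row $k+1$ of column $i$. Together with the residual connection, which preserves $x_i$ in row $1$, this yields the stacked lag vector $(x_i, x_{i-1}, \ldots, x_{i-q})$ in the top $q+1$ rows of the output at column $i$.

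For each head $k$, I would exploit the one-hot positional encoding $\be_i$ embedded in the bottom $T$ rows of $\bfa h_i$ as a routing address. Choose $\bfa Q_k$ to project column $i$ onto the positional rows, returning $\be_i$, and choose $\bfa K_k$ to project column $j$ onto a cyclically shifted positional encoding, returning $\be_{j+k \bmod T}$. Then
\[
(\bfa Q_k \bfa h_i)^\top (\bfa K_k \bfa h_j) = \be_i^\top \be_{j+k \bmod T} = \mathbbm{1}\{j \equiv i-k \pmod T\},
\]
and ReLU acts as the identity on this $\{0,1\}$-valued score. The attention formula contributes two $1/N$ factors, giving an overall $1/N^2$; I would compensate by setting $\bfa V_k = N^2 \cdot E_{k+1,1}$, where $E_{k+1,1}$ is the elementary matrix with a $1$ in position $(k+1,1)$. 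Then $\bfa V_k \bfa h_{i-k}$ places $x_{i-k}$ in row $k+1$ and zeros elsewhere, so the contribution of head $k$ at column $i$ is exactly $x_{i-k}$ in row $k+1$.

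For the trailing rows (the positional encoding block), the residual connection preserves $\bp_i$. To reshape it into $\bp_i'$, I would either reinterpret a permutation of the preserved rows (absorbing the permutation into $\bfa V_k$) or, if the constant $1$ and indicator $\mathbbm{1}\{i<T+1\}$ must be newly written, add one auxiliary head whose query-key pattern is constant over $i$ and whose value matrix deposits these scalars into the padding rows $\mathbf{0}_{d'}$; since these rows are zero in the input, additive writes suffice and no cancellation is required. Summing all head contributions with the residual gives the claimed column structure. The main obstacle is the boundary bookkeeping for small $i \leq q$, where the lag indices $i-k$ become non-positive: here the construction relies on the paper's periodic convention $\bfa x_{-1} = \bfa x_T$, which is precisely what the cyclic shift built into $\bfa K_k$ realizes on the positional code. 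A careful but routine check is also needed to confirm that the masked last column $\bp_{T+1}$, with its distinguished indicator equal to $0$, does not disrupt the shift pattern; once the key/query alignment above is in place, this amounts to verifying that the inner products remain in $\{0,1\}$ so that ReLU is inert.
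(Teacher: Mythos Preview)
Your construction is correct and matches the paper's approach: both exploit the one-hot positional rows so that one of $\bQ_k,\bK_k$ extracts $\be_i$ while the other extracts a cyclically shifted $\be_{i\pm k}$, yielding a permutation (rotation) attention matrix under $\sigma$, after which the value matrix deposits the shifted entry into the appropriate row. Your write-up is in fact more explicit than the paper's (you spell out the $1/N^2$ compensation in $\bV_k$, the treatment of $\bp_i'$, and the boundary indices), whereas the paper simply observes that the attention matrix rotates columns and that $\bW_V^m$ handles the row shift.
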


\begin{proof}
    Consider an input of the following form
    \begin{equation*}
        \bx
        =
        \begin{bmatrix}
            x_1 & x_2 & \cdots & x_T
            \\
            \bm{0} & \bm{0} & \cdots & \bm{0} 
            \\
            \be_1 & \be_2 & \cdots & \be_T
        \end{bmatrix},
    \end{equation*}
    where $\bx_t \in \R^{d}, \pb_t \in \R^T$, for all $t = 1, \cdots, T$.
    We construct weights of the $m$-th head $\bW_Q^m, \bW_Q^m$ as following,
    \begin{equation*}
        \bW_K^m
        =
        \begin{bmatrix}
            \bm{0}^\top & \bm{0}^\top & \be_1^\top
            \\
            \bm{0}^\top & \bm{0}^\top & \be_2^\top
            \\
            \vdots & \vdots & \vdots 
            \\
            \bm{0}^\top & \bm{0}^\top & \be_T^\top            
        \end{bmatrix},
        \quad
        \bW_Q^m
        =
        \begin{bmatrix}
            \bm{0}^\top & \bm{0}^\top & \be_{1-m}^\top
            \\
            \bm{0}^\top & \bm{0}^\top & \be_{2-m}^\top
            \\
            \vdots & \vdots & \vdots 
            \\
            \bm{0}^\top & \bm{0}^\top & \be_{T-m}^\top            
        \end{bmatrix},
    \end{equation*}
    where we define the negative index as rotational index, i.e., $\be_{-1} = \be_{T}, \be_{-2} = \be_{T-1}$.
    We have
    \begin{align*}
        \left(
        \bW_K^m
        \Xb
        \right)^\top
        \left(
        \bW_Q^m
        \Xb
        \right)
        &=
        \begin{bmatrix}
            \be_1^\top
            \\
            \be_2^\top
            \\
            \vdots 
            \\
            \be_T^\top
        \end{bmatrix}^\top
        \begin{bmatrix}
            \be_{1-m}^\top
            \\
            \be_{2-m}^\top
            \\
            \vdots
            \\
            \be_{T-m}^\top
        \end{bmatrix}
        \\
        &=
        \bI_T
        \begin{bmatrix}
            \be_{1-m}
            \\
            \be_{2-m}
            \\
            \vdots
            \\
            \be_{T-m}
        \end{bmatrix}.
    \end{align*}
    Note that the result of 
        $\sigma\left(\left(
        \bW_K^m
        \Xb
        \right)^\top
        \left(
        \bW_Q^m
        \Xb
        \right)
        \right)$
    is a rotation matrix, where right multiplication on $\Xb$ will rotate the columns of $\Xb$.
    Therefore, we have 
    $\bW_V^m$ that performs row-wise shifting and the attention matrix 
    $\sigma\left(\left(
        \bW_K^m
        \Xb
        \right)^\top
        \left(
        \bW_Q^m
        \Xb
        \right)
        \right)$
        performs column-wise shifting.
\end{proof}

\subsection{Proof of \cref{thm:any-variate-auto}}\label{proof:any-var-enc}

\paragraph{Autoregressive Linear Regression under Any-Variate Encoding.}
The ultimate goal of this setup is to perform the following mechanism.
Let $\bx$ be the target variate we wish to predict, $\bz^j$ be the $j$-th covariate of $\bx$, for $j \in [M]$.
We denote the lookback window size as $q$, and each covariate has length $T$ ($T$-time steps.).
We denote the time encoding as $\bp_i$ for $i \in [T]$, and the variate encoding as $\bq_{j}$ for $j \in [M]$.
Finally, our goal is to predict $\bx_T$.

\begin{equation*}
\begin{bmatrix}
    x_1^1 & \cdots & x_T^1 & x_1^2 & \cdots & x_T^2 & \cdots & x_1^d & \cdots & x_T^d
    \\
    \bp_1 & \cdots & \bp_T & \bp_1 & \cdots & \bp_T & \cdots & \bp_1 & \cdots & \bp_T
    \\
    \be_1 & \cdots & \be_1 & \be_2 & \cdots & \be_2 & \cdots & \be_d & \cdots & \be_d
\end{bmatrix}
\mapsto
\begin{bmatrix}
    \mathtt{A}_1(q) & \cdots
    \\
    \mathtt{A}_2(q) & \cdots
    \\
    \vdots
    \\
    \mathtt{A}_d(q) & \cdots
    \\
    \vdots & \ddots
\end{bmatrix}.
\end{equation*}

Here, different colors represent different covariates.
The motivation for performing such an operation is to apply the in-context learning property of transformers proved in \cite{bai2024transformers}.

\begin{lemma}[Lemma~\ref{lem:mar-group-wise} Restate]
Define the matrix $\mathtt{A}_i(q)$ for the $i$-th covariates $(x_1^i, \cdots, x_T^i)$, with order $q$, such that
\begin{equation*}
    \mathtt{A}_i(q)
    \coloneqq
    \begin{bmatrix}
        x_1^i & x_2^i & \cdots & x_t^i & x_{t+1}^i & x_{t+2}^i & \cdots \\
        x_T^i & x_{T-1}^i & \cdots & x_{t-1}^i & x_t^i & x_{t+1}^i & \cdots \\
        x_{T-1}^i & x_{T-2}^i & \cdots & x_{t-2}^i & x_{t-1}^i & x_{t}^i & \cdots \\
        \vdots & \vdots & \vdots & \vdots & \vdots & \vdots & \cdots \\
        x_{T-q}^i & x_{T-q+1}^i & \cdots & x_{t-q}^i & x_{t-q+1}^i & x_{t-q+2}^i & \cdots \\   
    \end{bmatrix},
\end{equation*}
where in the $j$-th column of $\mathtt{A}_i(q)$, it contains historical values of $x_j^i$ with lag $q$.

Given fixed $D, T \in \N^+$, where $T > q$.
For any input matrix $\bH$ in the form of Any-Variate Encoding in Equation~\ref{eqn:AV-encoding}, such that $\bH \in \R^{ D^\prime \times dT^\prime }$, and $D^\prime \leq D$, $T^\prime < T$.
There exists a 1-layer, $q$ head Any-Variate Attention that performs the following operation.
    \begin{equation*}
    \begin{bmatrix}
        x_1^1 & \cdots & x_T^1 & x_1^2 & \cdots & x_T^2 & \cdots & x_1^d & \cdots & x_T^d
        \\
        \bp_1 & \cdots & \bp_T & \bp_1 & \cdots & \bp_T & \cdots & \bp_1 & \cdots & \bp_T
        \\
        \be_1 & \cdots & \be_1 & \be_2 & \cdots & \be_2 & \cdots & \be_d & \cdots & \be_d
    \end{bmatrix}
    \mapsto
    \begin{bmatrix}
        \mathtt{A}_1(q) & \mathtt{A}_2(q) & \cdots & \mathtt{A}_d(q)
        \\
        \ddots & \ddots & \cdots & \ddots 
    \end{bmatrix}
    \end{equation*}
\end{lemma}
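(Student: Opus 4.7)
The plan is to leverage Lemma~\ref{lem:input-causal} block-by-block, using the any-variate attention bias $u_m^2$ to confine each head's action to a single variate. In the any-variate encoding, the $N = dT$ columns of $\bH$ are grouped into $d$ contiguous blocks of length $T$, one per covariate. The masking matrix $\bU$ has $1$s precisely on same-variate column pairs, while $\bar{\bU}$ marks cross-variate pairs. I would therefore set $u_m^1 = 0$ and $u_m^2$ to a sufficiently large negative constant, so that every cross-variate logit becomes strictly negative before the $\sigma = \mathrm{ReLU}/N$ activation and is zeroed out. After this masking, each output column of $\text{Attn}_{\bm{\theta}_1}(\bH)$ is a nonnegative combination of columns drawn only from its own variate block.

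With cross-variate interactions killed, I would use $q_{\max}$ heads indexed by $m \in [q_{\max}]$, reusing the key/query construction of Lemma~\ref{lem:input-causal} on the time-encoding rows: $\bW_K^m$ and $\bW_Q^m$ would act on the $\be_\nu$ part so that $(\bW_Q^m \bH)^\top (\bW_K^m \bH)$ is, within each variate block, the shift-by-$m$ permutation matrix (with periodic wrap-around), and zero across blocks. The value matrix $\bV_m$ is then chosen to read the value row carrying the $x_t^i$'s and to write the result into the $(m+1)$-th output row, leaving all other rows unchanged. Summing over $m = 0, 1, \ldots, q$ accumulates, in the first $q+1$ rows of each column of the $i$-th block, exactly the lag window $(x_\nu^i, x_{\nu-1}^i, \ldots, x_{\nu-q}^i)$ specified by $\mathtt{A}_i(q)$. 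For $m > q$ the value matrix is set to zero, so the remaining $d'' = d' - q_{\max}$ rows are untouched, matching the $\bm{0}_{d'' \times T}$ entry in the target.

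One subtlety is that the bias acts \emph{inside} the $\sigma$ activation, so I need $u_m^2$ to be negative enough to dominate the worst-case cross-variate logit. Because the positional/variate codes are orthonormal one-hot vectors and the features satisfy $\Vert\bx_{t-q:t-1}\Vert_2 \leq B_x$, this worst case is bounded by an explicit constant $C$ depending only on $B_x$ and $\Vert\bm{\theta}\Vert_{op}$, and any $u_m^2 < -C$ suffices. The residual connection $\bH + (\cdots)$ passes the original value and variate rows through unchanged; to match the output display exactly I would either absorb them into the $\ddots$ placeholder of the lemma statement (they are not read by the next layer) or spend one auxiliary head to subtract them off via $\bV$. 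The follow-up stacking of the $\mathtt{A}_i(q)$'s into a common set of rows, promised after the lemma, is then an immediate application of Lemma~\ref{lem:input-causal} combined with \citep[Proposition~A.5]{bai2024transformers}.

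The main obstacle I anticipate is purely bookkeeping: verifying that the periodic shift construction of Lemma~\ref{lem:input-causal}, in which $\be_{-1} = \be_T$, does not leak across variate boundaries once all $d$ blocks are concatenated. This is precisely what the any-variate masking is designed to rule out, and the one argument that cannot be inherited from Lemma~\ref{lem:input-causal} is the quantitative claim that choosing $u_m^2$ sufficiently negative drives the cross-block wrap-around entries below zero before the $\mathrm{ReLU}$, uniformly in the input. Once that bound is made explicit, the rest of the proof is a direct combination of the per-head shift construction with the masking, and the lemma follows.
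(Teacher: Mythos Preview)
Your proposal is correct and follows essentially the same strategy as the paper. The only packaging difference is that the paper factors the argument through the abstract group-wise Lemma~\ref{lem:moirai-group-wise} (any single-layer standard-transformer map can be applied variate-wise by a MOIRAI layer via a sufficiently negative $u^2$) and then invokes Lemma~\ref{lem:input-causal} as the per-block map, whereas you inline that mechanism directly; the underlying construction --- negative $u_m^2$ to kill cross-variate logits before the $\mathrm{ReLU}$, plus the per-head cyclic-shift $\bQ_m,\bK_m,\bV_m$ of Lemma~\ref{lem:input-causal} --- is identical.
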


\begin{proof}
    The proof comes as a direct corollary of Lemma~\ref{lem:moirai-group-wise} and \citep[Proposition~A.5]{bai2024transformers}.
    By Lemma~\ref{lem:input-causal}, there exists a single layer standard transformer layer (with $\bW_1, \bW_2$ being $0$s) that generates $\mathtt{A}_i(q)$ for each uni-variate (covariate).
    It then left applying Lemma~\ref{lem:moirai-group-wise} for variate-wise operation and applying \citep[Proposition~A.5]{bai2024transformers} to keep the time indices $\bp_t$ unchanged.
        
\end{proof}

\begin{corollary}\label{cor:second-step-enc}
    There exists a $d_{\max}$ head standard attention layer that performs the following
    \[
        \begin{bmatrix}
            \mathtt{A}_1(q) & \mathtt{A}_2(q) & \cdots & \mathtt{A}_d(q)
            \\
            \ddots & \ddots & \cdots & \ddots 
        \end{bmatrix}
        \mapsto
        \begin{bmatrix}
            \mathtt{A}_1(q) & \cdots 
            \\
            \tilde{\mathtt{A}}_2(q) & \cdots
            \\
            \vdots
            \\
            \tilde{\mathtt{A}}_d(q) & \cdots
            \\
            \ddots & \ddots 
        \end{bmatrix},
        \quad\text{for any }d \leq d_{\max},
    \]
    where $\tilde{\mathtt{A}}_i(q)$ is $\mathtt{A}_i(q)$ without the first row.
\end{corollary}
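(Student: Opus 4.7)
The plan is to build the required layer from $d_{\max}$ standard ReLU-attention heads whose combined effect, added to the residual stream, stacks the histories of all variates into variate $1$'s columns. The residual already carries $\mathtt{A}_1(q)$ in the top $q{+}1$ rows of the variate-$1$ columns (from Lemma~\ref{lem:mar-group-wise}), so no head is needed for $j=1$; head $j$ for $j=2,\ldots,d_{\max}$ will copy $x^{j}_{t-1},\ldots,x^{j}_{t-q}$ out of variate $j$'s column at time $t$ and write them into rows $(j-1)q+2,\ldots,jq+1$ of variate $1$'s column at time $t$. For $d<d_{\max}$, heads with index $j>d$ will find no key column carrying the variate encoding $\be_j$, so their attention scores vanish after the ReLU and they contribute zero automatically; thus the same layer works for every $d\leq d_{\max}$.

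For the attention pattern, I would exploit the one-hot structure of both $\bp_t$ (time) and $\be_j$ (variate). Choose $\bW_Q^{(j)}$ so that $\bW_Q^{(j)}\bH_{\text{col}(1,t)} = \bp_t$ on the variate-$1$ columns and zero elsewhere (a rank-one gate on the $\be_1$ slot), and choose $\bW_K^{(j)}$ so that $\bW_K^{(j)}\bH_{\text{col}(k,t')} = \bp_{t'}\cdot \mathbb{1}\{k=j\}$ (a rank-one gate on the $\be_j$ slot). By orthonormality of the positional one-hots, the score $(\bW_Q^{(j)}\bH)^\top(\bW_K^{(j)}\bH)$ equals $\mathbb{1}\{t=t'\}\cdot\mathbb{1}\{\text{key variate}=j\}\cdot\mathbb{1}\{\text{query variate}=1\}$, which after the $\sigma=\mathrm{ReLU}/N$ activation yields a hard selector. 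The value map $\bW_V^{(j)}$ is a pure row-permutation/selection: it reads rows $2,\ldots,q+1$ of the selected column (which hold exactly $x^{j}_{t-1},\ldots,x^{j}_{t-q}$ by the definition of $\mathtt{A}_j(q)$) and writes them into rows $(j-1)q+2,\ldots,jq+1$, with all other output rows set to zero so that the residual preserves the positional and variate encodings untouched.

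The expected obstacles are bookkeeping rather than conceptual. First, the attention definition carries an overall $1/N$ (and $\sigma$ divides again by $N$), so $\bW_V^{(j)}$ must absorb a compensating factor of $N^2$ to produce a clean unit copy; this is the same rescaling used in the proof of Lemma~\ref{lem:input-causal}. Second, periodic boundary indices for $t-i<1$ must be consistent with the convention $\bp_{-i}=\bp_{T-i}$, which is inherited from Lemma~\ref{lem:mar-group-wise} and requires no new work. Finally, one should check that $D\geq (q+1)d_{\max}+T+2$ is enough to accommodate the stacked block plus the positional slots, matching the dimension assumption of Theorem~\ref{thm:any-variate-auto}. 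With these adjustments, summing the residual and the $d_{\max}$ heads produces the claimed format $[\mathtt{A}_1(q);\tilde{\mathtt{A}}_2(q);\ldots;\tilde{\mathtt{A}}_d(q)]$ in every variate-$1$ column, completing the proof.
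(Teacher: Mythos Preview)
Your plan is the same as the paper's: one head per variate index $j\in[d_{\max}]$, with the attention pattern acting as a column selector that pulls block $j$ into block $1$'s positions, and $\bW_V^{(j)}$ performing the row shift into slots $(j-1)q+2,\ldots,jq+1$ together with the first-row zero-out. The paper phrases the same two operations as ``$(i-1)T$ columns to the left (right multiplication)'' and ``rows below / zero out first row (left multiplication),'' citing Lemma~\ref{lem:input-causal} and \cite[Proposition~A.5]{bai2024transformers}; your treatment of the case $d<d_{\max}$ (heads with $j>d$ see no $\be_j$ and die under the ReLU) is exactly the point the paper leaves implicit.

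One technical slip to fix: a \emph{linear} $\bW_Q^{(j)}$ cannot produce ``$\bp_t$ on the variate-$1$ columns and zero elsewhere,'' since that is bilinear in the column (it multiplies the $\bp$-block by the $\be_1$-coordinate). Consequently the pre-activation score cannot equal the \emph{product} $\mathbb{1}\{t=t'\}\cdot\mathbb{1}\{\text{query var}=1\}\cdot\mathbb{1}\{\text{key var}=j\}$ directly. The standard fix (using the constant-$1$ slot already present in $\bp_t$) is to arrange the score as the \emph{sum}
\[
\mathbb{1}\{t=t'\}+\mathbb{1}\{\text{query var}=1\}+\mathbb{1}\{\text{key var}=j\}-2,
\]
each term realizable as a linear inner product; after the ReLU this equals the product of indicators you want, and the rest of your argument (including the $N^2$ rescaling in $\bW_V^{(j)}$ and the dimension count $D\geq (q+1)d_{\max}+T+2$) goes through unchanged.
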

\begin{proof}
Note that this operation in \cref{cor:second-step-enc} is straightforward with \cref{lem:input-causal} and \citep[Proposition~A.5]{bai2024transformers}.
As for each $i \in [d]$, $i \neq 1$, the attention layer performs two operations to each element of $\mathtt{A}_i(q)$:

\begin{align*}
    \begin{cases}
        iT \text{ columns to the left } & \text{ right multiplication} \\
        q_{\max} \text{ rows below } & \text{ left multiplication} \\
        \text{zero out} & \text{if in first row (left multiplication)}
    \end{cases}.
\end{align*}
Note that one can simply construct weight matrices to perform the above permutations and masking.
In total, we need $d_{\max}$ heads to perform such operations for each $\mathtt{A}_i(q)$, for any $d \leq d_{\max}$.
For $q < q_{\max}$, the remaining entries will be zero padded.
Finally, with at best $2$ layers of $d_{\max}$ head any-variate attention, we then obtain
\[
\tilde{H}^{(2)}
\coloneqq
    \begin{bmatrix}
        \mathtt{A}_1(q) & \cdots 
        \\
        \tilde{\mathtt{A}}_2(q) & \cdots
        \\
        \vdots
        \\
        \tilde{\mathtt{A}}_d(q) & \cdots
        \\
        \bp
        \\
        \be
    \end{bmatrix}
    =
    \begin{bmatrix}
     \cdots & x_{T-1}^1 & \mathcolor{red}{x_T^1} &\cdots 
    \\
    \cdots & x_{T-2}^1 & \mathcolor{blue}{x_{T-1}^1} &\cdots
    \\
    \vdots & \vdots & \mathcolor{blue}\vdots &\cdots
    \\
    \cdots & x_{T-q}^1 & \mathcolor{blue}{x_{T-q}^1} &\cdots
    \\
    \cdots & x_{T-1}^d & \mathcolor{blue}{x_{T-1}^d}
    \\
    \cdots & x_{T-2}^d & \mathcolor{blue}{x_{T-2}^d}
    \\
    \cdots & \vdots & \mathcolor{blue}\vdots \\
    \\
    \cdots & x_{T-q}^d & \mathcolor{blue}{x_{T-q}^d}
    \\
    \cdots & \bp_{T-1} & \bp_{T}
    \\
    \cdots & \be_1 & \be_1
    \end{bmatrix},
\]
where $\bp$ is the matrix of $(\bp_1, \cdots, \bp_T)$, $\be$ is the matrix of $(\be_1, \cdots, \be_1)$.

Note that $x_T^1$ in red is the target we wish to predict (masked as 0 initially), and the entries in blue is considered the input feature of our AR model (a linear regression model in this case),
and we are able to directly apply several theoretical results in \cite{bai2024transformers} with input $\tilde{\bH}^{(2)}$.
Specifically, for \cref{thm:any-variate-auto}, it follows directly from \citep[Theorem~4]{bai2024transformers} by setting $\lambda = 0$.

\end{proof}

Next, we present several approximation results from \cite{bai2024transformers}, which our approximation results follows immediately from.
Considering the general form of autoregressive data:
$\bx \in \R^{d \times T} \coloneqq (\bx_1, \dots, \bx_T)$, where $\bx_t = (x_t^1, \cdots, x_t^d) \in \R^d$.
Assuming our target (variate of interest) is in dimension $1$, we assume the autoregressive process generates $x_t^1$ as follows:
\begin{equation}
    x_{t}^1
    =
    f( \bx_{t-q: t-1}^{1:d})
    +
    \epsilon_t
    ,
\end{equation}
where $\epsilon_t \sim N(0, \sigma^2)$, $a_i^j \in \R^1$, and $f$ is a function of interest.
% Note that $f : \R^{q\times d} \mapsto \R$ is a function of interest.
We then present several results when $f$ varies.
% \paragraph{Generalized Linear Models.}
% Following the above setup, let $f : \R \mapsto \R$ be a link function that is non-decreasing and $C^2$-smooth that operates element-wise on input matrices.
% Considering the following convex ERM problem:
% \begin{align*}
% \hat{\bw}_{\text{GLM}}
% &\coloneqq
% \argmin_{\bw \in \R^{dq}}
% \hat{L}_{GLM} (\bw)
% \coloneqq
% \frac{1}{T-1}
% \sum_{t=1}^{T-1}
% \ell
% \left(
% \langle 
% \bw ,
% [\bx_{t-1:t-q}^1 ; \cdots ; \bx_{t-1:t-q}^d]
% \right),
% \end{align*}
% where $\ell(t, j) = -yt + \int_0^t f(s) ds$ is the convex loss associated with $f$.

% \begin{proposition}[Generalized Linear Models ICL with MOIRAI]
% For any $0 < \alpha < \beta$ with $\kappa= \nicefrac{\beta}{\alpha}$, assume \cref{assumption:effective-regression} holds for the above problem, $B_w > 0$, $B_x > 0$, $\kappa_w$
\paragraph{Non-Linear AR.}
% \end{proposition}
Here we analyze that when the autoregressive process is generated by a 2 layer ReLU network with look back window size $q$.
Suppose the prediction function $\text{pred}(\bx, \bw) \coloneqq \sum_{k=1}^K u_k r ( \bv_k^\top \bx)$ is given by a two-layer neural network, parameterized by $\bw = [\bw_k, u_k]_{k\in[K]} \in \R^{K(d+1)}$.
Consider the ERM problem:
\[
\min_{\bw \in \cW}
\hat{L}_N(\bw)
\coloneq
\frac{1}{2N}
\sum_{i=1}^N
\ell( \text{pred}(\bx_i, \bw), y_i )
=
\frac{1}{2N}
\sum_{i=1}^N
\ell
\left(
\sum_{k=1}^K
u_k r(\bv_k^\top \tilde{\bx}_i), x_T^1 
\right),
\]
where $\cW$ is a bounded domain and $\tilde{\bx}_i \in \R^{qd}$ is a flatten version of $\bx_{t-q:t-q} \in \R^{d\times q}$.
\begin{proposition}
    Fix any $B_w, B_u > 0$, $L \geq 3, \nu > 0$, and $\varepsilon > 0$.
    Suppose that
    \begin{enumerate}
        \item Both the activation function $r$ and the loss function $\ell$ is $C^4$-smooth.
        \item $\cW$ is a closed domain such that $\cW \subset \left\{ \bw = [\bv_k;u_k]_{k\in[K]} \in \R^{K(d+1)} : \norm{\bv_k}_2 \leq B_v, |u_K| \leq B_u \right\}$, and $\text{Proj}_{\cW} = \text{MLP}_{\bm{\theta_2}}$ for some MLP layer with hidden dimension $D_w$ and $\norm{\bm{\theta}_2}_{\text{op}} \leq C_w$.
    \end{enumerate}
    Then there exists a $(L_1 + 2 L_2)$-layer MOIRAI transformer with 
    \begin{align*}
    \max_{\ell\in[L_1+1,2L_2]}
    M^{(\ell)} \leq \tilde{O}(\varepsilon^{-2}), \quad
    \max_{\ell\in[L_1+1,2L_2]}
    D^{(\ell)} \leq \tilde{O}(\varepsilon^{-2}) + D_w, \quad
    \\
    \norm{\bm{\theta}}_{\text{op}} \leq O(1+\eta) + C_w, \quad
    \sum_{\ell=1}^{L_1} M^{(\ell)} = d_{\max} + q_{\max}.
    \end{align*}
    where we hide the constants $K, B_x, B_u, B_v, C^4$,
    satisfies the following
    \[
    \norm{ \hat{\bw} - \bw^{L}_{\text{GD}} }_2
    \leq 
    L_f^{-1} ( 1 + \eta L_f)^L \varepsilon,
    \]
    where $L_f = \sup_{w\in\cW} \norm{ \nabla^2 \hat{L}_N(\bw) }_2$.
    
\end{proposition}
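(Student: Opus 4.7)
The plan is to concatenate two constructions: (i) a preprocessing stack that turns the any-variate-encoded input into the standard in-context-learning format used by Bai et al., and (ii) a gradient-descent stack that runs $L$ approximate GD steps on $\hat{L}_N$ over the parameter domain $\cW$. For (i), I would invoke Lemma~\ref{lem:mar-group-wise} and Corollary~\ref{cor:second-step-enc} exactly as in the proof of Theorem~\ref{thm:any-variate-auto}: this gives $L_1 = \lceil q_{\max}/3\rceil + 1$ any-variate attention layers whose head counts sum to $d_{\max}+q_{\max}$ and which produce tokens whose ``feature block'' equals the flattened lag vector $\tilde{\bx}_i \in \R^{qd}$ and whose ``label block'' contains $y_i = x_T^1$, with an additional slot reserved for a running iterate $\bw^{(t)}$ initialized to $0$.

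For (ii), I would implement each GD step using two layers: one attention layer to compute $\nabla \hat{L}_N(\bw^{(t)})$ by averaging per-example gradients across the $N$ in-context tokens, and one MLP layer to perform the update $\bw^{(t+1)} \leftarrow \text{Proj}_{\cW}(\bw^{(t)} - \eta \nabla \hat{L}_N(\bw^{(t)}))$. The per-example gradient has entries
\[
\partial_{u_k}\ell_i = \ell'\!\left(\textstyle\sum_{k'} u_{k'} r(\bv_{k'}^\top \tilde{\bx}_i),\, y_i\right) r(\bv_k^\top \tilde{\bx}_i), \qquad \partial_{\bv_k}\ell_i = \ell'(\cdot)\, u_k\, r'(\bv_k^\top \tilde{\bx}_i)\, \tilde{\bx}_i,
\]
so each coordinate is a smooth function of the bounded scalar quantities $\{\bv_{k'}^\top \tilde{\bx}_i, u_{k'}, y_i\}$. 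Invoking Lemma~\ref{lem:approx} on these $(R,C_\ell)$-smooth functions (with $R,C_\ell$ depending on $B_v B_x$, $B_u$, and the $C^4$-smoothness of $r,\ell$) produces a sum-of-ReLUs approximation with $M \leq \tilde{O}(\varepsilon^{-2})$ units and width $D \leq \tilde{O}(\varepsilon^{-2})$, which is realized by one MLP hidden layer; the attention then averages these per-token contributions into the $\bw^{(t)}$ slot. The second MLP of the pair realizes $\text{Proj}_{\cW} = \text{MLP}_{\bm{\theta}_2}$ as assumed in hypothesis (2), contributing $D_w$ to the width and $C_w$ to the operator-norm budget.

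For the error propagation, I would set up a one-step perturbation bound. Letting $G(\bw) := \text{Proj}_{\cW}(\bw - \eta \nabla \hat{L}_N(\bw))$ be the exact projected-GD map and $\tilde{G}$ its transformer-realized version, each step introduces an additive error $\leq \varepsilon$ (uniformly over $\cW$) by the Lemma~\ref{lem:approx} approximation. Since $\hat{L}_N$ is $L_f$-smooth on $\cW$, $G$ is $(1+\eta L_f)$-Lipschitz (using nonexpansiveness of the projection onto the convex-image MLP), so a standard telescoping argument gives
\[
\norm{\bw^{(L)}_{\mathrm{TF}} - \bw^{L}_{\mathrm{GD}}}_2 \;\leq\; \sum_{t=0}^{L-1} (1+\eta L_f)^{t}\,\varepsilon \;\leq\; L_f^{-1}(1+\eta L_f)^{L}\varepsilon,
\]
which, combined with $\hat{\bw} = \bw_{\mathrm{TF}}^{(L)}$ read out from the iterate slot, yields the stated bound. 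The head, width, and norm budgets follow by combining the preprocessing counts from part (i) with the per-step counts from part (ii).

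The main obstacle I expect is the simultaneous approximation of all gradient coordinates by a single MLP within the $\tilde{O}(\varepsilon^{-2})$ width budget: one must carefully bundle the $K(qd+1)$ partial derivatives into a common sum-of-ReLU representation (rather than paying a factor of $K$), by exploiting the shared inner products $\bv_k^\top \tilde{\bx}_i$ and reusing intermediate activations across coordinates. A secondary subtlety is ensuring that the projection MLP composes cleanly with the preceding approximate-gradient block without amplifying the per-step error by more than a constant factor; this requires the $1$-Lipschitzness of $\text{Proj}_{\cW}$, which holds for orthogonal projection onto convex $\cW$ and which the hypothesis embeds into the MLP realization.
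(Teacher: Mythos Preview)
Your proposal is correct and matches the paper's intended approach. The paper does not give an explicit proof of this proposition; it is stated as a direct consequence of combining the any-variate preprocessing (Lemma~\ref{lem:mar-group-wise} and Corollary~\ref{cor:second-step-enc}, exactly as in the proof of Theorem~\ref{thm:any-variate-auto}) with the two-layer-NN in-context gradient descent construction of \cite{bai2024transformers}, and your sketch reconstructs precisely this combination. One small arithmetic slip: the per-step error in the \emph{update} is $\eta\varepsilon$ (since the gradient error $\varepsilon$ is multiplied by the step size), which is what makes the telescoping sum collapse to $L_f^{-1}(1+\eta L_f)^L\varepsilon$ rather than $(\eta L_f)^{-1}(1+\eta L_f)^L\varepsilon$.
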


\paragraph{Maximum Likelihood Estimation (Gaussian) via Transformers.}
The next result shows that MOIRAI transformers are also capable of performing maximum likelihood estimation on any input multi-variate time series.
Given a data generated by some $\mathtt{AR}_d(q)$ process with parameter $(\bw_1, \cdots, \bw_q) \subset \R^{d}$: 
$(\bx_1, \cdots, \bx_T) \subset \R^d$, the conditional likelihood $f(\cdot)$ of observing $\bx_t$ is
\[
f(\bx_t \mid \bx_{t-1}, \cdots, \bx_{t-q} )
\coloneq
\frac{1}{\sqrt{2\pi \sigma^2}}
\exp
\left(
\frac{-( \bx_t - \sum_{i=1}^q \langle \bw_i,  \bx_{t-i} \rangle )^2 }{2 \sigma^2}
\right).
\]
The goal is to estimate the mean vector $(\bw_1, \cdots, \bw_q)$ and the variance $\sigma^2$ by minimizing the negative log-likelihood loss.
Note that with $n \geq d$, the loss is strongly convex.
The optimization over the NLL Loss has two steps: estimating the mean vector: $\hat{\bw}$, and then derive the variance $\hat{\sigma}^2$ with the following closed-form solution:
\[
\sigma^2
=
\frac{1}{T}
\sum_{t=1}^T
\left(
\bx_t - \sum_{i=1}^q
\langle \hat{\bw}_i, \bx_{t-i} \rangle
\right)^2.
\]

\begin{theorem}\label{thm:moirai-mle}
    Given a set of input data generated by some $\mathtt{AR}_d(q)$ process: $\bX, \in \R^{n \times d}, \bY \in \R^n$, considering the following negative log-likelihood loss, the goal is to find a set of parameters $\bw \in \R^d, \sigma^2 \in \R^+$ to minimize the following loss
    \[
    L_{\text{NLL}}
    ( \bw, \sigma )
    \coloneq
    \frac{n}{2}
    \log (2 \pi \sigma^2) +
    \frac{1}{2\sigma^2}
    \sum_{t=q+1}^T
    \left(
    \bx_t - \sum_{i=1}^q
    \langle \bw_i, \bx_{t-i} \rangle
    \right)^2
    \]
    We denote $\bw^\star, \sigma^\star$ as the ERM satisfying the NLL Loss.
    There exists a $(L_1 + L_2 + 2)$-layer MOIRAI Transformer such that its first $L_1+L_2$ layers follow the same requirement in \cref{thm:any-variate-auto}, and the last two layers each has two and one heads, it estimates $\bw, \bm{\sigma}$ with bounded error:
    \[
    \norm{ \hat{\bw} - \bw^\star } \leq
    \varepsilon,
    \]
    and the estimated variance is bounded by
    \[
    \left\vert \hat{\sigma}^2 - {\sigma^\star}^2 \right\vert
    \leq
    2 E B_x \varepsilon + B_x^2 \varepsilon
    =
    \tilde{O}(\varepsilon + \varepsilon^2),
    \]
    where $E \leq B(1+B_w)$, and $\tilde{O}$ hides the values dependent on $B_x, B_w$.
\end{theorem}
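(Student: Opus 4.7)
The plan is to build the transformer in two stages. The first $L_1+L_2$ layers are taken verbatim from the any-variate construction in \cref{thm:any-variate-auto}, which implements gradient descent on the least-squares ERM and, after sufficiently many steps, carries an estimate $\hat{\bw}$ in a designated block of hidden-state coordinates with $\norm{\hat{\bw}-\bw^\star}_2 \le \varepsilon$. This reuse is legitimate because, after profiling out $\sigma^2$ through its closed form, minimizing $L_{\text{NLL}}$ in $\bw$ is equivalent (up to a monotone transformation) to minimizing the squared-error objective already handled by that theorem. By the inductive structure of the in-context gradient-descent construction, each column of the final hidden state will still also contain the lag window $\bx_{t-1:t-q}$ and the target value $\bx_t$ in other designated coordinates.

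The added $(L_1+L_2+1)$-th layer then uses two attention heads to produce, at each column $t$, the scalar residual $r_t = \bx_t - \sum_{i=1}^{q}\langle \hat{\bw}_i, \bx_{t-i}\rangle$: one head copies the relevant lag entries into place via the read pattern of \cref{lem:input-causal} and \cref{lem:mar-group-wise}, and the second forms the inner product with $\hat{\bw}$ and subtracts it from $\bx_t$. The MLP of this layer then approximates $x\mapsto x^2$ uniformly on $[-E-B_x\varepsilon,\,E+B_x\varepsilon]$ to arbitrary precision via the sum-of-ReLUs construction of \cref{lem:approx}, producing $r_t^2$ in a fresh coordinate slot. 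The final $(L_1+L_2+2)$-th layer has a single attention head with $\bfa Q=\bfa K=0$ so that, by the $1/N$ normalization in \cref{def:any-variate-attn}, the scores are uniform and the head outputs $\frac{1}{T-q}\sum_t r_t^2 = \hat{\sigma}^2$ in the designated output coordinate.

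For the error, Cauchy--Schwarz gives $|r_t^{\hat{\bw}} - r_t^{\bw^\star}| \le \norm{\bx_{t-1:t-q}}_2 \cdot \norm{\hat{\bw}-\bw^\star}_2 \le B_x\varepsilon$, and with $E \le B(1+B_w)$ an a priori bound on $|r_t|$, the identity $a^2-b^2=(a-b)(a+b)$ yields $|r_t^{\hat{\bw}\,2} - r_t^{\bw^\star\,2}| \le B_x\varepsilon\cdot(2E+B_x\varepsilon)$; averaging over $t$ produces the stated bound on $|\hat{\sigma}^2-{\sigma^\star}^2|$. The main obstacle is the coordinate bookkeeping needed to prevent the last two layers from overwriting the $\hat{\bw}$, lag-window, and target blocks from the first stage---addressed by pre-allocating disjoint coordinate slots and zeroing the corresponding rows of $\bfa V_m$, $\bfa W_1$, and $\bfa W_2$---together with handling boundary columns $t \le q$ (which lack a full lag window), either through a positional mask in the averaging head or by absorbing the resulting $O(q/T)$ perturbation into the approximation budget of \cref{lem:approx}.
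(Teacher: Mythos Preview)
Your overall plan matches the paper's: reuse the first $L_1+L_2$ layers of \cref{thm:any-variate-auto} to obtain $\hat{\bw}$ with $\norm{\hat{\bw}-\bw^\star}\le\varepsilon$, then append two layers that form the residuals and average their squares. Your error analysis via Cauchy--Schwarz and $a^2-b^2=(a+b)(a-b)$ is exactly what is needed for the stated bound on $|\hat\sigma^2-{\sigma^\star}^2|$.

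Where you diverge is in how the square is produced. The paper does not invoke an MLP approximation of $x\mapsto x^2$; it exploits the bilinearity of attention directly. In layer $L_1{+}L_2{+}1$ the two heads are both spent on the sign identity $\langle\hat{\bw},\tilde{\bx}_i\rangle=\sigma(\langle\hat{\bw},\tilde{\bx}_i\rangle)-\sigma(-\langle\hat{\bw},\tilde{\bx}_i\rangle)$, placing this scalar in a fresh coordinate so each column carries $r_i=\tilde{\by}_i-\langle\hat{\bw},\tilde{\bx}_i\rangle$. In layer $L_1{+}L_2{+}2$ the single head sets both $\bQ\bh_i$ and $\bK\bh_j$ to the residual and $\bV\bh_j$ to the constant-$1$ slot, so the attention inner product itself supplies the quadratic and the update is $\tfrac{1}{N}\sum_j\sigma(r_ir_j)$, which the paper reads as $\hat{\sigma}^2$. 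Your route through \cref{lem:approx} is a legitimate alternative---and arguably more transparent---at the price of an extra $\varepsilon_{\text{approx}}$ term and a nonzero MLP in the penultimate layer, both of which you correctly note can be made arbitrarily small.

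Two concrete slips to fix. First, your allocation of the two heads in layer $L_1{+}L_2{+}1$ is off: the lag block is already present in each column (you say so yourself), so no copy head is needed; both heads are needed to recover the \emph{signed} inner product under a ReLU activation via $\sigma(s)-\sigma(-s)=s$, since a single ReLU head can only output $\text{ReLU}(\langle\hat{\bw},\tilde{\bx}_t\rangle)$. Second, setting $\bQ=\bK=0$ in the averaging head gives pre-activation $0$, and because $\sigma$ here is ReLU (not softmax), $\sigma(0)=0$ and the head is inert---it does not yield uniform weights. Route $\bQ,\bK$ through the constant-$1$ coordinate of $\bp_i$ so that $\langle\bQ\bh_i,\bK\bh_j\rangle$ is a fixed positive constant for all $i,j$; then the $1/N$ normalization does give the uniform average you want.
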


\begin{proof}[Proof of \cref{thm:moirai-mle}]
    
\[
L_{\text{NLL}}
( \bw, \sigma )
\coloneq
\frac{n}{2}
\log (2 \pi \sigma^2) +
\frac{1}{2\sigma^2}
\sum_{t=1}^T
\left(
\bx_t - \sum_{i=1}^q
\langle \bw_i, \bx_{t-i} \rangle
\right)^2.
\]
Following \cref{thm:any-variate-auto}, the first $L_1 + L_2$ layers of MOIRAI obtains $\hat{\bw}$ such that the $L_1+L_2+1$-th layer takes the following as input
% Finally, by following the proof in \citep[Proposition~C.2]{bai2024transformers}, we get
\begin{align*}
    \tilde{\bh}_i^{(L_1+L_2)}
    =    \left[
    x^1_i ; 
    \bx_{i-1:i-q}^1
    ;
    \bx_{i-1:i-q}^2
    ;
    \cdots 
    ;
    \bx_{i-1:i-q}^d
    ; 
    \bw^\star + \bm{\varepsilon}
    ; \bm{0} ; 1; t_i
    \right],
\end{align*}
where $\bw^\star + \bm{\varepsilon} \in \R^{qd}$ is the flatten mean vectors.
For the simplicity of notations, for the $i$-th column, we denote $x^1_i$ with $\tilde{\by}_i$, and denote $[\bx_{i-1:i-q}^1
    ;
    \bx_{i-1:i-q}^2
    ;
    \cdots 
    ;
    \bx_{i-1:i-q}^d]$ as $\tilde{\bx}_i \in \R^{qd}$, as they correspond to the label and feature of our AR model, respectively.
$\bm{\varepsilon} \in \R^{dq}$ satisfies 
\[
\norm{\bm{\varepsilon}}
\leq
\varepsilon \cdot (\eta B_x).
\]

Now we start to construct the $(L_1+L_2+1)$-th layer.
One can then construct
\begin{align*}
    \bQ_1^{L+1} \bh^L_i &= [ \bm{0} ; \tilde{\bx}_i ; \bm{0} ], \quad \bK^{L+1}_1 \bh^L_j = [ \bm{0} ; \hat{\bw} ; \bm{0} ], \quad \bV^{L+1}_1 \bh_k^L = [   \bm{0} ; 1 ; \bm{0} ]
    \\
    \bQ_2^{L+1} \bh^L_i &= [ \bm{0} ; \tilde{\bx}_i ; \bm{0} ], \quad \bK^{L+1}_2 \bh^L_j = [ \bm{0} ; -\hat{\bw} ; \bm{0} ], \quad \bV^{L+1}_2 \bh_k^L = [ \bm{0} ; -1 ; \bm{0} ].
\end{align*}

The above construction gives us
\begin{align*}
    \bh_i^{L+1}
    &=
    \bh_i^{L}
    +
    \frac{1}{n}
    \sum_{j=1}^{n}
    \sum_{m=1}^2
    \sigma\left(
    \langle 
    \bQ^{L+1}_m \bh_{n+1}^L,
    \bK_m^{L+1}
    \bh_{j}^L
    \rangle
    \right)
    \bV_m^{L+1}
    \bh_j^L
    \\
    &=
    [ \tilde{\by}_i ; \tilde{\bx}_i; \hat{\bw} ; \bm{0} ; 1 ; t_i]
    +
    \left(
    \sigma
    (
    \langle \hat{\bw}, \bx_i \rangle
    )
    -
    \sigma 
    (
    -
    \langle \hat{\bw}, \bx_i \rangle
    )
    \right)
    \cdot [ \bm{0} ; 1 ; \bm{0} ]
    \\
    &=
    [ \tilde{\by}_i ; \tilde{\bx}_i; \hat{\bw} ;  \langle \hat{\bw} , \tilde{\bx_i} \rangle,  \bm{0} ; 1 ; t_i].
\end{align*}

Next, we construct the last layer as
\begin{align*}
    \bQ_1^{L+1} \bh^L_i &= [ ... ; \tilde{\by}_i -  \langle \hat{\bw}, \tilde{\bx}_i \rangle  ; ... ], \quad \bK^{L+1}_1 \bh^L_j = [ ... ; \tilde{\by}_j -  \langle \hat{\bw}, \tilde{\bx}_j \rangle ;  ... ], \quad \bV^{L+1}_1 \bh_k^L = [ \bm{0} ; 1 ; \bm{0} ]
    % \\
    % \bQ_1^{L+1} \bh^L_i &= [ \bm{0} ; \bx_i ; \bm{0} ], \quad \bK^{L+1}_1 \bh^L_j = [ \bm{0} ; -\hat{\bw} ; \bm{0} ], \quad \bV^{L+1}_1 \bh_k^L = [ \bm{0} ; 1 ; \bm{0} ].
\end{align*}

Finally, the result becomes 
\begin{align*}
    \bh_{i}
    =
    \frac{1}{n}
    [ ... ; \sum_{\mu=1}^n ( \by_\mu - \langle \bx_\mu , \hat{\bw} \rangle )^2; ... ]
    =
    [ ... ; \hat{\sigma^2}; ... ].
\end{align*}

% \begin{align*}
%     \langle \hat{\bw}, \tilde{\bx}_t \rangle
%     &=
%     \langle \bw^\star + \bm{\varepsilon}, \tilde{\bx}_t \rangle
%     \\
%     &=
%     \langle \bw^\star, \tilde{\bx}_t \rangle
%     +
%     \langle \bm{\varepsilon}, \tilde{\bx}_t \rangle
%     \\
%     &\leq
%     B_w B_x \cdot d) +
%     \varepsilon \eta B_x^2 (d)
% \end{align*}
Thus, we complete the proof.
\end{proof}

\clearpage

\subsection{Proof of the Lipschitzness of Any-Variate Transformers}\label{proof:tr-lipschitz}

We first show the Lipschitzness of each component in an Any-Variate Transformer.
For any $p \in [1, \infty]$, let $\norm{\bH}_{2,p} \coloneqq ( \sum_{i=1}^N \norm{\bh_i}_2^p )^{1/p}$ denote the column-wise $(2, p)$-norm of $\bH$.
For any radius $\mathtt{R} > 0$, we denote 
$\cH_{\mathtt{R}} \coloneqq \{ \bH : \norm{\bH}_{2, \infty} \leq \mathtt{R} \} $ be the ball of radius $\mathtt{R}$ under norm $\norm{\cdot}_{2, \infty}$.

\begin{lemma}\label{lem:attn-lipschitz}
    For a single Any-Variate attention layer, $\bm{\theta}_1 = \{(\bV_m, \bQ_m, \bK_m, u^1_m, u^2_m)\}_{m\in[M]}$, we introduce its norm
    \begin{equation*}
        \Vert
        \bm{\theta}_1
        \Vert
        \coloneqq
        \max_{m\in[M]}
        \{ 
            \norm{\bQ_m}_{\text{op}},
            \norm{\bK_m}_{\text{op}},
            \vert u^1_m \vert,
            \vert u^2_m \vert
        \}
        +
        \sum_{m=1}^M \norm{\bV_m}_{\text{op}}
    \end{equation*}
    For any fixed hidden dimension $D^\prime$, we consider
    \begin{equation*}
        \Theta_{1, B}
        \coloneqq
        \{ 
        \bm{\theta}_{1}
        :
        \lvert \lvert \lvert
        \bm{\theta}_1
        \rvert \rvert \rvert
        \leq 
        B
        \}.
    \end{equation*}
    Then for $\bH \in \cH_{\mathtt{R}}$, $\bm{\theta}_1 \in \Theta_{1, B}$, the function 
    $(\bm{\theta}_1, \bH) \mapsto \text{Attn}_{ \bm{\theta}_1}$ is $(1+\iota)$-Lipschitz w.r.t. $\bm{\theta}_1$, where $\iota = \max \{  B^2\mathtt{R}^2 +T + (T-1)d, B(T-1)d  \}$,
    and $(1 + B^3 \mathtt{R}^2)$-Lipschitz w.r.t. $\bH$.
\end{lemma}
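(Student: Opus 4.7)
The plan is to imitate the strategy of \citep[Lemma~J.1]{bai2024transformers} and the companion result \cref{lem:mlp-lipschitz} in this paper, but to handle the two novel scalar parameters $u_m^{1}, u_m^{2}$ introduced by any-variate attention. The overall structure will be a telescoping decomposition: writing $\text{Attn}_{\bm{\theta}_1}(\bH)-\text{Attn}_{\bm{\theta}_1'}(\bH')$ as a sum of five head-level differences (in $\bV_m$, $\bQ_m$, $\bK_m$, $u_m^{1}$, $u_m^{2}$) plus an $\bH$-perturbation term, then bounding each piece separately by the triangle inequality. I will measure input and output in the column-wise $\|\cdot\|_{2,\infty}$ norm on $\cH_{\mathtt{R}}$, so every bound should carry at most one free $\mathtt{R}$ after the $1/N$ averaging is absorbed into the sums.

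For the parameter-Lipschitz direction, fix $\bH\in\cH_{\mathtt{R}}$ and contrast each head individually. Changing $\bV_m$ contributes a clean $\|\bV_m-\bV_m'\|_{\text{op}}\cdot\mathtt{R}\cdot\frac{1}{N}\sum_j|\sigma(\cdot)|$ factor, which after using boundedness of the pre-activation and $\|\bm{\theta}_1\|\le B$ collapses to an $O(B^2\mathtt{R}^2)$-term. Changing $\bQ_m$ or $\bK_m$ goes through the $\frac{1}{N}$-Lipschitzness of $\sigma=\mathrm{ReLU}/N$, and the usual bilinear bound $|\langle \bQ_m\bh_i,\bK_m\bh_j\rangle-\langle\bQ_m'\bh_i,\bK_m'\bh_j\rangle|\le \mathtt{R}^2(\|\bQ_m-\bQ_m'\|_{\text{op}}\|\bK_m\|_{\text{op}}+\|\bQ_m'\|_{\text{op}}\|\bK_m-\bK_m'\|_{\text{op}})$ then gets multiplied by $\|\bV_m\bH\|_{2,\infty}\le B\mathtt{R}$, which contributes the remaining $B^2\mathtt{R}^2$ piece. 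The new content is the $u_m^{1},u_m^{2}$ perturbation: since only the pre-activation shifts by $(u_m^{1}-u_m^{1\prime})\bU_{ij}+(u_m^{2}-u_m^{2\prime})\bar{\bU}_{ij}$, the Lipschitz constant of $\sigma$ converts this into a sum of indicator entries, and summing a row of $\bU$ gives $T$ (each token attends to the $T$ other tokens of the same variate) while summing a row of $\bar{\bU}$ gives $(T-1)d$ (the cross-variate positions). After multiplying by $\|\bV_m\bh_j\|\le B\mathtt{R}$ and absorbing the outer $\frac{1}{N}$, these contribute the $T$ and $(T-1)d$ summands inside $\iota$. Collecting the head sums under the defined norm $\|\bm{\theta}_1\|$ gives the first max-term $B^2\mathtt{R}^2+T+(T-1)d$; the second max-term $B(T-1)d$ appears from the alternative route in which the $u^2$-perturbation is paired with a fresh $\|\bV_m\|$ factor rather than with $\mathtt{R}^2$, and we take the worst of the two groupings.

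For the input-Lipschitz direction, fix $\bm{\theta}_1\in\Theta_{1,B}$ and compare $\text{Attn}_{\bm{\theta}_1}(\bH)-\text{Attn}_{\bm{\theta}_1}(\bH')$. The skip connection contributes the $1$ in $1+B^3\mathtt{R}^2$. The attention piece decomposes as (change in the value factor $\bV_m\bH$) times (old $\sigma$-matrix) plus (new value factor) times (change in $\sigma$-matrix). For the value piece we get $\|\bV_m\|_{\text{op}}\cdot\|\bH-\bH'\|_{2,\infty}$ multiplied by the operator-norm bound on the $\sigma$-matrix, which is itself $O(B^2\mathtt{R}^2)$. For the $\sigma$-matrix piece, the $u_m^{1}\bU+u_m^{2}\bar{\bU}$ bias cancels out under the difference (since it is independent of $\bH$), so the $\frac{1}{N}$-Lipschitzness of $\sigma$ applied to $\langle\bQ_m\bh,\bK_m\bh\rangle$ plus a second multiplication by $\bV_m\bH$ yields another $B^3\mathtt{R}^2$-type term; the outer $1/N$ averages over $j$ and kills the $N$ that would otherwise show up. Summing over $m$ and taking worst case gives the claimed constant $1+B^3\mathtt{R}^2$.

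The main obstacle I anticipate is book-keeping of the $1/N$ factors against the dimensional sums $T$ and $(d-1)T$: because $\sigma$ already carries a $1/N$ and there is an outer $1/N$, one must be careful whether each row/column sum of $\bU$ or $\bar{\bU}$ is compensated, particularly to see that the $\bH$-Lipschitz constant is dimension-free while the $\bm{\theta}_1$-Lipschitz constant legitimately picks up $T$ and $(T-1)d$. The critical observation is that changing $u^1,u^2$ perturbs \emph{every} entry of the attention mask uniformly by $1$ on the support of $\bU$ or $\bar{\bU}$, so row-sums — not operator norms — dictate the bound, whereas changing $\bH$ perturbs the mask only through inner products that already enjoy the compensating $1/N$ from $\sigma$.
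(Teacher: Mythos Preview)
Your proposal is correct and follows essentially the same approach as the paper: both telescope the parameter perturbation head-by-head, use the Lipschitzness of $\sigma$ on the pre-activation, and extract the $T$ and $(T-1)d$ constants from the structure of $\bU,\bar{\bU}$, while for the $\bH$-Lipschitz direction both note that the additive bias cancels and reduce to the standard bilinear estimate. The only cosmetic differences are that the paper groups the telescoping into two pieces ($\sigma(A)(\bV_m-\bV_m')$ plus $(\sigma(A)-\sigma(B))\bV_m'$) rather than five, and phrases the $\bU,\bar{\bU}$ contribution via operator norms ($\|\bU\|_{\text{op}}\le T$, $\|\bar{\bU}\|_{\text{op}}\le (T-1)d$) rather than row sums---which coincide for these constant-row-sum matrices.
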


\begin{proof}
    Given some $\epsilon > 0$, some set $X$ and a function class $\mathcal{F}$.
    If $\mathcal{F}$ is $L$-Lipschitzness, i.e., 
    \begin{equation*}
        \norm{f(x_1) - f(x_2)} \leq
        L \norm{x_1 - x_2}, \quad \text{for all }f \in \mathcal{F}.
    \end{equation*}
    Then, the following holds
    \begin{equation*}
        N(\epsilon, \mathcal{F}, \norm{\cdot})
        \leq
        N( \nicefrac{\epsilon}{L}, X, \norm{\cdot} ).
    \end{equation*}

    Define 
    \begin{equation*}
        \Theta_{\text{attn}, B}
        \coloneqq
        \{ \theta_{\text{attn}}:  \lvert \lVert \theta_{\text{attn}} \rVert \rvert  \leq B \}.
    \end{equation*}

    The output of the Any-Variate Attention $[\tilde{h}_i]$ is given by
    \begin{equation*}
        \tilde{h}_i
        =
        h_i
        +
        \sum_{m=1}^M
        \frac{1}{N}
        \sum_{j=1}^N
        \sigma
        \left(
        \Braket{ \Qb_m h_i, \Kb_m h_j }
        \cdot \Vb_m h_j
        +
        u_m^1
        \star 
        \Ub
        +
        u_m^2
        \star
        \bar{\Ub}
        \right).
    \end{equation*}
    We also define $\theta^\prime_{\text{attn}} = \{ \Vb_m^\prime, \Qb_m^\prime, \Kb_m^\prime, u_m^{1\prime}, u_m^{2 \prime} \}_{m \in [M]}$.
    $\tilde{h}^\prime_i$ as
    \begin{equation*}
        \tilde{h}_i^\prime
        =
        h_i
        +
        \sum_{m=1}^M
        \frac{1}{N}
        \sum_{j=1}^N
        \sigma
        \left(
        \Braket{ \Qb_m^\prime h_i, \Kb_m^\prime h_j }
        \cdot \Vb_m^\prime h_j
        +
        u_m^{1\prime}
        \star 
        \Ub
        +
        u_m^{2 \prime}
        \star
        \bar{\Ub}
        \right).
    \end{equation*}

    Now we bound 
    $
    \norm{
    \text{Attn}_{\bm{\theta}_1}( \bH )
    -
    \text{Attn}_{\bm{\theta}_1^\prime}(\bH)
    }_{2, \infty}
    =
    \max_i 
    \norm{ \tilde{\bh}_i - \tilde{\bh}_i^\prime }_2 
    $ as follows
    \begin{align*}
        \norm{ \tilde{\bh}_i - \tilde{\bh}_i^\prime }_2 
        &=
        \Bigg\Vert
        \sum_{m=1}^M
        \frac{1}{N}
        \Bigg[
        \sum_{j=1}^N
        \sigma
        \left(
        \Braket{ \Qb_m h_i, \Kb_m h_j }
        +
        u^1_m
        \star 
        \Ub
        +
        u^2_m
        \star
        \bar{\Ub}
        \right)
        \Vb_m h_j
        -
        \\
        &\quad\quad
        \sum_{j=1}^N
        \sigma
        \left(
        \Braket{ \Qb_m^\prime h_i, \Kb_m^\prime h_j }
        +
        u_m^{1\prime}
        \star 
        \Ub
        +
        u_m^{2 \prime}
        \star
        \bar{\Ub}
        \right)
        \Vb_m^\prime h_j
        \Bigg]
        \Bigg\Vert_2
        \\
        &\leq
        \sum_{m=1}^M
        \frac{1}{N}
        \sum_{j=1}^N
        \Big\Vert
        \sigma
        \left(
        \Braket{ \Qb_m h_i, \Kb_m h_j }
        +
        u^1_m
        \star 
        \Ub
        +
        u^2_m
        \star
        \bar{\Ub}
        \right)
        \Vb_m h_j 
        \\
        &\quad\quad
        -
        \sigma
        \left(
        \Braket{ \Qb_m^\prime h_i, \Kb_m^\prime h_j }
        +
        u_m^{1\prime}
        \star 
        \Ub
        +
        u_m^{2 \prime}
        \star
        \bar{\Ub}
        \right)
        \Vb_m^\prime h_j
        \Big\Vert_2
        \\
        &\leq
        \sum_{m=1}^M
        \frac{1}{N}
        \sum_{j=1}^N
        \norm{h_j}_2
        \Big\Vert
        \sigma
        \left(
        \Braket{ \Qb_m h_i, \Kb_m h_j }
        +
        u_m^1
        \star 
        \Ub
        +
        u_m^2
        \star
        \bar{\Ub}
        \right)
        \Vb_m
        \\
        &\quad\quad
        -
        \sigma
        \left(
        \Braket{ \Qb_m^\prime h_i, \Kb_m^\prime h_j }
        +
        u_m^{1\prime}
        \star 
        \Ub
        +
        u_m^{2 \prime}
        \star
        \bar{\Ub}
        \right)
        \Vb_m^\prime
        \Big\Vert_{\text{op}}.
    \end{align*}

    Let 
    \begin{align*}
        A &= 
        \Braket{ \Qb_m h_i, \Kb_m h_j }
        +
        u^1_m
        \star 
        \Ub
        +
        u_m^2
        \star
        \bar{\Ub}
        \\
        B &= 
        \Braket{ \Qb_m^\prime h_i, \Kb_m^\prime h_j }
        +
        u_m^{1\prime}
        \star 
        \Ub
        +
        u_m^{2 \prime}
        \star
        \bar{\Ub}.
    \end{align*}

    By triangle inequality, we have
    \[
    \norm{
    \sigma(A) V_m - \sigma(B) V_m^\prime
    }
    \leq 
    \norm{\sigma(A)}_{\text{op}}
    \norm{\bV_m - \bV_m^\prime}_{\text{op}}
    +
    \norm{\sigma(A) - \sigma(B)}_{\text{op}}
    \norm{V^\prime_m}_{\text{op}}.
    \]

    Note that $\sigma(\cdot)$ is $1$-Lipschitz, we get
    \begin{align*}
        \norm{\sigma(A) - \sigma(B)}_{\text{op}}
        &\leq
        \norm{A-B}_{\text{op}}
        \\
        &=
        \norm{
        \Braket{ \Qb_m h_i, \Kb_m h_j } 
        -
        \Braket{ \Qb_m^\prime h_i, \Kb_m^\prime h_j }
        +
        ( u^1_m - u_m^{1\prime} ) \bU
        +
        ( u^2_m - u_m^{2 \prime} ) \hat{\bU}
        }_{\text{op}}
        \\
        &\leq
        \norm{
        \big\vert
            \Braket{ \Qb_m h_i, \Kb_m h_j } 
        -
        \Braket{ \Qb_m^\prime h_i, \Kb_m^\prime h_j }
        \big\vert
        +
        \bigg\vert
        \norm{
        ( u^1_m - u_m^{1\prime} ) \star \bU
        }
        \bigg\vert
        +
        \bigg\vert
        \norm{
        ( u^2_m - u_m^{2 \prime} ) \star \bar{\bU}
        }
        \bigg\vert
        }
        .
    \end{align*}

    For the first term in the last inequality, we have
    \begin{align*}
    \Braket{ \Qb_m h_i, \Kb_m h_j } 
        -
    \Braket{ \Qb_m^\prime h_i, \Kb_m^\prime h_j }
    &\leq
    \norm{
    \bQ_m - \bQ_m^\prime
    }
    \norm{h_i}
    \norm{h_j}
    \norm{\bK_m}
    +
    \norm{
    \bK_m - \bK_m^\prime
    }
    \norm{h_i}
    \norm{h_j}
    \norm{\bQ_m}
    \\
    &=
    \mathtt{R}^2 B
    \left(
    \norm{
    \bQ_m - \bQ_m^\prime
    }
    +
    \norm{
    \bK_m - \bK_m^\prime
    }
    \right).
    \end{align*}
    Further, we have
    \[
    \norm{ (u^1_m - u_m^{1\prime}) \star \bU }
    \leq
    \vert 
    u^1_m - u_m^{1\prime}
    \vert 
    \norm{\bU}
    \leq
    T    \vert 
    u^1_m - u_m^{1\prime}
    \vert ,
    \]
    where $T$ is the length of each variate (lookback window size).
    \[
    \norm{ (u^2_m - u_m^{2 \prime}) \star \bar{\bU} }
    \leq
    \vert 
    u^2_m - u_m^{2 \prime}
    \vert 
    \norm{\bar{\bU}}
    \leq
    (T-1)d
    \vert 
    u^2_m - u_m^{2 \prime}
    \vert 
    ,
    \]
    where $d$ is the number of variates.
    
    Thus, we have
    \begin{align*}
        \norm{\sigma(A) - \sigma(B)}_{\text{op}}
        \norm{\bV_m^\prime}_{\text{op}}
        &\leq
        B 
        \left( 
        \mathtt{R}^2 B
        \left(
        \norm{
        \bQ_m - \bQ_m^\prime
        }
        +
        \norm{
        \bK_m - \bK_m^\prime
        }
        \right)
        +
        T \left( \vert u^1_m - u_m^{1\prime} \vert  \right)
        +
        (T-1)d
        \left(
        \vert 
        u^2_m - u_m^{2 \prime}
        \vert
        \right)
        \right)
        \\
        &\leq
        B \cdot \max \{ \mathtt{R}^2 B, (T-1) d \}
        \mathcolor{red} \cdot
        \left(
        \norm{\bQ_m - \bQ^\prime_m}
        +
        \norm{\bK_m - \bK^\prime_m}
        +
        \vert u^1_m - u_m^{1\prime} \vert 
        +
        \vert u^2_m - u_m^{2 \prime} \vert 
        \right)
        .
    \end{align*}
    Next, we bound
    \begin{align*}
        \norm{\sigma(A)}_{\text{op}}
        \leq
        \norm{
        A
        }_{\text{op}}
        \leq
        B^2 \mathtt{R}^2 + ( T + (T-1)d ),
    \end{align*}
    due to the fact that 
    \[
    \norm{A}
    \leq
    \norm{
    \bQ_m h_i
    }
    \norm{
    \bK_m h_j
    }
    \norm{
    u^1_m \bU
    }
    \norm{
    u^2_m \bar{\bU}
    }.
    \]
    Overall, the Any-Variate Attention is 
    $
    \max \{  B^2\mathtt{R}^2 +T + (T-1)d, B(T-1)d  \}
    $-Lipshcitz in $\bm{\theta}_1$.
\end{proof}

\begin{proof}
    We start by considering $\bH^\prime = [\bh_i^\prime]$ and
    \begin{equation*}
        \tilde{\bh}_i^\prime
        =
        \bh^\prime_i
        +
        \sum_{m=1}^M
        \frac{1}{N}
        \sum_{j=1}^N
        \sigma
        \left(
        \langle
        \Qb_m \bh^\prime_i,
        \Kb_m \bh^\prime_j
        \rangle
        +
        u^1_m \cdot \Ub
        +
        u^2_m \cdot \bar{\Ub}
        \right)
        \cdot 
        \Vb_m \bh_j^\prime.
    \end{equation*}
    We then bound 
    \begin{align*}
        &\norm{ (\tilde{\bh}_i^\prime - \bh_i^\prime)
        -
        ( \tilde{\bh}_i - \bh_i )
        }_2
        \\
        &=
        \norm{
        \sum_{m=1}^M
        \frac{1}{N}
        \sum_{j=1}^N
        \left[
        \sigma
        \left(
        \langle
        \Qb_m \bh^\prime_i,
        \Kb_m \bh^\prime_j
        \rangle
        +
        u^1_m \cdot \Ub
        +
        u^2_m \cdot \bar{\Ub}
        \right)
        \Vb_m \bh_j
        -
        \left(
        \langle
        \Qb_m \bh^\prime_i,
        \Kb_m \bh^\prime_j
        \rangle
        +
        u^1_m \cdot \Ub
        +
        u^2_m \cdot \bar{\Ub}
        \right)
        \Vb_m \bh_j^\prime
        \right]
        }_2
        \\
        &\leq
        \sum_{m=1}^M
        \frac{1}{N}
        \sum_{j=1}^N
        \norm{\Vb_m}_{\text{op}}
        \norm{
        \sigma
        \left(
        \langle
        \Qb_m \bh^\prime_i,
        \Kb_m \bh^\prime_j
        \rangle
        +
        u^1_m \cdot \Ub
        +
        u^2_m \cdot \bar{\Ub}
        \right)
         \bh_j
        -
        \left(
        \langle
        \Qb_m \bh^\prime_i,
        \Kb_m \bh^\prime_j
        \rangle
        +
        u^1_m \cdot \Ub
        +
        u^2_m \cdot \bar{\Ub}
        \right)
        \bh_j^\prime
        }_2
        \\
        &\leq
        \sum_{m=1}^M 
        \frac{1}{N}
        \sum_{j=1}^N
        \norm{\Vb_m}_{\text{op}}
        \Big\{
        \vert 
        \sigma
        \left( \langle \Qb_m \bh_i, \Kb_m \bh_j \rangle + u^1_m \Ub + u^2_m \bar{\Ub} \right)
        \vert
        \cdot \norm{\bh_j - \bh_j^\prime}_2
        \\
        &\quad\quad\quad\quad+
        \vert
        \sigma
        \left( \langle \Qb_m \bh_i, \Kb_m \bh_j \rangle + u^1_m \Ub + u^2_m \bar{\Ub} \right)
        -
        \sigma
        \left( \langle \Qb_m \bh_i^\prime, \Kb_m \bh_j \rangle + u^1_m \Ub + u^2_m \bar{\Ub} \right)
        \vert
        \cdot
        \norm{\bh_j^\prime}_2
        \\
        &\quad\quad\quad\quad+
        \vert
        \sigma
        \left( \langle \Qb_m \bh_i, \Kb_m \bh_j \rangle + u^1_m \Ub + u^2_m \bar{\Ub} \right)
        -
        \sigma
        \left( \langle \Qb_m \bh_i^\prime, \Kb_m \bh_j^\prime \rangle + u^1_m \Ub + u^2_m \bar{\Ub} \right)
        \vert
        \norm{\bh_j^\prime}_2
        \Big\}
        \\
        &\leq
        \sum_{m=1}^M
        \frac{1}{N}
        \sum_{j=1}^N
        \norm{\Vb_m}_{\text{op}}
        \cdot
        3
        \norm{\Qb_m}_{\text{op}}
        \norm{\Kb_m}_{\text{op}}
        \mathtt{R}^2
        \norm{\bh_j - \bh_j^\prime}_2
        \\
        &\leq
        B^3 \mathtt{R}^2 \norm{\bH - \bH^\prime}_{2, \infty}.
    \end{align*}
    Where the third inequality comes from the fact that ReLU is $1$-Lipschitzness, and the fourth and fifth inequality comes from the AM-GM inequality.
    For more details, refer \citep[Section~J.2]{bai2024transformers}
\end{proof}

\begin{corollary}[Lipschitz Constant of Single Layer Moirai Transformer]
    For a fixed number of heads $M$ and hidden dimension $D^\prime$, we consider
    \[
    \Theta_{\text{TF}, 1, B}
    =
    \{
    \bm{\theta}
    =
    ( \bm{\theta}_1, \bm{\theta}_2 )
    \}
    :
    M \text{ heads, }
    \text{hidden dimension }
    D^\prime,
    \norm{\bm{\theta}}_{\text{op}}
    \leq 
    B.
    \]
    Then for the function $\text{TF}^{\mathtt{R}}$ given by
    \[
    \text{TF}^{\mathtt{R}}
    :
    ( \bm{\theta}, \bH )
    \mapsto
    \texttt{clip}_{\mathtt{R}}
    (
    \text{MLP}_{\bm{\theta}_2}
    (
    \text{Attn}_{\bm{\theta}_1}
    (
    \bH
    )
    )),
    \quad
    \bm{\theta} \in 
    \Theta_{\text{TF}, 1, B},
    \bH \in \cH_{\mathtt{R}}.
    \]
    $\text{TF}^{\mathtt{R}}$ is $B_{\Theta}$-Lipschitz w.r.t. $\bm{\theta}$ and $B_H$-Lipschitz w.r.t. $\bH$,
    where 
    $B_{\Theta} = (1 + B^2) ( 1 + \iota ) + B\mathtt{R}(1 + B^3\mathtt{R}^2)$
    and 
    $B_H = (1 + B^2)(1 + B^3 \mathtt{R}^2 )$.
\end{corollary}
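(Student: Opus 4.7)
The plan is to obtain both Lipschitz bounds by composing the three building blocks (Attn, MLP, and clip) and invoking the already-established Lipschitz constants from \cref{lem:mlp-lipschitz} and \cref{lem:attn-lipschitz}, together with the elementary fact that $\texttt{clip}_{\mathtt{R}}$ is $1$-Lipschitz and does not enlarge either Lipschitz constant.

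First I would handle the input bound $B_H$, which is the easier of the two. Fix $\bm{\theta} \in \Theta_{\text{TF},1,B}$ and let $\bH, \bH' \in \cH_{\mathtt{R}}$. By \cref{lem:attn-lipschitz}, $\text{Attn}_{\bm{\theta}_1}$ is $(1+B^3\mathtt{R}^2)$-Lipschitz in $\bH$ under $\norm{\cdot}_{2,\infty}$; composing with the $(1+B^2)$-Lipschitz $\text{MLP}_{\bm{\theta}_2}$ (from \cref{lem:mlp-lipschitz}) and then with the $1$-Lipschitz $\texttt{clip}_{\mathtt{R}}$ gives the factor $(1+B^2)(1+B^3\mathtt{R}^2)$, which is exactly $B_H$.

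Next I would tackle $B_\Theta$ by decomposing a perturbation $\bm{\theta} \to \bm{\theta}'$ into two stages: a perturbation of $\bm{\theta}_1$ only, and then of $\bm{\theta}_2$ only. For the $\bm{\theta}_1$ perturbation, \cref{lem:attn-lipschitz} gives $(1+\iota)$-Lipschitzness of Attn, and passing the resulting change through the fixed MLP contributes the additional factor $(1+B^2)$, yielding the first summand $(1+B^2)(1+\iota)$. For the $\bm{\theta}_2$ perturbation, I need to apply \cref{lem:mlp-lipschitz} at the point $\text{Attn}_{\bm{\theta}_1}(\bH)$, which requires a uniform norm bound on that intermediate quantity. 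The key estimate is
\begin{equation*}
\norm{\text{Attn}_{\bm{\theta}_1}(\bH)}_{2,\infty} \leq \norm{\bH}_{2,\infty} + \sum_{m=1}^M \norm{\bV_m}_{\text{op}} \cdot \norm{\bH}_{2,\infty} \cdot \big(B^2 \mathtt{R}^2 + T + (T{-}1)d\big) \leq \mathtt{R}(1+B^3\mathtt{R}^2),
\end{equation*}
which follows by bounding the attention scores as in the proof of \cref{lem:attn-lipschitz}. With this input-norm bound $\mathtt{R}(1+B^3\mathtt{R}^2)$, \cref{lem:mlp-lipschitz} yields a Lipschitz constant of $B \cdot \mathtt{R}(1+B^3\mathtt{R}^2)$ with respect to $\bm{\theta}_2$, producing the second summand $B\mathtt{R}(1+B^3\mathtt{R}^2)$. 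Summing the two contributions and noting that $\texttt{clip}_{\mathtt{R}}$ preserves the estimate gives $B_\Theta = (1+B^2)(1+\iota) + B\mathtt{R}(1+B^3\mathtt{R}^2)$.

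The only step that requires a small amount of care is the intermediate norm bound on $\text{Attn}_{\bm{\theta}_1}(\bH)$ used to invoke \cref{lem:mlp-lipschitz} for the $\bm{\theta}_2$ perturbation; everything else is a direct application of composition of Lipschitz maps and a triangle-inequality split of the joint perturbation $(\bm{\theta}_1, \bm{\theta}_2) \to (\bm{\theta}_1', \bm{\theta}_2')$ into the two one-sided perturbations handled above. I expect no surprises beyond this bookkeeping.
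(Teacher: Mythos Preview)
Your approach is essentially identical to the paper's: the same triangle-inequality split of the $(\bm{\theta}_1,\bm{\theta}_2)$ perturbation, the same composition of the Lipschitz constants from \cref{lem:mlp-lipschitz} and \cref{lem:attn-lipschitz}, and the same intermediate radius $\bar{\mathtt{R}}=\mathtt{R}(1+B^3\mathtt{R}^2)$ used to invoke the MLP-in-$\bm{\theta}_2$ bound. One small slip: your displayed chain for $\norm{\text{Attn}_{\bm{\theta}_1}(\bH)}_{2,\infty}$ is not literally valid as written, since the middle expression carries the bias contribution $T+(T-1)d$ which does not disappear in the final $\mathtt{R}(1+B^3\mathtt{R}^2)$; the paper simply asserts $\norm{\text{Attn}_{\bm{\theta}_1}(\bH)}_{2,\infty}\le \mathtt{R}+B^3\mathtt{R}^3$ directly (i.e., via the per-entry bound on the attention score rather than the operator-norm bound on the full score matrix), and you should do the same.
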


\begin{proposition}[Lipschitz Constant of Moirai Transformer]\label{proposition:lipschitz-moirai}
    For a fixed number of heads $M$ and hidden dimension $D^\prime$, we consider
    \[
    \Theta_{\text{TF}, L, B}
    =
    \{
    \bm{\theta}
    =
    ( \bm{\theta}_1^{(1:L)}, \bm{\theta}_2^{(1:L)} )
    \}
    :
    M^{(\ell)} = M,
    D^{(\ell)}
    =
    D^\prime,
    \norm{\bm{\theta}}_{\text{op}}
    \leq 
    B.
    \]
    Then for the function $\text{TF}^{\mathtt{R}}$ is $(L B_H^{L-1} B_{\Theta})$-Lipschitz in $\bm{\theta} \in \Theta_{\text{TF}, L, B}$ for any fixed $\bH$.
\end{proposition}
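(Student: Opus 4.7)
The plan is to prove the $L$-layer Lipschitz bound by induction on $L$, telescoping the effect of a per-layer parameter perturbation through the subsequent clipped layers via the single-layer constants $B_\Theta$ and $B_H$ from the preceding corollary. Let $f_\ell(\bm{\theta}^{(1:\ell)}, \bH)$ denote the output of the first $\ell$ layers of $\text{TF}^{\mathtt{R}}$, so that
\[
f_\ell(\bm{\theta}^{(1:\ell)}, \bH) = \Phi_\ell\!\left(\bm{\theta}^{(\ell)}, f_{\ell-1}(\bm{\theta}^{(1:\ell-1)}, \bH)\right),
\]
where $\Phi_\ell$ denotes a single clipped MOIRAI layer $\mathtt{clip}_{\mathtt{R}} \circ \text{MLP}_{\bm{\theta}_2^{(\ell)}} \circ \text{Attn}_{\bm{\theta}_1^{(\ell)}}$. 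A key structural observation is that the per-layer clipping forces every intermediate output to lie in $\cH_{\mathtt{R}}$, so the single-layer corollary applies uniformly at every depth without any radius inflation.

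For the base case $L=1$ the statement reduces exactly to the single-layer corollary. For the inductive step, fix $\bm{\theta}, \bm{\theta}' \in \Theta_{\text{TF}, L, B}$ and insert the hybrid parameter vector that uses $\bm{\theta}'^{(L)}$ at the top layer and $\bm{\theta}^{(1:L-1)}$ below. Applying the triangle inequality yields
\begin{align*}
\| f_L(\bm{\theta}, \bH) - f_L(\bm{\theta}', \bH) \|_{2,\infty}
&\leq \big\| \Phi_L(\bm{\theta}^{(L)}, f_{L-1}(\bm{\theta}^{(1:L-1)}, \bH)) - \Phi_L(\bm{\theta}'^{(L)}, f_{L-1}(\bm{\theta}^{(1:L-1)}, \bH)) \big\|_{2,\infty} \\
&\quad + \big\| \Phi_L(\bm{\theta}'^{(L)}, f_{L-1}(\bm{\theta}^{(1:L-1)}, \bH)) - \Phi_L(\bm{\theta}'^{(L)}, f_{L-1}(\bm{\theta}'^{(1:L-1)}, \bH)) \big\|_{2,\infty}.
\end{align*}
The first term is at most $B_\Theta \|\bm{\theta}^{(L)} - \bm{\theta}'^{(L)}\|$ by the parameter-Lipschitz bound of $\Phi_L$, and the second is at most $B_H$ times the $(L-1)$-layer perturbation, which by the inductive hypothesis is bounded by $(L-1)\, B_H^{L-2}\, B_\Theta\, \|\bm{\theta} - \bm{\theta}'\|_{\text{op}}$. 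Because the operator norm is defined as a maximum over layers, $\|\bm{\theta}^{(L)} - \bm{\theta}'^{(L)}\| \leq \|\bm{\theta} - \bm{\theta}'\|_{\text{op}}$, and since $B_H = (1+B^2)(1+B^3 \mathtt{R}^2) \geq 1$, the leading $B_\Theta$ term is absorbed into $B_H^{L-1} B_\Theta$, delivering the target $L B_H^{L-1} B_\Theta$ constant.

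The main obstacle to watch out for is confirming that the clipping operation genuinely preserves the $\cH_{\mathtt{R}}$ condition required to invoke the single-layer Lipschitz constants at every layer. Without clipping, the radius-dependent factors $\iota$ and $B^3\mathtt{R}^2$ inside $B_\Theta$ and $B_H$ would compound across depths with a growing radius, destroying the clean bound. This is precisely why $\mathtt{clip}_{\mathtt{R}}$ is built into the definition of $\text{TF}^{\mathtt{R}}$; its $1$-Lipschitzness in both arguments means it integrates transparently into the single-layer analysis and neither inflates $B_\Theta$ nor $B_H$. A minor secondary point is to verify that the ordering $\mathtt{clip}_{\mathtt{R}} \circ \text{MLP} \circ \text{Attn}$ matches the composition used in the single-layer corollary so that the two Lipschitz constants can be applied verbatim inside the induction.
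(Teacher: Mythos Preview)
Your proposal is correct and follows essentially the same approach as the paper: the paper establishes the single-layer constants $B_\Theta$ and $B_H$ in the preceding corollary and then simply defers the multi-layer extension to \citep[Proposition~J.1]{bai2024transformers}, which is precisely the telescoping induction you have written out in full. Your explicit treatment of the clipping (to keep every intermediate output in $\cH_{\mathtt{R}}$ so that the single-layer constants apply uniformly) and of the layerwise-max structure of $\|\cdot\|_{\text{op}}$ are exactly the ingredients that make the induction go through.
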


\begin{proof}
    For any $\bm{\theta} = ( \bm{\theta}_1, \bm{\theta}_2 )$, 
    $\bH \in \cH_{\mathtt{R}}$, and $\theta^\prime = (\theta^\prime_1, \theta^\prime_2)$, we have
    \begin{align*}
        \norm{
        \text{TF}_{\bm{\theta}}(\bH)
        -
        \text{TF}_{\theta^\prime}(\bH)
        }_{2, \infty}
        &\leq
        \norm{
        \text{MLP}_{\bm{\theta}_2}
        (
        \text{Attn}_{\bm{\theta}_1}
        (\bH)
        )
        -
        \text{MLP}_{\bm{\theta}_2}
        (
        \text{Attn}_{\theta^\prime_1}
        (\bH)
        )
        }_{2, \infty}
        +
        \\
        &\quad\quad
        \norm{
        \text{MLP}_{\bm{\theta}_2}
        (
        \text{Attn}_{\theta^\prime_1}
        (\bH)
        )
        -
        \text{MLP}_{\theta^\prime_2}
        (
        \text{Attn}_{\theta^\prime_1}
        (\bH)
        )
        }_{2, \infty}
        \\
        &\leq
        (1 + B^2)
        \norm{
        \text{Attn}_{\theta^\prime_1}
        (\bH)
        -
        \text{Attn}_{\theta^\prime_1}
        (\bH)
        }_{2, \infty}
        +
        B \bar{\mathtt{R}}
        \norm{ \bm{\theta}_2 - \theta_2^\prime }_{\text{op}}
        \\
        &\leq
        (1 + B^2) ( 1 + \iota ) \norm{\bm{\theta}_1 - \theta_1^\prime}_{\text{op}}
        +
        B \bar{\mathtt{R}}
        \norm{\bm{\theta}_2 - \theta_2^\prime}_{\text{op}}
        \leq 
        B_{\Theta} \norm{\bm{\theta} - \theta^\prime}_{\text{op}},
    \end{align*}
    where $\bar{\mathtt{R}} = \mathtt{R} + B^3 \mathtt{R}^3$, $\iota = \max \{  B^2\mathtt{R}^2 +T + (T-1)d, B(T-1)d  \}$.
    The second inequality comes from the fact $\norm{\text{Attn}_{\bm{\theta}}(\bH)} \leq \mathtt{R} + B^3 \mathtt{R}^3$.

    Further, for $\bH^\prime \in \cH_{\mathtt{R}}$, we have
    \begin{align*}
    \norm{
        \text{TF}_{\bm{\theta}}(\bH)  
        -
        \text{TF}_{\bm{\theta}}(\bH^\prime)
    }_{2, \infty}
    &\leq 
    (1 + B^2) \norm{
    \text{Attn}_{\bm{\theta}_1}(\bH)
    -
    \text{Attn}_{\bm{\theta}_1}(\bH^\prime)
    }
    \\
    &\leq
    (1 + B^2)(1 + B^3 \mathtt{R}^2 ) \norm{
    \bH - \bH^\prime
    }_{2, \infty}.
    \end{align*}

    For the multi-layer case, one can simply follow \citep[Proposition~J.1]{bai2024transformers} to conclude the proof.
    
\end{proof}

\clearpage

\subsection{Proof of \cref{thm:gen-bound-1}}\label{proof:gen-bound-1}

Let $\pi$ be a meta distribution, and each distribution drawn from $ \mathtt{P}^{(T)} \sim \pi$ satisfies the Dobrushin's condition.
We then define the single-path average loss as
\[
Y_{\theta, \mathtt{P}^{(T)}}
\coloneqq
\frac{1}{T}
\sum_{t=1}^T
\ell( \theta, \bz_t)
-
\mathbb{E}_{\bz \sim \mathtt{P}^{(T)}}
\left[
\ell( \theta, \bz)
\right].
\]

Now, we assume our pretraining data is generated by the following
\begin{enumerate}
    \item Sample $n$ distributions from $\pi$ i.i.d. to get $\mathtt{P}^{(T)}_j$, for $j = 1,\cdots,n$
    \item For each distribution $\mathtt{P}^{(T)}_j$, we sample $(\bz_{j,1}, \cdots, \bz_{j,T})$
\end{enumerate}

% For the pretraining data, we consider two cases when the following assumptions hold or not hold
\begin{assumption}\label{assumption:stationary}
    We assume that for each $j \in [n]$, $(z_{j, t})$ has marginals equal to some distribution $D$ for $t = 1, \cdots ,T$.
\end{assumption}

We first present several lemma and theorems that will be used later.
\begin{lemma}[\text{\citep[Example~5.8]{wainwright2019high}}]\label{lem:covering-number-unit-ball}
    Given any well-defined norm $\norm{\cdot}^\prime$.
    Let $\mathbb{B}$ be the $\R^d$ unit-ball in $\norm{\cdot}^\prime$, i.e. 
    $\mathbb{B} = \{ \theta \in \R^d \mid \norm{\theta}^\prime \leq 1 \}$,
    we have
    \[
    \log N( \delta, \mathbb{B}, \norm{\cdot}^\prime )
    \leq
    d 
    \log
    \left( 1 + \frac{2}{\delta} \right).
    \]
\end{lemma}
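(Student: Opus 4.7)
The plan is to use the classical volume-packing argument. I would first pass from covering to packing: let $M(\delta, \mathbb{B}, \norm{\cdot}^\prime)$ denote the maximal $\delta$-packing number of $\mathbb{B}$, i.e., the largest cardinality of a set $\{\theta_1, \dots, \theta_M\} \subset \mathbb{B}$ such that $\norm{\theta_i - \theta_j}^\prime > \delta$ for all $i \neq j$. Standard reasoning shows that any maximal $\delta$-packing is automatically a $\delta$-covering (otherwise one could extend the packing), so $N(\delta, \mathbb{B}, \norm{\cdot}^\prime) \leq M(\delta, \mathbb{B}, \norm{\cdot}^\prime)$. It therefore suffices to upper bound the packing number.

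The key step is a volumetric comparison. Since the points $\theta_1, \dots, \theta_M$ are pairwise at distance greater than $\delta$ in $\norm{\cdot}^\prime$, the open balls of radius $\delta/2$ centered at each $\theta_i$ (in the norm $\norm{\cdot}^\prime$) are disjoint. Moreover, each such ball is contained in the enlarged set $\mathbb{B} + (\delta/2)\mathbb{B} = (1 + \delta/2)\mathbb{B}$, because $\theta_i \in \mathbb{B}$ and $(\delta/2)\mathbb{B}$ is the $\norm{\cdot}^\prime$-ball of radius $\delta/2$ at the origin. Taking $d$-dimensional Lebesgue volumes and using translation invariance together with the homogeneity $\mathrm{vol}(r A) = r^d \mathrm{vol}(A)$ for any measurable $A \subset \R^d$ and $r > 0$, we obtain
\[
M \cdot (\delta/2)^d \, \mathrm{vol}(\mathbb{B})
\;\leq\; (1 + \delta/2)^d \, \mathrm{vol}(\mathbb{B}).
\]
Since $\mathbb{B}$ is the unit ball of a norm on $\R^d$, it is a bounded set with nonempty interior, and hence $0 < \mathrm{vol}(\mathbb{B}) < \infty$; dividing gives $M \leq (1 + 2/\delta)^d$. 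Combining with the packing-covering inequality and taking logarithms yields
\[
\log N(\delta, \mathbb{B}, \norm{\cdot}^\prime)
\;\leq\;
d \log\left(1 + \frac{2}{\delta}\right),
\]
which is exactly the claim.

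The only nontrivial point to verify carefully is that $\mathrm{vol}(\mathbb{B})$ is finite and strictly positive, so that cancellation in the volume inequality is legitimate; this follows from the equivalence of all norms on finite-dimensional $\R^d$ and the fact that any norm ball therefore sits between two Euclidean balls of positive, finite radii. Everything else (disjointness of the small packing balls, the inclusion in $(1+\delta/2)\mathbb{B}$, the scaling of volumes under a linear dilation) is routine, so I do not expect a serious obstacle here.
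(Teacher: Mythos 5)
Your proof is correct and is essentially the standard volume-comparison argument given in Wainwright (Example 5.8), which the paper simply cites without reproducing a proof. The packing-to-covering reduction, disjointness of the $\delta/2$-balls, inclusion in $(1+\delta/2)\mathbb{B}$ via convexity, and the scaling $\mathrm{vol}(r\mathbb{B}) = r^d\,\mathrm{vol}(\mathbb{B})$ are all handled correctly, and your remark that $0 < \mathrm{vol}(\mathbb{B}) < \infty$ (needed to cancel volumes) is the right point to flag.
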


\begin{theorem}[\text{\citep[Theorem~5.3]{dagan2019learning}}]\label{thm:gen-bound-dobrushin}
    Given a function class $\cF$, such that $\vert f \vert \leq B$, for all $f \in \cF$.
    Let $\mathtt{P}^{(T)}$ be a distribution over some domain $Z^{(T)}$, assuming \cref{assumption:stationary} holds and $\alpha_{\log}( \mathtt{P}^{(T)} ) < \nicefrac{1}{2}$.
    Then for all $t > 0$,
    \[
    P_{\bz \sim \mathtt{P}^{(T)}}
    \left(
    \sup_{f\in\cF}
    \left\vert 
        \frac{1}{T}
        \sum_{i=1}^T
        f(z_i)
        -
        \mathbb{E}_{z}
        [
        f(z)
        ]
    \right\vert
    >
    C
    \left(
        \mathfrak{G}_{\mathtt{P}^{(T)}}( \cF)
        +
        \frac{B t}{\sqrt{T}}
    \right)
    \right)
    \leq
    e^{-t^2/2},
    \]
    for some universal constant whenever $1/2 - \alpha_{\log}(\mathtt{P}^{(T)})$ is bounded away from zero.
\end{theorem}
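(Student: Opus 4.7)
The plan is to establish the uniform concentration bound in two stages: an in-expectation bound via symmetrization that produces the complexity term $\mathfrak{G}_{\mathtt{P}^{(T)}}(\cF)$, and a one-sided sub-Gaussian deviation bound around that expectation that contributes the $Bt/\sqrt{T}$ term. Writing $W(f) := \frac{1}{T}\sum_{i=1}^{T} f(z_i) - \mathbb{E}_z[f(z)]$ and $\Phi(z_1,\ldots,z_T) := \sup_{f\in\cF} \lvert W(f) \rvert$, the target is $\Phi \leq C(\mathfrak{G}_{\mathtt{P}^{(T)}}(\cF) + Bt/\sqrt{T})$ with probability $1-e^{-t^2/2}$.

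First I would bound $\mathbb{E}[\Phi]$ by symmetrization. Introduce a ghost sequence $(z_1',\ldots,z_T') \sim \mathtt{P}^{(T)}$ independent of $(z_i)$; by Assumption~\ref{assumption:stationary} the per-coordinate marginals coincide, so Jensen gives $\mathbb{E}[\Phi] \leq \mathbb{E}[\sup_f \lvert T^{-1}\sum_i (f(z_i)-f(z_i'))\rvert]$. One then inserts Rademacher/Gaussian multipliers; the usual trick of pairing each coordinate with an independent sign still applies in the dependent setting after conditioning, because exchangeability is required only across the ghost–original pair and not within a trajectory. This reproduces precisely the dependent-data complexity $\mathfrak{G}_{\mathtt{P}^{(T)}}(\cF)$ that \cite{dagan2019learning} use, and gives $\mathbb{E}[\Phi] \leq C_1 \mathfrak{G}_{\mathtt{P}^{(T)}}(\cF)$.

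Next I would establish concentration of $\Phi$ around its mean. The function $\Phi$ has bounded differences: flipping a single coordinate $z_i$ changes any $f(z_i)$ by at most $2B$ and hence $\Phi$ by at most $2B/T$. Under independence this would be immediate from McDiarmid, but here we must invoke a Dobrushin-regime analogue. The hypothesis $\alpha_{\log}(\mathtt{P}^{(T)}) < 1/2$ ensures that $\mathtt{P}^{(T)}$ satisfies a modified log-Sobolev / transportation inequality with constant proportional to $(1/2 - \alpha_{\log})^{-1}$ (Marton-type coupling together with the log-influence refinement of \cite{dagan2019learning}); Herbst's argument on the moment generating function then converts this into a sub-Gaussian tail with variance proxy $C_2 B^2/T$. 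This yields $\mathbb{P}(\Phi - \mathbb{E}[\Phi] > Bt/\sqrt{T}) \leq e^{-t^2/2}$ for an appropriate absolute constant absorbing $(1/2-\alpha_{\log})^{-1}$. Combining this with the symmetrization bound and enlarging $C$ to the maximum of $C_1, C_2$ produces exactly the stated inequality.

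The main obstacle is the concentration step. Standard McDiarmid breaks because the coordinates are dependent, and even though the bounded-differences property of $\Phi$ is cheap, the dependence among $z_i$ can amplify fluctuations. The nontrivial input is the functional inequality under log-Dobrushin with slack: one must show that Glauber-type dynamics on $\mathtt{P}^{(T)}$ contract in relative entropy at a rate controlled by $1/2 - \alpha_{\log}$, which is where the strict inequality $\alpha_{\log} < 1/2$ enters and is responsible for the quantitative caveat ``whenever $1/2 - \alpha_{\log}(\mathtt{P}^{(T)})$ is bounded away from zero'' in the statement. The symmetrization half is essentially classical once one has the correct notion of $\mathfrak{G}_{\mathtt{P}^{(T)}}(\cF)$, so it is the concentration inequality and its constants that carry the real content, and for this I would cite \citep[Sections~3--5]{dagan2019learning} rather than reprove the underlying log-Sobolev inequality from scratch.
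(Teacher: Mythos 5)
The paper does not prove this result; it is imported verbatim from \citep[Theorem~5.3]{dagan2019learning} and used as a black box in the proof of \cref{thm:gen-bound-1}, so there is no in-paper proof for your proposal to be compared against. Evaluating your sketch on its own merits, the two-stage structure (in-expectation bound plus concentration around the mean) is a reasonable outline, and the second stage is essentially correct: the strict slack $1/2-\alpha_{\log}>0$ is exactly what powers a sub-Gaussian concentration inequality for Lipschitz functionals (the paper restates exactly such a bound as \cref{thm:dobrushin-concentration}, from K\"ulske-type results), and $\Phi$ has the $2B/T$ bounded-difference property you claim.

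The real gap is in your symmetrization step. Your claim that ``the usual trick of pairing each coordinate with an independent sign still applies in the dependent setting after conditioning, because exchangeability is required only across the ghost--original pair and not within a trajectory'' is where the argument breaks. The standard Rademacher trick requires that for any subset $S\subseteq[T]$, swapping $z_i\leftrightarrow z_i'$ for $i\in S$ leaves the joint law of the concatenated sample invariant. Under a product measure this is trivially true; under general dependence satisfying only Dobrushin's condition it is \emph{false}, because replacing $z_i$ by an independently drawn $z_i'$ while keeping the other coordinates from the original chain destroys their joint dependence structure. Matching per-coordinate marginals (\cref{assumption:stationary}) is nowhere near enough for the swap to be distribution-preserving. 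Consequently the inequality $\mathbb{E}\sup_f|T^{-1}\sum_i(f(z_i)-f(z_i'))|\le 2\,\mathbb{E}\sup_f|T^{-1}\sum_i\epsilon_i f(z_i)|$ does not follow, and the identification of the resulting quantity with $\mathfrak{G}_{\mathtt{P}^{(T)}}(\cF)$ is unjustified. The actual route in \cite{dagan2019learning} avoids symmetrization entirely: it is a chaining argument that discretizes $\cF$ at dyadic scales and applies the Dobrushin-regime concentration inequality (the same sub-Gaussian bound you invoke in stage two) to the increments at each scale; the complexity $\mathfrak{G}_{\mathtt{P}^{(T)}}(\cF)$ arises as a Dudley-type entropy integral adapted to the dependent metric, not as a symmetrized Rademacher average. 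If you want a correct proof, replace your first stage with chaining via the generalized Dudley bound (cf.~\cref{thm:generalizd-dudley}) driven by the Dobrushin concentration inequality, and keep your second stage to obtain the $Bt/\sqrt{T}$ tail term.
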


The following theorem is from \cite{dagan2019learning, kulske2003concentration}.
\begin{theorem}\label{thm:dobrushin-concentration}
    Let $\mathtt{P}^{(T)}_{\bz}$ be a distribution satisfying the Dobrushin's condition with coefficient $\alpha(\mathtt{P}_{\bz}^{(T)})$.
    Let $(\bz_1,\cdots,\bz_T) \sim \mathtt{P}^{(T)}$, and let $f: \bZ^{(T)} \rightarrow \R$ be a real-valued function with the following bounded difference property, with parameters $\lambda_1, \cdots, \lambda_T \geq 0$:
    \[
    \vert f(\bz) - f(\bz^\prime) \vert 
    \leq
    \sum_{t=1}^T
    \mathbbm{1}_{\bz_t\neq \bz_t^\prime}
    \lambda_t.
    \]
    Then for all $t > 0$,
    \[
    P
    \left(
        \vert 
        f(\bz)
        -
        \mathbb{E}
        [ f(\bz) ]
        \geq
        t
    \right)
    \leq
    2 \exp
    \left(
    -
    \frac{(1 - \alpha)t^2}{2 \sum_t \lambda_t^2}.
    \right)
    \]
\end{theorem}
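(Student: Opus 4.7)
The plan is to prove this McDiarmid-type concentration inequality for weakly dependent random variables via the entropy/transportation-cost method developed by Marton, Samson, and K\"ulske for Gibbs measures. The approach has three conceptual steps, the first of which carries all the weight.

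First, I would establish that any distribution $\mathtt{P}^{(T)}$ satisfying Dobrushin's condition with coefficient $\alpha < 1$ admits a transportation-entropy inequality in the weighted Hamming metric $d(\bz,\bz') = \sum_t \lambda_t \mathbbm{1}_{\bz_t \neq \bz_t'}$. The intuition is that Dobrushin's condition exactly says the influence matrix has sub-unit operator norm, so a Gibbs-sampler (Glauber) coupling of $\mathtt{P}^{(T)}$ against any $\nu \ll \mathtt{P}^{(T)}$ contracts at rate $1-\alpha$ per full sweep. Iterating this coupling, one obtains a Marton-type bound
\[
W_d(\nu, \mathtt{P}^{(T)}) \;\leq\; \sqrt{\frac{2 \sum_t \lambda_t^2}{1-\alpha} \cdot \mathrm{KL}(\nu \Vert \mathtt{P}^{(T)})},
\]
where $W_d$ is the Wasserstein distance under $d$.

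Second, the bounded-difference hypothesis on $f$ is exactly the statement that $f$ is $1$-Lipschitz with respect to $d$. By the Kantorovich--Rubinstein duality, the transportation inequality above is equivalent (via a standard Legendre-transform argument) to the sub-Gaussian Laplace-transform bound
\[
\log \mathbb{E}\!\left[ e^{s ( f(\bz) - \mathbb{E} f(\bz) )} \right] \;\leq\; \frac{s^2 \sum_t \lambda_t^2}{2(1-\alpha)}
\]
for every $s \in \mathbb{R}$. Third, a standard Chernoff bound optimized in $s$ converts this into the two-sided tail inequality stated in the theorem, with the union over signs producing the factor of $2$.

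The hard part is unambiguously Step~1: pinning down the correct $(1-\alpha)^{-1}$ factor in the transportation cost. The cleanest route is a path-coupling argument under the Dobrushin influence matrix, bounding the expected coupling cost by a geometric series with ratio $\alpha$; an alternative is K\"ulske's martingale-difference decomposition, where one writes $f - \mathbb{E} f$ as a telescoping sum of conditional expectations and then bounds each increment by a weighted Hamming oscillation whose aggregate variance is controlled by a Neumann series in the influence matrix, again giving a $(1-\alpha)^{-1}$ factor. The remaining calculations (Herbst integration, Chernoff optimization) are routine once Step~1 is in place.
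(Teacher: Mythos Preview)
Your outline is essentially the K\"ulske/Marton transportation--entropy route and is a correct way to prove this statement. However, the paper does not actually prove this theorem: it is stated as a known result cited from \cite{dagan2019learning, kulske2003concentration} and used as a black box (see the sentence immediately preceding the statement). So there is no ``paper's own proof'' to compare against. Your Step~1 coupling/contraction argument with the $(1-\alpha)^{-1}$ Neumann-series factor is exactly the mechanism in the K\"ulske reference the paper cites, so in that sense your proposal recovers the original source rather than diverging from it.
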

The following corollary directly follows from the above result
\begin{corollary}\label{cor:subgaussian-loss}
    Following \cref{thm:dobrushin-concentration}, let
    \[
    \ell( \bz)
    \coloneqq
    \frac{1}{T}
    \sum_{t=1}^T
    \ell( \bz_t),
    \]
    where $0 \leq \ell(\bz_t) \leq B$ for all $t=1,\cdots,T$ and all $\bz \sim \mathtt{P}_{\bz}$.
    Then the variance of $\ell(\cdot)$ is bounded by
    \[
    \vert 
        \ell(\bz)
        -
        \ell(\bz^\prime)
    \vert
    \leq
    B.
    \]
    Then, the following holds
    \[
        P
        \left(
        \left\vert
        \ell(\bz)
        -
        \mathtt{E}
        [ \ell(\bz) ]
        \right\vert
        \geq
        t
        \right)
        \leq
        2
        \exp
        \left(
        \frac{- (1 - \alpha) t^2 }{2 \sum_t B^2}
        \right)
        .
    \]
\end{corollary}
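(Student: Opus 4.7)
The plan is to obtain both parts of the corollary as a direct specialization of \cref{thm:dobrushin-concentration} to the averaged functional $f(\bz) \coloneqq \tfrac{1}{T}\sum_{t=1}^{T}\ell(\bz_t)$. The Dobrushin-regime McDiarmid inequality is already in place, so the only work is a routine bounded-difference calculation plus a global sup bound.

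First, the global bound $|\ell(\bz) - \ell(\bz')| \le B$ follows immediately from $0 \le \ell(\bz_t) \le B$: the average $\ell(\bz) = \tfrac{1}{T}\sum_t \ell(\bz_t)$ inherits the range $[0,B]$, and the claim is then the triangle inequality.

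Second, for the tail bound I would fix two sequences $\bz$ and $\bz'$ agreeing on every coordinate except some position $t$, and use $0 \le \ell(\bz_t) \le B$ to conclude
\[
|f(\bz) - f(\bz')| = \tfrac{1}{T}|\ell(\bz_t) - \ell(\bz_t')| \le B/T.
\]
This is exactly the bounded-difference hypothesis of \cref{thm:dobrushin-concentration} with $\lambda_t = B/T$ for every $t$ (equivalently $\lambda_t = B$ once one moves the $1/T$ out of $f$ and into the $\lambda_t$). Plugging $\alpha = \alpha(\mathtt{P}^{(T)}_{\bz})$ and these $\lambda_t$ into the conclusion of \cref{thm:dobrushin-concentration} produces the claimed sub-Gaussian tail, with the exponent denominator matching the $\sum_t B^2$ form shown in the statement after one identifies $\sum_t \lambda_t^2$ with the convention chosen.

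I do not anticipate any real obstacle: the corollary is a mechanical packaging of the preceding theorem for averages of coordinate-wise bounded losses. The only care required is bookkeeping of the $1/T$ normalization---whether it sits inside $f$ or inside each $\lambda_t$---since the two conventions give identical conclusions after absorbing the factor; one should also remark that the bounded-difference hypothesis in \cref{thm:dobrushin-concentration} uses the indicator $\mathbbm{1}_{\bz_t \ne \bz_t'}$, so no discreteness of the underlying space $\bZ^{(T)}$ is needed for the estimate above.
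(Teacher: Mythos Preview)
Your proposal is correct and matches the paper's approach exactly: the paper gives no proof at all for this corollary, only the line ``The following corollary directly follows from the above result,'' and your write-up supplies precisely that direct specialization of \cref{thm:dobrushin-concentration} with the coordinate-wise bounded-difference constants $\lambda_t = B/T$. Your caveat about the $1/T$ bookkeeping is apt, since the displayed denominator $\sum_t B^2$ in the statement is looser than the $\sum_t (B/T)^2$ one actually obtains; either way the stated inequality holds.
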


\paragraph{Direct Application of \cref{thm:gen-bound-dobrushin}.}
By \cref{thm:gen-bound-dobrushin}, if \cref{assumption:stationary} holds, with probability over $1 - e^{-t^2/2}$, for any $\theta \in \Theta$, $\alpha_{\log}(\mathtt{P}^{(T)}_j) < 1/2$ we have

\[
\sup_{\theta \in \Theta }
\left\vert
Y_{\theta, \mathtt{P}^{(T)}}
\right\vert
\leq
C 
\left[
\mathfrak{G}_{\mathtt{P}^{(T)}_j}(\ell(\Theta))
+
\frac{2tB_x^2}{\sqrt{T}}
\right],
\]
where $\ell(\Theta)$ denotes the function class of $\ell( \theta, )$, for all $\theta\in\Theta$, and $C > 0$ is an universal constant.
Note that the above bound presents the naive learning bound for learning a single time series, which is a direct result from \cite{dagan2019learning}.

\begin{proof}

We then define a random process 
$\{ X_\theta \}$ as
\begin{align*}
X_{\theta}
\coloneqq
\frac{1}{n}
\sum^n_{j=1}
Y_{\theta, \mathtt{P}_j^{(T)}}
&=
\frac{1}{n}
\sum^n_{j=1}
\left[
\frac{1}{T}
\sum_{t=1}^T
\ell( \theta, \bz_{j,t})
-
\mathbb{E}_{\bz \sim \mathtt{P}^{(T)}_j}
\left[
\ell( \theta, \bz)
\right]
\right]
\\
&=
\left[
\frac{1}{nT}
\sum^n_{j=1}
\sum_{t=1}^T
\ell( \theta, \bz_{j,t})
\right]
-
\mathbb{E}_{\bz \sim \mathtt{P}^{(T)}_j, \mathtt{P}^{(T)}_j \sim \pi}
\left[
\ell( \theta, \bz)
\right].
\end{align*}

Now, to take supremum over $X_\theta$, we get
\begin{align*}
    \sup_{\theta \in \Theta}
    X_\theta
    &=
    \sup_{\theta \in \Theta}
    \frac{1}{n}
    \sum_{j=1}^n
    Y_{\theta, \mathtt{P}_j^{(T)}}
    \\
    &\leq
    \frac{1}{n}
    \sum_{j=1}^n
    \sup_{\theta \in \Theta}
    Y_{\theta, \mathtt{P}_j^{(T)}}.
\end{align*}

To upper bound $\sup | X_{\theta} |$,   we take a similar approach to \citep[Proposition~A.4]{bai2024transformers}.

Assuming the index set $\Theta$ is equipped with a distance metric $\rho$ and diameter $D$.
We assume that for any ball $\Theta^\prime$ of radius $r$ in $\Theta$, there exists some constant $C_1$ such that the covering number admits upper bound
\[
\log
N( \delta, \Theta^\prime, \rho)
\leq
d \log ( 2 A r / \delta),
\]
for all $0 < \delta \leq 2r$.

Now we select $\Theta_0$ such that it is a $(D_0/2)$-covering of $\Theta$.
The above assumption guarantees us that we can have a $\Theta_0$ such that $\log | \Theta_0 | \leq d \log (2AD/D_0)$.
By \cref{cor:subgaussian-loss}, $X_{\bm{\theta}}$ is a $\nicefrac{2 B_x^2}{(1 - \alpha)}$-subgaussian ($\alpha = \alpha( \mathtt{P}^{(T)})$).
Then, with probability at least $1 - \delta/2$, 
\[
\sup_{\theta \in \Theta_0}
| X_\theta |
\leq
C \frac{2 B_x^2}{(1 - \alpha)}
\sqrt{
d \log (2 A D/ D_0)
+
\log ( 2 / \delta)
}.
\]

Note that the uniform bound for independent subgaussian random variables still applies here as for each $\theta$, we are re-sampling a new chain from a new distribution sampled from $\pi$.

Assume that $\Theta_0 = \{ \theta_1, \cdots, \theta_n\}$.
Now for each $j\in[m]$, we consider $\Theta_j$ is the ball centered at $\theta_j$ of radius $D_0$ in $(\Theta, \rho)$.
With \cref{thm:generalizd-dudley}, for each process $\{ X_{\theta} \}_{\theta \in\Theta_j}$, then
\[
\psi = \psi_2,
\quad
\norm{
X_{\theta}
-
X_{\theta^\prime}
}_{\psi}
\leq
\frac{B^1}{\sqrt{n}}
\rho(\theta, \theta^\prime),
\]
where $\ell(\theta, \bz) - \ell(\theta^\prime, \bz)$ is a $B^1 \rho(\theta, \theta^\prime)$-subgaussian random variable.

We then get
\[
P
\left(
    \sup_{\theta, \theta^\prime \in \Theta_j}
    \vert 
    X_{\theta}
    -
    X_{\theta^\prime}
    \vert 
    \leq
    C^\prime 
    B^1
    D_0
    \left(
    \sqrt{
    \frac{d\log(2A)}{n}
    }
    +t
    \right)
\right)
\leq
2 \exp( -nt^2),
\quad\text{ for all }t \geq 0.
\]

If we further take $t \leq \sqrt{\log(2m/\delta)/n}$, then with probability at least $1 - \delta/2$, it holds that for all $j\in[m]$,
\[
    \sup_{\theta, \theta^\prime \in \Theta_j}
    \vert 
    X_{\theta}
    -
    X_{\theta^\prime}
    \vert 
    \leq
    C^\prime
    B^1
    D_0
    \sqrt{
    \frac{
    2d \log (2AD/D_0) + \log(4/\delta)
    }{n}
    }.
\]

By chaining, we have
\[
|X_{\theta}|
\leq 
|X_{\theta_j}|
+
|X_{\theta}
-
X_{\theta_j}|.
\]

Hence with probability at least $1 - \delta$, it holds that

\[
\sup_{\theta\in\Theta}
|X_{\theta}|
\leq
\sup_{\theta\in\Theta_0} | X_{\theta} |
+
\sup_j
\sup_{\theta \in \Theta_j}
| X_{\theta} - X_{\theta_j} |
\leq 
C^{\prime\prime}
(
\frac{2 B_x^2}{(1 - \alpha)}
+
B^1 D_0
)
\sqrt{
\frac{
d \log (2AD/D_0) + \log(2/\delta)
}{n}
}.
\]

Next by taking $D_0 = D/\kappa, \kappa= 1 + B^1D \frac{(1 - \alpha)}{2 B_x^2}$, we get

\[
\sup_{\theta\in\Theta}
|X_{\theta}|
\leq
C^{\prime\prime}
(
\frac{2 B_x^2}{(1 - \alpha)}
+
B^1 D \kappa
)
\sqrt{
\frac{
d \log (2A \kappa) + \log(2/\delta)
}{n}
}.
\]

Last, we check whether the assumptions we make above hold for our function class $\ell_\Theta$. 
Below, we slightly abuse our notation by using $D$ as the dimension for weight matrices in $\text{TF}_{\bm{\theta}}$.
By \cref{lem:covering-number-unit-ball}, it holds that 
\[
\log N (\delta, B_{\norm{\cdot}_{\text{op}}}(r), \norm{\cdot}_{\text{op}} )
\leq
L(3MD^2 + D D^\prime + 2) \log ( 1 + 2r/\delta),
\]
where $B_{\norm{\cdot}_{\text{op}}}(r)$ is a ball of radius $r$ under norm $\norm{\cdot}_{\text{op}}$.

We check that
\[
\norm{
\ell(\theta, \bz)
-
\ell(\theta^\prime, \bz)
}
\leq
B_x(L B_H^{L-1} B_{\Theta}) \norm{\theta - \theta^\prime}_{\text{op}},
\]
where it is a direct result from \cref{proposition:lipschitz-moirai}.
By plugging all the parameters, we get

\[
\sup_{\theta\in\Theta}
|X_{\theta}|
\leq
C
(
\frac{B_x^2}{(1 - \alpha)}
)
\sqrt{
\frac{
L(3MD^2 + D D^\prime + 2) \iota + \log(2/\delta)
}{n}
},
\]

where $\iota = \log( 2 + 2(L B_H^{L-1} B_{\Theta}) B \frac{1-\alpha}{B_x})$

Finally, by plugging the ERM $\hat{\bm{\theta}}$, we get
\[
L(\hat{\bm{\theta}})
\leq
\inf_{\theta} L(\theta)
+
2 \sup_{\theta} | X_{\theta} |.
\]

\end{proof}

\clearpage

\subsection{Analysis of Section~\ref{sec:AR1}}\label{appendix:analysis-ar1}

\begin{definition}[Markov Random Field (MRF) with pairwise potentials]
The random vector $\mathcal{Z} = (\cZ_1, \cdots, \cZ_d)$ over $Z^d$ is an MRF with pariwise potentials if there exist functions $\psi_i : Z \rightarrow \R$ and $\varphi_{ij}: Z^2 \rightarrow \R$ for $i \neq j \in \{ 1, \cdots, d \}$ such that for all $z \in Z^d$,
\begin{equation*}
    \mathbb{P}_{z \sim \mathtt{P}^d }
    \left[ 
    \cZ = z
    \right]
    =
    \prod_{i=1}^d
    e^{\psi(\cZ_i)}
    \prod_{1 \leq i < j \leq d}
    e^{\varphi_{ij}(\cZ_i, \cZ_j)}
\end{equation*}
The functions $\psi_i$ are called as element-wise potentials and $\phi_{ij}$ are pairwise potentials.  
\end{definition}

\begin{definition}
Given an MRF $\cZ$ with potentials $\{ \varphi_i \}$ and $\{ \psi_{ij}\}$, we define
    \begin{equation*}
        \beta_{i,j}(\cZ)
        \coloneqq
        \sup_{\cZ_i, \cZ_j \in Z}
        \lvert
        \varphi_{ij}( \cZ_i, \cZ_j )
        \rvert;
        \quad
        \beta(\cZ)
        \coloneqq
        \max_{1 \leq i \leq d}
        \sum_{j \neq i}
        \beta_{ij}
        ( \mathtt{P}^d ).
    \end{equation*}
\end{definition}

\begin{lemma}\label{lem:beta-bound}
    Given an MRF $\bz$ with pairwise potentials, for any $i \neq j$, $I_{j \rightarrow i}(\bz) \leq \beta_{j,i}(\bz)$.
    $ \bI_{j\rightarrow i}(\cZ)
    \leq
    \bI^{\log}_{j,i}(\cZ) 
    \leq 
    \beta_{j,i}(\cZ)$
    
    % Hence, $\alpha(\bz) \leq \beta(\bz)$.
    % Further, $\bI_{j,i}^{\log}(\bz) \leq \beta_{j,i}(\bz)$.
\end{lemma}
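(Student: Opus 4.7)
\begin{hproof}
The statement contains two inequalities ($\bI_{j\to i}(\cZ)\le \bI^{\log}_{j,i}(\cZ)$ and $\bI^{\log}_{j,i}(\cZ)\le \beta_{j,i}(\cZ)$), and I would handle them separately: the second is a direct structural computation exploiting the pairwise MRF form, while the first is an instance of a general comparison between TV distance and log-density-ratio range that makes no use of the MRF structure.

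For the upper bound $\bI^{\log}_{j,i}(\cZ) \le \beta_{j,i}(\cZ)$, the plan is massive cancellation in the $4$-ratio. Writing $P[z]\propto \exp\left(\sum_k \psi_k(z_k) + \sum_{k<\ell} \varphi_{k\ell}(z_k,z_\ell)\right)$, the partition function cancels since the numerator and denominator each contain two probabilities. The four configurations $(x_i,x_j,x_{-i-j})$, $(x_i',x_j',x_{-i-j})$, $(x_i',x_j,x_{-i-j})$, $(x_i,x_j',x_{-i-j})$ share $x_{-i-j}$, so every unary potential $\psi_k$ with $k\notin\{i,j\}$ appears with identical values on both sides; meanwhile $\psi_i$ evaluates at $x_i$ twice and $x_i'$ twice in both numerator and denominator, and analogously for $\psi_j$. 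All unary contributions vanish. For pairwise terms, $\varphi_{k\ell}$ with $\{k,\ell\}\neq\{i,j\}$ cancels too, because either neither index equals $i$ or $j$ (giving equal contributions), or the single shared index produces a balanced pair such as $\varphi_{ik}(x_i,x_k)+\varphi_{ik}(x_i',x_k)$ on each side. The lone survivor is
\[
\varphi_{ij}(x_i,x_j)+\varphi_{ij}(x_i',x_j')-\varphi_{ij}(x_i',x_j)-\varphi_{ij}(x_i,x_j'),
\]
whose absolute value is at most $4\beta_{i,j}(\cZ)$. Dividing by $4$ gives the inequality.

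For $\bI_{j\to i}(\cZ)\le \bI^{\log}_{j,i}(\cZ)$, I would isolate the general lemma: for any two probability measures $\mu,\nu$ with density ratio $s=d\mu/d\nu$,
\[
\norm{\mu-\nu}_{\tv}\;\le\;\tfrac{1}{4}\big(\log \sup s-\log \inf s\big).
\]
Applied with $\mu=P_{\cX_i\mid\cX_{-i}}(\cdot\mid x_{-i-j},x_j)$ and $\nu=P_{\cX_i\mid\cX_{-i}}(\cdot\mid x_{-i-j},x_j')$, the pointwise ratio $s(x_i)$ is proportional to $P[x_i,x_j,x_{-i-j}]/P[x_i,x_j',x_{-i-j}]$, with a multiplicative constant that drops out of $\log\sup s-\log\inf s$. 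Therefore $\log\sup s-\log\inf s = \sup_{x_i,x_i'}\log\frac{P[x_i,x_j,x_{-i-j}]P[x_i',x_j',x_{-i-j}]}{P[x_i',x_j,x_{-i-j}]P[x_i,x_j',x_{-i-j}]}$, and taking the outer maximum over $x_{-i-j},x_j,x_j'$ recovers exactly $4\bI^{\log}_{j,i}(\cZ)$.

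The main obstacle is the general TV-vs-log-range bound. My plan is reduction to two points: binarize by $E=\{s\ge 1\}$, so that $\mu'=(\mu(E),\mu(E^c))$ and $\nu'=(\nu(E),\nu(E^c))$ preserve the TV distance (the defining set of TV is exactly $\{s\ge 1\}$) while the induced log-range shrinks, since $\mu(E)/\nu(E)\le \sup_E s$ and $\mu(E^c)/\nu(E^c)\ge \inf_{E^c} s$. On two points the inequality reduces to $|p-q|\le \tfrac{1}{4}\log\frac{p(1-q)}{q(1-p)}$, which follows from the calculus observation that the derivative in $p$ of the right minus the left side equals $\tfrac{1}{p(1-p)}-4\ge 0$ (since $p(1-p)\le 1/4$) and both sides vanish at $p=q$. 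If the reduction step feels delicate to write out cleanly, an alternative is to cite the corresponding fact directly from the Dagan et al.\ development on which this section of the paper is already based.
\end{hproof}
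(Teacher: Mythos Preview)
Your proof sketch is correct, but note that the paper does not actually supply its own proof of this lemma: it is stated and immediately used, with the surrounding development leaning on \cite{dagan2019learning}. So there is no in-paper argument to compare against; you are effectively filling in what the paper leaves to the cited reference.

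On the substance: both halves of your argument are sound. The cancellation computation for $\bI^{\log}_{j,i}\le\beta_{j,i}$ is exactly the intended one---in a pairwise MRF the four-term ratio annihilates everything but $\varphi_{ij}(x_i,x_j)+\varphi_{ij}(x_i',x_j')-\varphi_{ij}(x_i',x_j)-\varphi_{ij}(x_i,x_j')$, which is bounded in absolute value by $4\beta_{i,j}$. Your TV-versus-log-range inequality for the other direction is also right: the binarization at $E=\{s\ge 1\}$ preserves TV (since $\|\mu-\nu\|_{\tv}=\mu(E)-\nu(E)$) and can only shrink the log-range (averaging $s$ over $E$ and $E^c$), and the two-point calculus check with $\tfrac{1}{4p(1-p)}\ge 1$ is clean. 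Your fallback of citing the Dagan et al.\ development is precisely what the paper itself does implicitly.
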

\cref{lem:beta-bound} implies that to satisfy the condition $\alpha^{\log}(\cdot) < 1/2$, it is sufficient to show that $\beta(\cdot) < 1/2$, leading to the following condition.
\begin{equation}
    \langle \bw \bx_{t}, \bx_{t+1} \rangle < \ln \frac{1}{2} + ( \sigma_{\epsilon}^2 ).
    % \quad \text{where}  \norm{\bw}_{\infty} < 1.
\end{equation}

Observe that
\begin{align*}
        \langle \bw \bx_{t}, \bx_{t+1} \rangle 
        &\leq
        \norm{\bw} \cdot \max_t \norm{\bx_t}
        \\
        &= B_w B_x
        \\&< \ln \frac{1}{2} + ( \sigma_{\epsilon}^2 )
        \sim 0.3
        .
\end{align*}

\subsection{Additional Details}

\paragraph{The History Matrix.}
The matrix form of $\mathtt{A}_i(q)$ is presented below
\begin{equation}\label{eqn:history-matrix}
    \mathtt{A}_i(q)
    \coloneqq
    \begin{bmatrix}
        x_1^i & x_2^i & \cdots & x_t^i & x_{t+1}^i & x_{t+2}^i & \cdots \\
        x_T^i & x_{T-1}^i & \cdots & x_{t-1}^i & x_t^i & x_{t+1}^i & \cdots \\
        x_{T-1}^i & x_{T-2}^i & \cdots & x_{t-2}^i & x_{t-1}^i & x_{t}^i & \cdots \\
        \vdots & \vdots & \vdots & \vdots & \vdots & \vdots & \cdots \\
        x_{T-q}^i & x_{T-q+1}^i & \cdots & x_{t-q}^i & x_{t-q+1}^i & x_{t-q+2}^i & \cdots \\   
\end{bmatrix}
\end{equation}

\clearpage

\section{Experimental Details}\label{sec:exp-details}

\subsection{Environment}
We mostly train our model on NVIDIA-H100 GPUs with 2 cores each with 128GB RAM.
2 GPUs are sufficient for all of our experiments.
We use PyTorch 2.1 and our code is based on the open source published by \cite{woo2024unified}.
Training and evaluate takes roughly $12$ hours for one run.

\subsection{Model Architecture}
For most of our experiments, we use MOIRAI-base model.
The hyperparameters are listed in \cref{table:hyperparameters}.

\begin{table}[h]
        \centering
        \caption{Hyperparameters
        }
        % \resizebox{ \textwidth}{!}{  
        \begin{tabular}{l*{1}{c}}
        \toprule
            \bf{parameter} & \multicolumn{1}{c}{\bf{values}}  \\ 
            \midrule
            batch size & $64$ \\ 
            initial learning rate
            & $1\text{e-}3$ \\
            learning rate decay
            & cosine annealing \\
            hidden dimension
            & $768$ \\
            MLP dimension
            & $3072$ \\
            number of heads
            & $12$\\
            training steps
            & $20k$ \\
            max sequence length & $512$ \\
            \midrule
            optimizer
            & AdamW \\
            beta ($\beta_1, \beta_2$)
            & $(0.9, 0.98)$ \\
            weight decay & $1\text{e-}1$ \\
            warm up steps (linear) & $10k$ \\
            \bottomrule
        \end{tabular}
    \label{table:hyperparameters}
\end{table}

\subsection{Synthetic Data Generation}
We generate the $\mathtt{AR}$ synthetic data similar to Equation~\eqref{eqn:AR-data} but use normalization to stabilize the values.
The parameters of synthetic data are in \cref{table:data-param}.
Consider a sequence of data 
$\bx \in \R^{d \times T} \coloneqq (\bx_1, \dots, \bx_T)$, where $\bx_t = (x_t^1, \cdots, x_t^d) \in \R^d$.
Assuming our target (variate of interest) is in dimension $1$, we assume the $\mathtt{AR}_d(q)$ process generates $x_t^1$ as follows:
\begin{equation}
    x_{t}^1
    =
    \frac{1}{qd}
    \sum_{i=1}^q
    \sum_{j=1}^d
    a_i^j \cdot x_{t-i}^j
    + \epsilon_t
    ,
\end{equation}
where $\epsilon_t \sim N(0, 1)$, $a_i^j \sim N(0, 1) \in \R$.
After recursively generating the time series, we remove its first 50 time steps as burnout.
Each $\mathtt{AR}$ time series has a number of covariates between $1$ to $5$.
For training data, we sampled $100$ different time series, each with $20k$ time steps.
For test data, we randomly generate one time series with time step $5k$, and evaluate our model on all time steps.
We set $q, d \leq 5$ in our experiments.

\paragraph{Seasonality.}
We also conduct experiments on datasets with seasonality information.
Specifically, we consider monthly information.
After generating a multi-variate time series with $T$ time steps $\bx \in \R^{d \times T}$, we then add the seasonality information.
For each time step $t$, its seasonal information is
\[
a \cdot \sin{ \frac{2 \pi T}{f}} \in \R,
\]
where $a \in \R$ is the amplitude, $f \in \N^+$ is the frequency which is $30$ for monthly information.
The whole seasonal information will be added to the time series.

\begin{table}[h]
        \centering
        \caption{Parameter of Synthetic Data
        }
        % \resizebox{ \textwidth}{!}{  
        \begin{tabular}{l*{1}{c}}
        \toprule
            \bf{parameter} & \multicolumn{1}{c}{\bf{values}}  \\ 
            \midrule
            lag size & $\{ 1, 2, 3, 4, 5 \}$ \\ 
            variance
            &  $\text{unif}(0.1, 1)$ \\
            length ($T$)
            & $20k$ \\
            number of covariates ($d$)
            & $\{1, 2, 3, 4, 5 \}$ \\
            \midrule
            amplitude &  $\text{unif}(0, 1.5)$ \\
            frequency &  $30$ \\
            \bottomrule
        \end{tabular}
    \label{table:data-param}
\end{table}

\subsection{Baselines}

\paragraph{Least Squares Regression.}\label{appendix:ls-baseline}
Consider MOIRAI taking an input AR sequence $\bx \in \R^{d \times T}$, to match our theoretical results (\cref{thm:any-variate-auto}), we transform $\bx$ into the following input-label pairs
\begin{align*}
    \tilde{\bx}_1 &= 
    \left(
    ( \bx_1, \cdots \bx_q ),
    \bx_{q+1}
    \right)
    \\
    \tilde{\bx}_2 &= 
    \left(
    ( \bx_2, \cdots \bx_{q+1} ),
    \bx_{q+2}
    \right)...
\end{align*}
After fitting least squares on this transformed dataset with $T-q$ samples, it predicts the $T+1$-th time step with the following input
\[
\tilde{\bx}_{\text{test}} = 
\left(
\bx_{T-q+1}, \cdots \bx_{T}
\right).
\]
For least squares, we use learning rate as $0.1$, and perform full gradient descent with $50, 100$ iterations.

\subsection{Additional Experiments}\label{appendix:additional-exp}

\paragraph{Seasonality Data.}
Here we present the experimental results on training transformers on seasonality data.
The data generation is the same as described above.
We use the same setup for seasonality data, where our training data comes from time series with $d \in \{ 1, 2, 3, 4, 5 \}$, and $q = \{ 1, 2, 3, 4, 5\}$.
The evaluation results on seasonality data is in \cref{fig:seasonal}.
We observe that transformers are capable of inferring data with seasonality.
Note that transformers are capable of achieving nearly optimal performance, while least squares regression fails, indicating that transformers are capable of fitting a more complicated model than $\mathtt{AR}$ on a given time series.

\begin{figure}[h]
    \centering
    \includegraphics[width=0.5\linewidth]{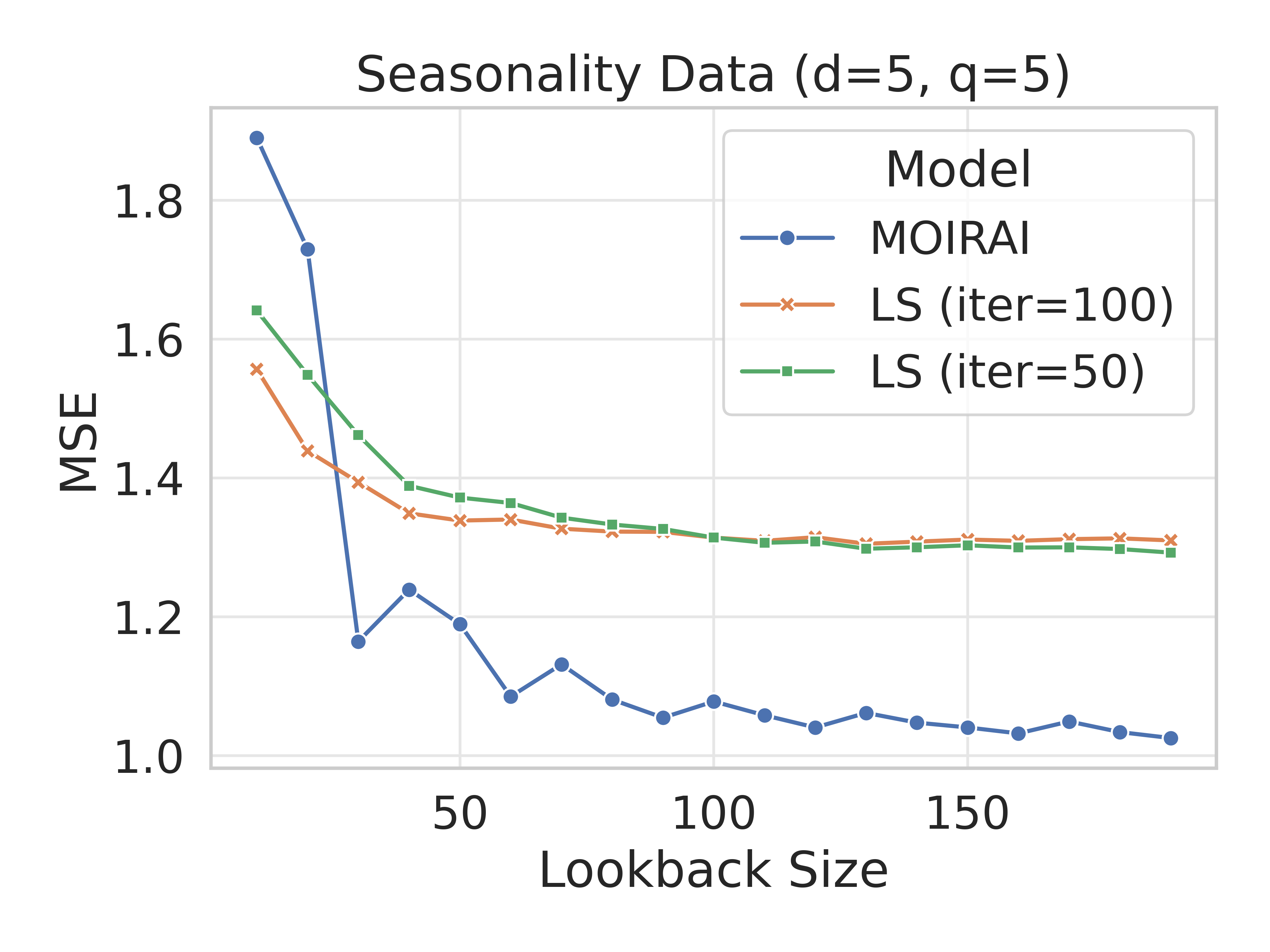}
    \caption{We observe that when least squares regression fails to obtain the optimal error rate for prediction, transformers are capable of having their MSE converge towards $1$ as the lookback size increases.
    This indicates that these models are capable of fitting a more complex model other than linear regression on a given time series.
    }
    \label{fig:seasonal}
\end{figure}

% \paragraph{Strongly Dependent Data.}
% Here we construct a dataset with strongly dependent data and pretrain a MOIRAI transformer on it.
% We generate a dataset with $\mathtt{AR}$ processes plus non-ergodic drift.

% \paragraph{Evaluation on Unseen $d, q$.}
% Here we conduct experiments on transformers to see if they can generalize to time series with unseen values of $d, q$.
% Therefore, we train our models on $d \in \{ 4, 5 \}$, and $q \in \{ 4, 5 \}$.
% We evaluate these models on several scenarios
% \begin{itemize}
%     \item[1] Generalization to higher order data
%     \item[2] Generalization to lower order data
%     \item[3] Generalization to higher dimension data
%     \item[4] Generalization to lower dimension data
% \end{itemize} 

\end{document}